\documentclass{article}

\usepackage[preprint]{neurips_2026}

\usepackage{color-edits}
\addauthor{hl}{blue}
\usepackage{soul}

\usepackage[utf8]{inputenc} 
\usepackage[T1]{fontenc}    
\usepackage{hyperref}       
\usepackage{url}            
\usepackage{booktabs}       
\usepackage{amsfonts}       
\usepackage{nicefrac}       
\usepackage{microtype}      
\usepackage{xcolor}         
\usepackage{color-edits}
\addauthor{hl}{blue}
\usepackage{soul}
\usepackage{multirow}
\usepackage[utf8]{inputenc} 
\usepackage[T1]{fontenc}    
\usepackage{hyperref}       
\usepackage{url}            
\usepackage{booktabs}       
\usepackage{amsfonts}       
\usepackage{nicefrac}       
\usepackage{microtype}      
\usepackage{xcolor}         
\usepackage{graphicx} 
\usepackage{amsmath}
\usepackage{mathtools}
\usepackage{caption}   
\usepackage{xcolor}
\usepackage{graphicx}
\usepackage{subcaption}
\usepackage{algorithm}
\usepackage{algpseudocode}
\usepackage{amsthm}
\newtheorem{theorem}{Theorem}
\newtheorem{assumption}{Assumption}
\usepackage{enumitem}
\usepackage{dsfont}
\usepackage{amsthm}
\newtheorem{definition}{Definition}
\newtheorem{proposition}{Proposition}
\newtheorem{remark}{Remark}

\newtheorem{lemma}{Lemma}
\title{Safe and Generalizable Hierarchical Multi-Agent RL via  Constraint Manifold Control}

\author{
  Zihao Guo\textsuperscript{1}\thanks{Corresponding to Zihao Guo $\langle$\texttt{zihao.1.guo@kcl.ac.uk}$\rangle$, Yali Du $\langle$\texttt{yali.du@kcl.ac.uk}$\rangle$.}\quad
  Jianing Zhao\textsuperscript{1}\quad
  Ling Li\textsuperscript{1}\quad
  Hao Liang\textsuperscript{1}\quad
  Giuseppe Loianno\textsuperscript{2}\quad
  Yali Du\textsuperscript{1,3}\footnotemark[1]\\
  \textsuperscript{1}King's College London\qquad
  \textsuperscript{2}University of California, Berkeley\qquad
  \textsuperscript{3}The Alan Turing Institute
}

\begin{document}

\maketitle

\begin{abstract}

Multi-agent systems are widely used in safety-critical applications that require coordinated behavior under strict safety constraints. 
Existing approaches face a fundamental trade-off: learning-based methods achieve strong empirical performance but lack theoretical safety guarantees, while control-theoretic methods enforce safety but often lead to overly conservative and inefficient behaviors.
We propose a hierarchical multi-agent reinforcement learning framework that enforces hard safety constraints under mild assumptions at low level via a constraint manifold, while enabling effective coordination through high-level policy learning. Our approach provides theoretical safety guarantees in the multi-agent setting and yields stationary learning dynamics, thereby enabling stable and efficient training. Empirically, our method achieves competitive performance while maintaining nearly perfect safety rates, and generalizes effectively to varying numbers of agents and obstacles.

\end{abstract}

\section{Introduction}

Multi-agent systems (MAS) have seen significant development in real-world applications such as warehouse robotics \citep{kattepur2018warehouses}, autonomous vehicles \citep{zhang2024autonomous_driving}, traffic routing \citep{wu2020multi_routing}, and drone swarm coordination \citep{batra2022drone}. A common characteristic of these scenarios is that each agent must not only accomplish its own task but also collaborate efficiently with other agents while maintaining safety such as collision avoidance. To address these challenges, various methods have been explored, ranging from Lagrangian-based Constrained Markov Decision Processes (CMDPs) \citep{gu2023safe, liu2021cmix, ding2023provably, lu2021decentralized, geng2023reinforcement, zhao2023multi} that penalize unsafe behaviors, to safety-filter methods such as Control Barrier Functions (CBFs) \citep{zhang2025gcbf+, zhang2025discrete}, traditional control-theoretic approaches like Model Predictive Control (MPC) \citep{NMPC}. 


Recent learning-based CBF approaches for multi-agent systems \citep{zhang2025gcbf+, zhang2025discrete} have demonstrated strong empirical performance by integrating learning with implicit planning capabilities. This enables more effective and scalable cooperative policies in safety-critical tasks such as multi-agent navigation and collision avoidance. Compared to traditional model-based CBF methods with fixed analytical controllers, these approaches further improve adaptability and performance. However, similar to CMDP-based approaches, these methods rely on learned policies and therefore generally lack formal safety guarantees.

In addition to learning-based methods, control-theoretic approaches \citep{NMPC} are proposed to guarantee safety in MAS under hard constraints. However, in complex MAS with multiple obstacles, they often lead to poor coordination among agents and suboptimal path selection, which can result in deadlocks.
To overcome this challenge, safe hierarchical MARL approaches \citep{ahmad2025hmarl} decompose control into a high-level policy that learns to select or switch between skills, and a low-level controller that executes each skill via a parameterized CBF-quadratic programming (QP) controller. In this framework, CBF parameters are learned, but the safety constraints retain a model-based analytical form. 

The need to solve a QP problem at every timestep introduces significant computational overhead, leading to reduced training efficiency and suboptimal performance.
In the single-agent setting, constraint manifold approach ~\citep{manifold} ensures safety via a safe action space, while avoiding QP solves and thus enabling efficient per-timestep computation. 
However, it is unclear whether this approach can be extended to multi-agent settings. 

Existing methods fail to simultaneously strong task performance, 
theoretical safety guarantees, and training efficiency. To bridge these gaps, we propose a novel framework \textbf{H}ierarchical \textbf{M}anifold \textbf{M}ulti-Agent PPO (\textbf{HMM}), which decouples cooperation and safety across two levels: a learnable policy for coordination at high-level and a model-based controller for enforcing hard constraints at low-level. The high-level policy operates under the Centralized Training with Decentralized Execution (CTDE) fashion to sequentially generate subgoals for each agent, handling multi-agent coordination and path planning.  At the low level, the controller guarantees safety by embedding hard constraints into a differentiable constraint manifold and restricting actions to its tangent space. Compared to CBF-based methods that require solving a QP at every timestep, our \textbf{HMM} only involves efficient tangent space projections, while still providing structural guarantees of constraint satisfaction at each environmental timestep under mild assumptions. Our main contributions are summarized as follows:

\begin{itemize}[leftmargin=*]

\item We propose \textbf{HMM}, a hierarchical MARL framework based on a constraint manifold formulation, enabling coordinated multi-agent control while providing formal safety guarantees at every timestep during both training and execution.

\item Our approach provides theoretical safety guarantees in the multi-agent setting, and induces a stationary learning process with stable convergence behavior.

\item  Empirically, \textbf{HMM} outperforms other safety MARL baselines on Lidar benchmarks, achieving state-of-the-art performance. It also demonstrates strong generalization across task scales: policies trained with only 3 agents and 3 obstacles generalize effectively to scenarios involving up to 21 agents or obstacles, while maintaining almost perfect safety rates and high task success rates.

\end{itemize}

\section{Related Work}

Our method lies at the intersection of safe RL and MARL, hierarchical MARL, and constraint manifold based safe control. We review these areas below and highlight that no existing approach provides hard safety guarantees, avoids per step QP solves, and scales to multi agent hierarchical settings. This gap motivates our work.

\textbf{Safe RL and MARL.}
In safe RL, CMDP serves as a classic framework \citep{zhao2023multi, altman2021constrainedbook, gu2023safe} for maximizing cumulative reward while satisfying safety constraints. 
Existing methods can be broadly categorized into primal approaches \citep{xu2021crpo, chow2018lyapunov, chow2019lyapunov, liu2020ipo}, which directly enforce constraints; primal-dual approaches \citep{he2023autocost_dual, borkar2005actor_dual, ding2020natural_dual, huang2023safedreamer_dual, tessler2018reward_dual} that introduce Lagrange multipliers; and trust-region-based approaches \citep{achiam2017constrained_trust_region, he2023autocost_dual}. Among these, Lagrange-multiplier-based approaches \citep{gu2023safe, liu2021cmix, ding2023provably, lu2021decentralized, geng2023reinforcement, zhao2023multi} have been particularly popular and have been extended to the multi-agent setting. However, these Lagrangian-based approaches, whether in single-agent or multi-agent settings, suffer from problems such as training instability, slow convergence, and sensitivity to hyperparameters \citep{zanon2020safe, he2023autocost_dual, so2023solving, ganai2023iterative}. Recently, CBF have been integrated into the RL training process as a safety filter \citep{tearle2021predictive, hsu2023safety, hailemichael2023optimal}. This approach has been explored in both single-agent \citep{cheng2019end, emam2022safe, hailemichael2023optimal} and multi-agent settings \citep{pereira2021safe, pereira2022decentralized, zhang2025discrete}. However, existing CBF-based methods either require solving a QP at each timestep, leading to high computational cost, or rely on learning a parameterized CBF, which fail to provide theoretical safety guarantees.

\textbf{Hierarchical MARL.}
Hierarchical reinforcement learning (HRL) \citep{pertsch2021accelerating} addresses complex, long-horizon tasks by decomposing decision-making into multiple levels of temporal abstraction. HRL has been studied extensively in both single-agent \citep{parr1997reinforcement, dietterich2000hierarchical, sutton1999between} and multi-agent settings \citep{xu2023haven, ghavamzadeh2006hierarchical, son2019qtran, dietterich2000hierarchical, ahilan2019feudal, tessler2017deep}. However, these hierarchical frameworks generally lack explicit safety guarantees. To address this, HMARL-CBF~\citep{ahmad2025hmarl} uses CBF-based low-level controllers for safety. However, it requires solving a QP at each timestep, leading to inefficiency and suboptimal performance compared to DGPPO \citep{zhang2025discrete} on the same benchmarks, as reported in \citet{ahmad2025hmarl}.

\textbf{Constraint Manifold-Based Safe Control.}
Constraint manifold-based methods \citep{liu2022robot, manifold} offer an alternative approach to safety enforcement without requiring QP solvers in RL. This approach has been extended to dynamic high-dimensional robotic tasks \citep{liu2023safe}, stochastic constraint manifolds \citep{gu2024roscom}, and long-term safety under unknown constraints \citep{gunster2024handling}. However, existing works are primarily focused on single-agent settings. In this work, we extend constraint manifolds to multi-agent settings within a hierarchical framework. This enables efficient and scalable safety enforcement while preserving formal guarantees in complex multi-agent environments.

\section{Preliminaries and Problem Formulation}

\subsection{Constrained Semi-Markov Decision Process}

We formulate the multi-agent safe control problem as a constrained semi-Markov decision process (CSMDP)~\citep{makar2001hierarchical}, which allows temporally extended actions. In our setting, we consider the special case where each action is executed for a fixed duration $\tau$.
The CSMDP is defined by the tuple $\langle \mathcal{N}, \mathcal{S}, \mathcal{Z}, \mathcal{O}, \mathcal{P}, r, \mathcal{H}, \gamma \rangle$, where $\mathcal{N} = \{1, \ldots, N\}$ denotes the set of agents.
The global state space is denoted by $\mathcal{S}$. The global state at timestep $t$ is $\mathbf{s}^t = \{\mathbf{s}_i^t\}_{i \in \mathcal{N}}$, where $\mathbf{s}_i^t \in \mathbb{R}^S$ denotes the state of agent $i$. The global state is not directly observable by individual agents; instead, each agent $i$ receives a local observation $o_i^t \in \mathcal{O}_i$.
The joint action and observation spaces are 
$\mathcal{Z} = \mathcal{Z}_1 \times \cdots \times \mathcal{Z}_N$ and 
$\mathcal{O} = \mathcal{O}_1 \times \cdots \times \mathcal{O}_N$, respectively.
Each agent $i \in \mathcal{N}$ has $L$ associated constraint functions defined on local observations, with the joint set defined as
$\mathcal{H} := \{h_i^{(\ell)} : \mathcal{O}_i \rightarrow \mathbb{R} \mid i \in \mathcal{N},\, \ell = 1, \ldots, L\}$.
We require hard constraint satisfaction at every timestep $t$:
\begin{equation}
    h_i^{(\ell)}(o_i^t) \leq 0, \quad \forall i \in \mathcal{N},\ 
    \ell = 1, \ldots, L,\ \forall t \geq 0.
\end{equation}

The multistep transition kernel $\mathcal{P}: \mathcal{S} \times \mathcal{Z} \rightarrow \Delta(\mathcal{S})$, where $\mathcal{P}(\mathbf{s}' \mid \mathbf{s}, \mathbf{z})$ denotes the probability of transitioning to $\mathbf{s}'$ after $\tau$ timesteps under joint action $\mathbf{z}$.
The reward function $r: \mathcal{S} \times \mathcal{Z} \rightarrow \mathbb{R}$ denotes the cumulative reward collected over the $\tau$ timesteps induced by the joint action.

We distinguish between timesteps and decision epochs: the system evolves at the finer timestep index $t$, while decisions are made at discrete decision epochs indexed by $k$. Each decision epoch $k$ corresponds to the execution of a joint action $\mathbf{z}^k$, which is held fixed for $\tau$ consecutive timesteps, i.e., from $t = k\tau$ to $t = (k+1)\tau - 1$. We use 
$\mathbf{s}^k := \mathbf{s}^{k\tau}$ to denote the state at the start of epoch $k$.
The objective is to find decentralized policies $\pi_i: \mathcal{O}_i \rightarrow \mathcal{Z}_i$ defined at decision epochs that maximize the expected discounted return while satisfying the safety constraints:
\begin{equation}
    \max_{\pi_1, \ldots, \pi_N} \mathbb{E}_{\boldsymbol{\pi}}\!\left[
        \sum_{k=0}^{\infty} \gamma^{k\tau}\, r(\mathbf{s}^{k}, \mathbf{z}^{k})
    \right]
    \quad \text{s.t.} \quad
    h_i^{(\ell)}(o_i^t) \leq 0,\ \forall i \in \mathcal{N},\ \ell = 1,\ldots,L,\ \forall t \geq 0.
\end{equation}

The value function under joint policy $\boldsymbol{\pi}$ in the CSMDP is defined as
\begin{equation}
    V^{\boldsymbol{\pi}}(\mathbf{s}) = \mathbb{E}_{\boldsymbol{\pi}}\!\left[
        \sum_{k=0}^{\infty} \gamma^{k\tau}\, r(\mathbf{s}^{k}, \mathbf{z}^{k}) 
        \,\Big|\, \mathbf{s}^0 = \mathbf{s}
    \right].
\end{equation}
This yields an effective discount factor of $\gamma^\tau$ at the decision level.


\subsection{Single Agent Safe Control on the Constraint Manifold}
\label{sec:single_agent_manifold}

In this work, we consider a control-affine system that is locally Lipschitz continuous and defined as follow:

\begin{equation}
    \dot{\mathbf{s}} = f(\mathbf{s}) + G(\mathbf{s})\mathbf{u}_s
    \label{eq:ori_sys_dynamics},
\end{equation}

where $\mathbf{s} \in \mathcal{S} \subset \mathbb{R}^S$ denotes the system state, $\mathbf{u} \in \mathcal{U} \subset \mathbb{R}^U$ denotes the control input, and $f: \mathcal{S} \to \mathbb{R}^S$ and $G: \mathcal{S} \to \mathbb{R}^{S \times U}$ are Lipschitz continuous mappings. We review the key concepts of manifold constraint method~\citep{liu2022robot, manifold} for single agent, which forms the basis of our low-level safe controller. Additional details on manifolds are provided in Appendix~\ref{app:constraint_manifold}.



\textbf{Constraint manifold.} We consider that the safety of the single agent setting requirements is specified by a set of $L$ inequality constraints $h(\mathbf{s}) \leq \mathbf{0}$ with $h: \mathcal{S} \rightarrow \mathbb{R}^L$.
Given the inequality constraint definition, we define the safe set as the 
sublevel set 
$\mathcal{C} := \{\mathbf{s} \in \mathcal{S} \subset \mathbb{R}^S : h(\mathbf{s}) \leq \mathbf{0}\}$. Slack variables $\boldsymbol{\mu} \in [0, +\infty)^L$ are introduced to rewrite the inequality constraints as equality constraints:
\begin{equation}
    c(\mathbf{s}, \boldsymbol{\mu}) := h(\mathbf{s}) + \boldsymbol{\mu} = \mathbf{0}.
\end{equation}

The Jacobian of $c(\mathbf{s}, \boldsymbol{\mu})$ is given by 
$J_c(\mathbf{s}, \boldsymbol{\mu}) = [J_h(\mathbf{s}) \ \ \mathbb{I}_L]$, 
where $\mathbb{I}_L$ denotes the $L$-dimensional identity matrix. The equality constraint formulation defines the constraint manifold $\mathcal{M} := \{(\mathbf{s}, \boldsymbol{\mu}) \in \mathcal{D} : c(\mathbf{s}, \boldsymbol{\mu}) = \mathbf{0}\}$, in the augmented state space $\mathcal{D} := \mathcal{S} \times [0, +\infty)^L \subset \mathbb{R}^{S+L}$. 
The safe set $\mathcal{C}$ is the projection of $\mathcal{M}$ onto the original state space: for any $(\mathbf{s}, \boldsymbol{\mu}) \in \mathcal{M}$, $h(\mathbf{s}) = -\boldsymbol{\mu} \leq \mathbf{0}$, so 
$\mathbf{s} \in \mathcal{C}$. More preliminaries of manifold are provided in Appendix~\ref{app:preliminaries}


\textbf{Augmented dynamics.} To make the slack variables controllable, a controlled system is introduced for each slack variable $\mu_\ell$:
\begin{equation}
    \dot{\mu}_\ell = \alpha_\ell(\mu_\ell) \, u_{\mu, \ell}, \quad \ell = 1, \ldots, L,
    \label{eq:slack_variables}
\end{equation}
where $\alpha_l$ is a class-$\mathcal{K}$\footnote{A continuous function $\alpha: [0, \infty) \rightarrow [0, \infty)$ is of class $\mathcal{K}$ if it is strictly increasing and $\alpha(0) = 0$.} function and $u_{\mu, \ell}$ is the virtual control input for the $\ell$-th slack variable. Combining the original system dynamics~\eqref{eq:ori_sys_dynamics} with the slack variable dynamics~\eqref{eq:slack_variables}, we obtain the augmented system:
\begin{equation}
    \begin{bmatrix} \dot{\mathbf{s}} \\ \dot{\boldsymbol{\mu}} \end{bmatrix} = \begin{bmatrix} f(\mathbf{s}) \\ \mathbf{0} \end{bmatrix} + \begin{bmatrix} G(\mathbf{s}) & \mathbf{0} \\ \mathbf{0} & A(\boldsymbol{\mu}) \end{bmatrix} \begin{bmatrix} \mathbf{u}_s \\ \mathbf{u}_\mu \end{bmatrix},
    \label{eq:augmented_dynamics}
\end{equation}
where $A(\boldsymbol{\mu}) \in \mathbb{R}^{L \times L}$ is a diagonal matrix with entries $A_{\ell \ell} = \alpha_\ell(\mu_\ell)$, and $\mathbf{u}_\mu = [u_{\mu,1}, \ldots, u_{\mu,L}]^\top$ collects all virtual control inputs.


\textbf{Safe controller.} Then we can have a controller that keeps the augmented state $(\mathbf{s}, \boldsymbol{\mu})$ on the constraint manifold $\mathcal{M}$. Since any point on $\mathcal{M}$ projects to a safe state in the original state space, staying on $\mathcal{M}$ guarantees that the system remains within the safe set $\mathcal{C}$.

To keep the augmented state on $\mathcal{M}$, its velocity must lie in the tangent space, which is equivalent to enforcing
\begin{equation}
    \dot{c}(\mathbf{s}, \boldsymbol{\mu}) = J_c(\mathbf{s}, \boldsymbol{\mu}) \begin{bmatrix} \dot{\mathbf{s}} \\ \dot{\boldsymbol{\mu}} \end{bmatrix} = \mathbf{0},
    \label{eq:linear_equation}
\end{equation}
where $J_c(\mathbf{s}, \boldsymbol{\mu}) = [J_h(\mathbf{s})\ \ \mathbb{I}_L]$ is the Jacobian of $c$. Substituting the augmented dynamics~\eqref{eq:augmented_dynamics} into~\eqref{eq:linear_equation} yields the linear equation
\begin{equation}
    \boldsymbol{\psi}(\mathbf{s}) + J_u(\mathbf{s}, \boldsymbol{\mu}) \begin{bmatrix} \mathbf{u}_s \\ \mathbf{u}_\mu \end{bmatrix} = \mathbf{0},
    \label{eq:linear_system}
\end{equation}
where $J_u(\mathbf{s}, \boldsymbol{\mu}) = [J_h(\mathbf{s})G(\mathbf{s})\ \ A(\boldsymbol{\mu})]$ is the control Jacobian and $\boldsymbol{\psi}(\mathbf{s}) = J_h(\mathbf{s})f(\mathbf{s})$ is the \emph{constraint drift} induced by the system drift $f(\mathbf{s})$.

Solving~\eqref{eq:linear_system} gives the general form of the constraint manifold safe controller:
\begin{equation}
    \begin{bmatrix} \mathbf{u}_s \\ \mathbf{u}_\mu \end{bmatrix} = \underbrace{-J_u^\dagger \boldsymbol{\psi}}_{\substack{\text{drift} \\ \text{compensation}}} \underbrace{- \lambda J_u^\dagger c}_{\substack{\text{manifold} \\ \text{contraction}}} + \underbrace{B_u \mathbf{u}}_{\substack{\text{tangential} \\ \text{action}}}
    \label{eq:atacom},
\end{equation}
where $J_u^\dagger$ is the pseudoinverse of $J_u$, $B_u$ is the tangent space basis satisfying $J_u B_u = \mathbf{0}$, $\lambda > 0$ is a contraction gain, and $\mathbf{u}$ is an arbitrary task-level control signal. The three terms play distinct roles: the \emph{drift compensation} term cancels the natural drift of the system, the \emph{manifold contraction} term retracts the augmented state back to $\mathcal{M}$ whenever numerical errors cause deviation, and the \emph{tangential action} term allows free exploration along the tangent space without violating constraints. Any task-level control signal $\mathbf{u}$ mapped through~\eqref{eq:atacom} produces a safe control input $\mathbf{u}_s$, ensuring that the system state remains within the safe set at every timestep.

\section{Methodology}

\subsection{Overview}

We propose a hierarchical framework for safe MARL, illustrated in Figure~\ref{fig:subgoal_idea}. The framework consists of a high-level cooperative planner with a low-level safe controller built on the constraint manifold. The high-level policy generates subgoals for each agent every $\tau$ timesteps to enable coordination. At the low level, each agent independently applies a constraint manifold safe controller that projects low level actions onto the tangent space of its constraint manifold, structurally guaranteeing safety at every timestep. As a result, our framework enables efficient multi-agent coordination while guaranteeing safety by construction at every timestep.

\begin{figure}[htbp]  
    \centering
    \vspace{-1.0em}
    \includegraphics[width=1.0\textwidth]{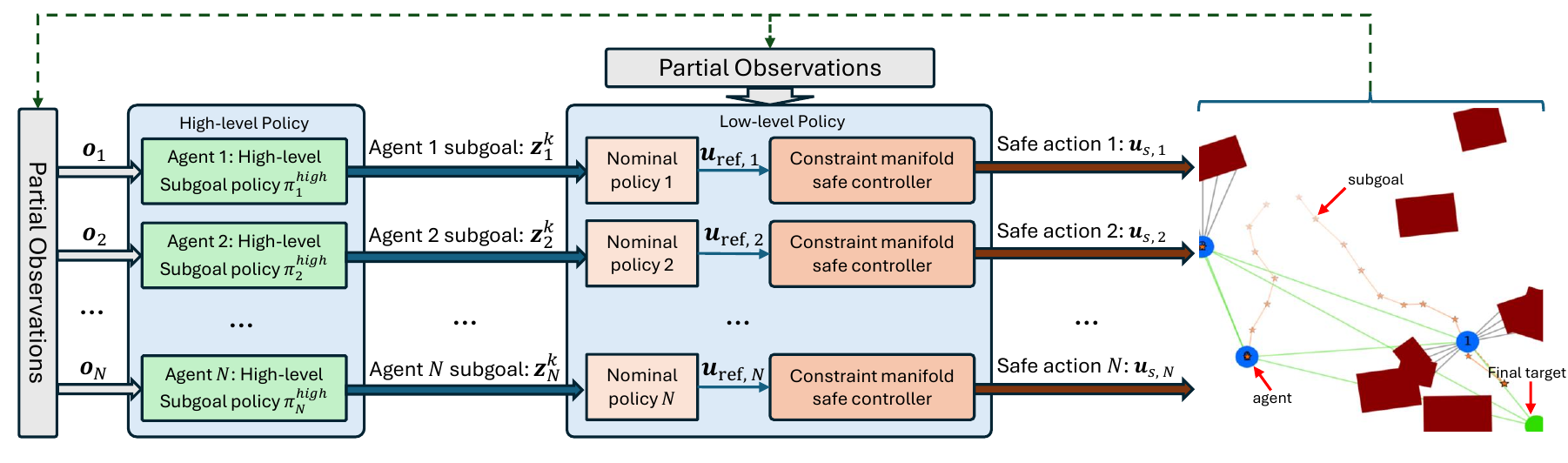} 
    \vspace{-2.0em}
    \caption{Overview of the hierarchical safe MARL framework during 
decentralized execution. Each agent's high-level policy generates a 
subgoal $\mathbf{z}_i^k$, which is converted into a nominal action 
$\mathbf{u}_{\text{ref},i}$ and then projected onto the constraint manifold to produce a safe action $\mathbf{u}_{s,i}$.}

    \label{fig:subgoal_idea} 
    \vspace{-1.0em}
\end{figure}

\subsection{High-Level Subgoal Policy Learning}

The high-level planner is responsible for generating temporally extended subgoals that coordinate cooperative behavior across agents. At each decision epoch $k$, each agent $i$ uses its high-level policy $\pi_i^{high}$, conditioned on its observation $o_i^k$, to generate a subgoal $\mathbf{z}_i^k \in \mathbb{R}^d$. The subgoal represents a target position relative to agent $i$'s current state, where $d$ is the spatial dimension of the environment. This subgoal serves as a short-horizon waypoint that the low-level safe controller drives the agent toward over the subsequent $\tau$ environment timesteps.

We adopt a MARL framework under the centralized training with decentralized execution (CTDE) paradigm for high-level coordination. 
The high-level policy is based on MAPPO~\citep{yu2022surprisingmappo} and incorporates graph-based information aggregation inspired by InforMARL~\citep{nayak2023scalable}. 
Each agent observes its own state, as well as the relative states of nearby agents and obstacles within its sensing radius, forming a local interaction graph. We parameterize the high-level policy $\pi_i^{high}$ using a graph neural network (GNN) ~\citep{velivckovic2017graph, nayak2023scalable, zhang2025gcbf+, zhang2025discrete} with attention-based message passing to aggregate information from neighboring agents and obstacles.
This design ensures permutation invariance and allows the policy to generalize across varying agents and obstacles sizes.
Following the CSMDP formulation with fixed epoch duration $\tau$, the high-level objective is
\begin{equation}
    J(\boldsymbol{\pi}^{high}) = \mathbb{E}_{\boldsymbol{\pi}^{high}}\!\left[
        \sum_{k=0}^{\infty} \gamma^{k\tau}\, r(\mathbf{s}^k, \mathbf{z}^k)
    \right].
\end{equation}

The semi-MDP Bellman residual with effective per-epoch discount $\gamma^\tau$ is 
$\delta^k = r(\mathbf{s}^k, \mathbf{z}^k) + \gamma^{\tau} V_\phi(\mathbf{s}^{k+1}) - V_\phi(\mathbf{s}^k)$, 
and advantages are estimated via Generalized Advantage Estimation: $\hat{A}^k = \sum_{j=0}^{\infty} (\gamma^{\tau} \lambda_{\mathrm{GAE}})^j \, \delta^{k+j}$, where $V_\phi$ is the centralized value function and $\lambda_{\mathrm{GAE}} \in (0,1)$ is the GAE trace-decay parameter.


\subsection{Low-Level Safe Controller}
\label{sec:low_level}

The low-level controller executes the subgoals generated by the high-level policy while strictly satisfying safety constraints at every timestep. Building on the constraint manifold method reviewed in Section~\ref{sec:single_agent_manifold}, the safe controller is applied independently to each agent, exploiting the natural decoupling of control inputs across agents. Given a subgoal $\mathbf{z}_i^k$ from the high-level policy, the low-level controller first computes a nominal reference action $\mathbf{u}_{\text{ref},i} = \pi_{\text{ref}}(\mathbf{s}_i, \mathbf{z}_i^k)$ that drives agent $i$ toward the subgoal. This nominal action does not account for safety constraints; the constraint manifold controller then projects $\mathbf{u}_{\text{ref},i}$ onto the tangent space of the local constraint manifold to produce a safe control input $\mathbf{u}_{s,i}$ for each agent $i$.

\paragraph{Per-agent constraint formulation.}

For each agent $i$, pairwise collision-avoidance constraints are defined with the $L$ nearest entities (agents and obstacles). Each such neighbor $j$ instantiates a per-agent constraint $h_i^{(\ell)}$ from the CSMDP formulation (with $\ell \equiv j$):
\begin{equation}
    h_{ij} = (r_{ij}^{\text{safe}} + \delta_{ij})^2 
    - \|\mathbf{p}_i - \mathbf{p}_j\|^2 \leq 0,
\end{equation}
where $\mathbf{p}_i, \mathbf{p}_j \in \mathbb{R}^d$ are positions and $r_{ij}^{\text{safe}}$ is the safety radius. The velocity-dependent margin $\delta_{ij} = d_0 + v_{\text{approach}}^2 / (2a_{ij})$ accounts for braking distance, where $v_{\text{approach}}$ is the closing speed along the collision axis and $a_{ij}$ is the effective deceleration capacity. For agent-agent pairs, symmetry doubles the collision radius and braking capacity ($r_{ij} = 2r$, $a_{ij} = 2a_{\max}$); agent-obstacle pairs use single-agent values.

\paragraph{Safe control via tangent space projection.} Each agent $i$ independently projects its nominal action $\mathbf{u}_{\text{ref},i}$ onto the tangent space of its local constraint manifold $\mathcal{M}_i$, following the constraint manifold safe controller in~\eqref{eq:atacom}:
\begin{equation}
    \mathbf{u}_{s,i} = \underbrace{-J_{u,i}^\dagger \boldsymbol{\psi}_i}_{\substack{\text{drift} \\ \text{compensation}}}\ \underbrace{- \lambda J_{u,i}^\dagger c_i}_{\substack{\text{manifold} \\ \text{contraction}}}\ + \underbrace{B_{u,i}\, \mathbf{u}_{\text{ref},i}}_{\substack{\text{tangential} \\ \text{tracking}}},
\end{equation}
where $J_{u,i}$, $\boldsymbol{\psi}_i$, $c_i$, and $B_{u,i}$ are computed locally from agent $i$'s state and the states of its neighbors. The resulting safe control $\mathbf{u}_{s,i}$ is guaranteed to keep agent $i$'s augmented state on $\mathcal{M}_i$, ensuring constraint satisfaction at every timestep.


\section{Theoretical Guarantees}

\subsection{Safety Guarantee}

\begin{theorem}[Compositional Safety]
\label{thm:compositional_safety}
Consider $N$ agents with decoupled control-affine dynamics 
$\dot{\mathbf{s}}_i = f_i(\mathbf{s}_i) + G_i(\mathbf{s}_i)\mathbf{u}_i$, 
each applying the constraint manifold controller~\eqref{eq:atacom} 
on its local constraint set $\mathbf{h}_i(\mathbf{s}_i, \mathbf{s}_{\mathcal{N}_i}) \leq \mathbf{0}$, 
where $\mathcal{N}_i = \mathcal{E}_i \cup \mathcal{A}_i$ denotes neighboring obstacles and agents. 
Under the standard assumptions (Assumptions~\ref{asmp:app_compact}--\ref{asmp:app_rank} in Appendix~\ref{app:safety_guarantees}), 
if all agents start on their local constraint manifolds, the joint trajectory remains in the global safe set for all time.

\end{theorem}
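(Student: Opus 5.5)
The plan is to reduce the multi-agent statement to a per-agent invariance argument for the augmented state, and then take the intersection over agents. For each agent $i$, stack its own state with the neighbor states and the slack variables into $(\mathbf{s}_i,\mathbf{s}_{\mathcal{N}_i},\boldsymbol{\mu}_i)$. Define $c_i=\mathbf{h}_i(\mathbf{s}_i,\mathbf{s}_{\mathcal{N}_i})+\boldsymbol{\mu}_i$ and $\mathcal{M}_i=\{c_i=\mathbf{0},\ \boldsymbol{\mu}_i\ge \mathbf{0}\}$. The global safe set is $\bigcap_i\{\mathbf{h}_i\le \mathbf{0}\}$, and by the projection argument of Section~\ref{sec:single_agent_manifold} it contains every configuration in which each agent's augmented state lies on its $\mathcal{M}_i$. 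It therefore suffices to show that each $\mathcal{M}_i$ is forward invariant along the joint closed-loop trajectory.

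\textbf{Step 1: existence of solutions.} Using the local Lipschitz continuity of $f_i,G_i$, the compactness assumption (Assumption~\ref{asmp:app_compact}), and the rank assumption (Assumption~\ref{asmp:app_rank}), $J_{u,i}$ has full row rank $L$ on a neighborhood of the manifold. Hence $J_{u,i}^\dagger=J_{u,i}^\top(J_{u,i}J_{u,i}^\top)^{-1}$ and a smooth choice of $B_{u,i}$ are locally Lipschitz. The closed-loop joint vector field is then locally Lipschitz, so solutions exist and are unique. Compactness rules out finite escape, which gives existence for all $t\ge0$.

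\textbf{Step 2: per-agent invariance.} Differentiate $c_i$ along the joint trajectory:
\[
\dot c_i = J_{\mathbf{s}_i}\mathbf{h}_i\,\dot{\mathbf{s}}_i + \sum_{j\in\mathcal{N}_i} J_{\mathbf{s}_j}\mathbf{h}_i\,\dot{\mathbf{s}}_j + A_i(\boldsymbol{\mu}_i)\mathbf{u}_{\mu,i}.
\]
The neighbor terms are the delicate point, and I expect them to be the main obstacle. Neighbors are controlled by their own controllers, not by agent $i$, so their velocities must be absorbed into the constraint drift. I would define the drift as
\[
\boldsymbol{\psi}_i = J_{\mathbf{s}_i}\mathbf{h}_i f_i + \sum_{j} J_{\mathbf{s}_j}\mathbf{h}_i\,\dot{\mathbf{s}}_j ,
\]
where the $\dot{\mathbf{s}}_j$ are the observed neighbor velocities in the local observation. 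For static obstacles these terms are simply zero. With this definition, substituting the controller~\eqref{eq:atacom} and using $J_{u,i}J_{u,i}^\dagger=\mathbb{I}_L$ and $J_{u,i}B_{u,i}=\mathbf{0}$ gives
\[
\dot c_i=-\lambda c_i .
\]
Therefore $c_i(t)=e^{-\lambda t}c_i(0)=\mathbf{0}$ whenever the agent starts on its manifold, for any reference action $\mathbf{u}_{\mathrm{ref},i}$. If the paper's controller uses only the agent's own drift, I would instead use the symmetric pairwise structure. For agent--agent pairs, $h_{ij}$ depends only on $\mathbf{p}_i-\mathbf{p}_j$, with doubled radius and braking capacity. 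I would split the constraint so that each agent enforces its half of $\dot h_{ij}\le 0$, treating the other agent's contribution as bounded by the margin $\delta_{ij}$. Invariance of the shared constraint then follows by adding the two one-sided inequalities.

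\textbf{Step 3: slack nonnegativity and conclusion.} Since $\dot\mu_{i,\ell}=\alpha_\ell(\mu_{i,\ell})u_{\mu,i,\ell}$ with $\alpha_\ell(0)=0$, and the input is bounded on the compact set, the boundary $\mu_{i,\ell}=0$ is an equilibrium of the scalar ODE. By uniqueness, $\mu_{i,\ell}$ cannot cross zero, so $\boldsymbol{\mu}_i\ge\mathbf{0}$ for all time. Combining this with $c_i\equiv\mathbf{0}$ gives $\mathbf{h}_i=-\boldsymbol{\mu}_i\le\mathbf{0}$ for every $i$ and all $t$. Intersecting over agents shows the joint trajectory stays in the global safe set. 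Finally, I would note that nearest-neighbor set switching is handled by the assumption that constraints entering $\mathcal{N}_i$ are already strictly satisfied, so the slack can be initialized consistently.
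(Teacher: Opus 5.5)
Your main line matches the paper's proof: absorb obstacle and neighbor motion into the drift $\boldsymbol{\psi}_i$, cancel it using $J_{u,i}J_{u,i}^\dagger=\mathbb{I}$ (from Assumption~\ref{asmp:app_rank}) and $J_{u,i}B_{u,i}=\mathbf{0}$, get $\mathbf{c}_i\equiv\mathbf{0}$ from on-manifold initialization, and intersect the local safe sets. The paper differs only in phrasing this through $V_i=\tfrac12\|\mathbf{c}_i\|^2$ with $\dot V_i\le 0$ instead of your $\dot{\mathbf{c}}_i=-\lambda\mathbf{c}_i$, and in citing $h_{ij}=h_{ji}$ at the end; your existence and slack-positivity steps are refinements it leaves implicit, and your own-drift fallback is unnecessary because the paper's $\boldsymbol{\psi}_i$ explicitly includes the neighbor terms.
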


\begin{proof}[Proof sketch (full proof in Appendix~\ref{app:multi_agent_safety_proof})]
Each agent $i$ defines a constraint residual
$\mathbf{c}_i = \mathbf{h}_i(\mathbf{s}_i, \mathbf{s}_{\mathcal{N}_i}) + \boldsymbol{\mu}_i$ and a Lyapunov
function $V_i = \tfrac{1}{2}\|\mathbf{c}_i\|^2$.
Expanding $\dot{\mathbf{c}}_i$ via~\eqref{eq:linear_equation} and~\eqref{eq:linear_system} gives
\begin{equation}
    \dot{\mathbf{c}}_i
    = \frac{\partial \mathbf{h}_i}{\partial \mathbf{s}_i}\dot{\mathbf{s}}_i
    + \underbrace{
        \sum_{e \in \mathcal{E}_i}
        \frac{\partial \mathbf{h}_i}{\partial \mathbf{s}_e}\dot{\mathbf{s}}_e
      }_{\text{obstacle motion}}
    + \underbrace{
        \sum_{j \in \mathcal{A}_i}
        \frac{\partial \mathbf{h}_i}{\partial \mathbf{s}_j}\dot{\mathbf{s}}_j
      }_{\text{other agents}}
    + \dot{\boldsymbol{\mu}}_i
    = \boldsymbol{\psi}_i + J_{u,i}
    \begin{bmatrix} \mathbf{u}_{s,i} \\ \mathbf{u}_{\mu,i} \end{bmatrix},
    \label{eq:cdot_multi}
\end{equation}
where the drift $\boldsymbol{\psi}_i$ absorbs all terms not controlled by agent~$i$: its autonomous dynamics, obstacles, and the states of other agents.
In the single-agent setting, only the obstacle term remains and the multi-agent extension adds the inter-agent term, but crucially both enter only the drift $\boldsymbol{\psi}_i$.

Substituting~\eqref{eq:cdot_multi} into $\dot{V}_i = \mathbf{c}_i^\top \dot{\mathbf{c}}_i$ 
and expanding (see~\eqref{eq:app_vdot_expand_multi}), 
the drift term $\mathbf{c}_i^\top(\boldsymbol{\psi}_i - J_{u,i}J_{u,i}^\dagger \boldsymbol{\psi}_i)$ vanishes 
since Assumption~\ref{asmp:app_rank} implies $J_{u,i}J_{u,i}^\dagger = I$. 
The null-space term $\mathbf{c}_i^\top J_{u,i} B_{u,i}\mathbf{u}_{\mathrm{ref},i} = 0$ by construction. 
The remaining contraction term yields
\begin{equation}
    \dot{V}_i = -\lambda\,\mathbf{c}_i^\top J_{u,i}J_{u,i}^\dagger\,\mathbf{c}_i \leq 0.
\end{equation}
Since $V_i(0) = 0$ and $\dot{V}_i \leq 0$, we have $V_i(t) = 0$ for
all $t$, so every agent remains on its local constraint manifold and
satisfies $\mathbf{h}_i(t) \leq \mathbf{0}$
(Remark~\ref{rmk:app_projection}).
Because every pairwise inter-agent constraint $h_{ij}$ is enforced by
at least one agent's local guarantee, intersecting all local safe sets
recovers global safety.
\end{proof}


\subsection{Reduction to Stationary MARL}
\label{sec:reduction}
 
A well-known challenge in hierarchical RL is that the high-level transition kernel $P_{high}$ depends on the evolving low-level policy, causing non-stationarity \citep{sutton1999between, nachum2018data}. Consequently, the induced high-level process violates the stationarity assumption of Markov decision processes, resulting in non-stationary transition dynamics, biased value estimation, and lower training stability. In our framework, this problem does not arise due to the following property:
 
\begin{proposition}[Stationarity by Construction]
\label{prop:stationarity}
Given that the low-level policy $\pi_{low}$ contains no learnable parameters, the induced high-level transition kernel $P_{high}(s' \mid s, z)$ and reward function $r(s, z)$ are time-invariant and uniquely determined by the environment dynamics.
\end{proposition}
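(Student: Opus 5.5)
The plan is to write the high-level transition kernel and reward as explicit compositions of fixed maps and check that none of them depends on the training iteration. Fix a decision epoch $k$ with state $\mathbf{s}^k = \mathbf{s}$ and joint subgoal $\mathbf{z}^k = \mathbf{z}$. Over the next $\tau$ timesteps each agent's action is determined as follows. First the nominal reference $\mathbf{u}_{\text{ref},i} = \pi_{\text{ref}}(\mathbf{s}_i,\mathbf{z}_i)$ is computed. It is then passed through the manifold controller~\eqref{eq:atacom}, whose ingredients $J_{u,i}$, $\boldsymbol{\psi}_i$, $c_i$, $B_{u,i}$, $\lambda$ and $\alpha_\ell$ are analytic functions of the current augmented state and its neighbors. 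Write the resulting map as $\mathbf{u}_{s,i} = \kappa_i(\mathbf{s}, \boldsymbol{\mu}_i; \mathbf{z}_i)$. The key observation is that $\kappa_i$ is a fixed function: it has no parameter $\theta$ that is updated by gradient steps. The only learnable objects in the framework are $\pi^{high}_i$ and $V_\phi$, and these enter only through the choice of $\mathbf{z}$.

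Step two is to build the $\tau$-step kernel by composition. Let $T(\cdot \mid \mathbf{s}, \mathbf{u})$ be the one-step environment transition, obtained by integrating or discretizing the control-affine dynamics, possibly with stochasticity. It is time-invariant because the environment is. Define the one-step closed-loop kernel
\[
K_{\mathbf{z}}(\mathbf{s}' \mid \mathbf{s}) = T\bigl(\mathbf{s}' \mid \mathbf{s}, \kappa(\mathbf{s};\mathbf{z})\bigr),
\]
carrying the slack variables $\boldsymbol{\mu}$ along as part of the state. Then $P_{high}(\cdot\mid \mathbf{s},\mathbf{z}) = K_{\mathbf{z}}^{\tau}(\cdot \mid \mathbf{s})$, the $\tau$-fold Chapman--Kolmogorov composition. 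Similarly,
\[
r(\mathbf{s},\mathbf{z}) = \mathbb{E}_{K_{\mathbf{z}}}\Bigl[\textstyle\sum_{t=0}^{\tau-1}\gamma^t r_{\text{env}}(\mathbf{s}^t,\mathbf{u}^t)\Bigr].
\]
This expression uses whatever per-step weighting the CSMDP adopts. Since $T$, $r_{\text{env}}$ and $\kappa$ are all fixed, both objects are fixed functions of $(\mathbf{s},\mathbf{z})$. So they are independent of the training iteration and of the epoch index $k$, and they are uniquely determined by the environment together with the fixed controller. It follows that the high-level problem is a stationary SMDP with discount $\gamma^\tau$, and MAPPO/GAE operates on a fixed MDP.

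The main obstacle is well-definedness of $\kappa$ and of the state that $\kappa$ reads. First, the pseudoinverse must be single-valued. Assumption~\ref{asmp:app_rank}, full row rank of $J_{u,i}$, guarantees that $J_{u,i}^\dagger$ is a continuous, unique function wherever the system operates. Second, the basis $B_{u,i}$ must be a deterministic function of the state; I would fix a canonical choice, such as the orthogonal projector $I - J_{u,i}^\dagger J_{u,i}$. Third, the slack $\boldsymbol{\mu}_i$ and the neighbor set $\mathcal{N}_i$ (the $L$ nearest entities) must be functions of the state, with deterministic tie-breaking. To handle the slack, I would initialize $\boldsymbol{\mu}_i = -\mathbf{h}_i$ at every epoch, or include it in the high-level state. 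Finally, one must note that the kernel may depend on the low-level internal state, but not on any history of high-level policies. If the state is augmented with the slack variables, the Markov property holds, and Theorem~\ref{thm:compositional_safety} keeps trajectories on the manifold, where these maps are defined.
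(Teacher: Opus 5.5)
Your proposal is correct and follows the same argument as the paper. The paper's proof simply observes that $P_{high}$ is built only from the environment dynamics, the fixed nominal controller and the fixed manifold projection, none of which change during training; your explicit $\tau$-fold composition $K_{\mathbf{z}}^{\tau}$, the parallel treatment of the reward, and the well-definedness remarks (a canonical null-space basis, and the slack either carried in the state or recovered as $\boldsymbol{\mu}=-\mathbf{h}$ on the manifold) are a more careful write-up of that same argument rather than a different route.
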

\begin{proof}
The high-level dynamics $P_{high}$ depend only on the environment dynamics $P$, the nominal policy $\pi_{\mathrm{nom}}$, and the constraint manifold projection $\Pi_{\mathcal{M}}$. As all these components are stationary and independent of the high-level policy's learning process, the transition kernel remains constant throughout training.
\end{proof}
Therefore, our framework effectively reduces the hierarchical safe MARL problem to solving an unconstrained and stationary high-level MDP. This reduction provides significant advantages for training stability. By offloading safety to a fixed lower level, we eliminate the non-stationarity and distributional shift typical of hierarchical RL. Consequently, the high-level policy interacts with a consistent environment, allowing standard MARL algorithms like MAPPO \citep{yu2022surprisingmappo}. These algorithms inherit their original convergence guarantees, ensuring stable optimization and stationary throughout the entire learning process.

\section{Experiments}

In this section, we provide a comprehensive evaluation and analysis of our \textbf{HMM} across four key dimensions: overall performance, safety throughout the training process, computational efficiency, and generalization capability. By comparing our approach with several baselines, we demonstrate its superior effectiveness in achieving better performance, strictly maintaining safety constraints even during the early stages of learning, and exhibiting stronger generalization capabilities compared to learning-based baselines.

\subsection{Settings}
\textbf{Environments.} We evaluate our algorithm across a diverse set of tasks. Specifically, we conduct experiments in LiDAR-based environments \citep{keyumarsi2023lidar,zhang2025discrete,zhang2025gcbf+}, including both 2D environments (LidarSpread, LidarTarget, LidarLine, LidarBicycle) and 3D environments (LinearDrone, CrazyFlie) (See Figures~\ref{fig:showcase}). To comprehensively assess performance against baselines, we adopt two metrics: safety and success rate. The safety rate measures the ratio of agents that remain safe throughout the entire trajectory. 
The success rate is defined as the ratio of agents that both remain safe and reach their goals.


\textbf{Baselines.} We compare against DGPPO~\citep{zhang2025discrete}, InforMARL~\citep{nayak2023scalable}, and MAPPO-Lagrangian~\citep{gu2021multi, gu2023safe}. 
In InforMARL, constraint violations are incorporated via a penalty term $\beta \cdot \max\{0, \max_m h^{(m)}\}$ with varying penalty weights (denoted \texttt{Penalty($\beta$)} in figures for $\beta \in \{0.02, 0.1, 0.5\}$), and a scheduled variant (\texttt{Schedule}) where $\beta$ is gradually increased during training is also evaluated. 
For MAPPO-Lagrangian, a GNN backbone is used and different Lagrange multiplier learning rates are considered (\texttt{Lagr(1)}, \texttt{Lagr(5)} for multiplier rates $1$ and $5$, and \texttt{Lagr(lr)} for increasing the learning rate to ${0.1}$). Our method is denoted \textbf{HMM} in all figures.

\begin{figure*}[h]
\centering
\begin{subfigure}[b]{0.31\textwidth}
    \includegraphics[width=\textwidth]{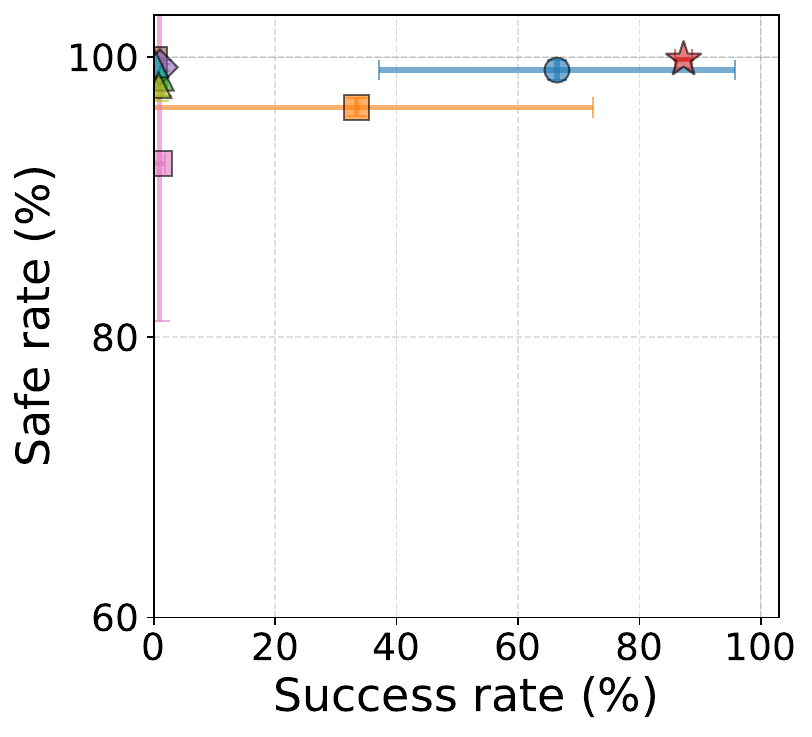}
    \vspace{-1.5em}
    \caption{LidarSpread}
    \label{fig:spread}
\end{subfigure}
\hfill
\begin{subfigure}[b]{0.31\textwidth}
    \includegraphics[width=\textwidth]{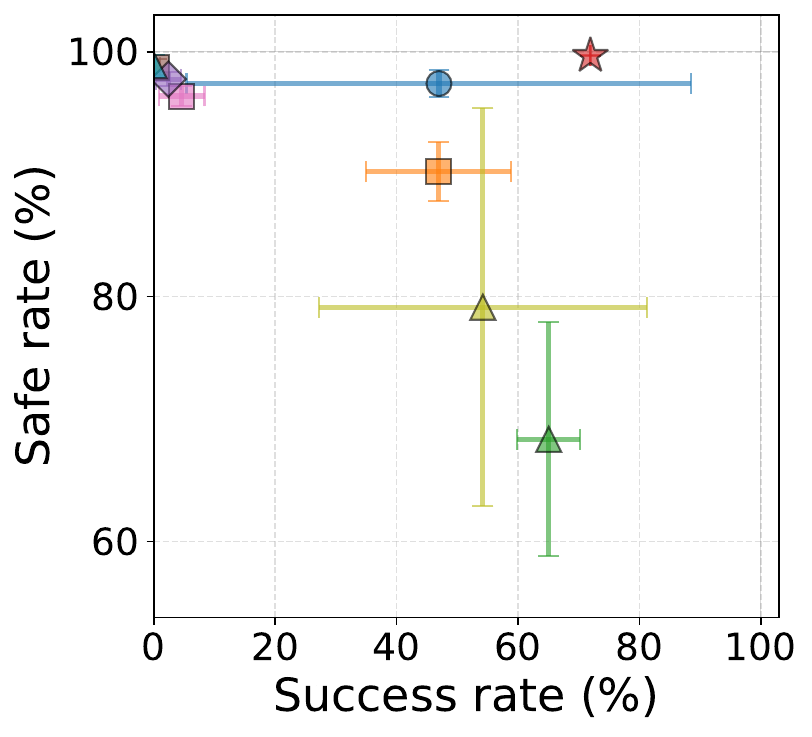}
    \vspace{-1.5em}
    \caption{LidarTarget}
    \label{fig:target}
\end{subfigure}
\hfill
\begin{subfigure}[b]{0.31\textwidth}
    \includegraphics[width=\textwidth]{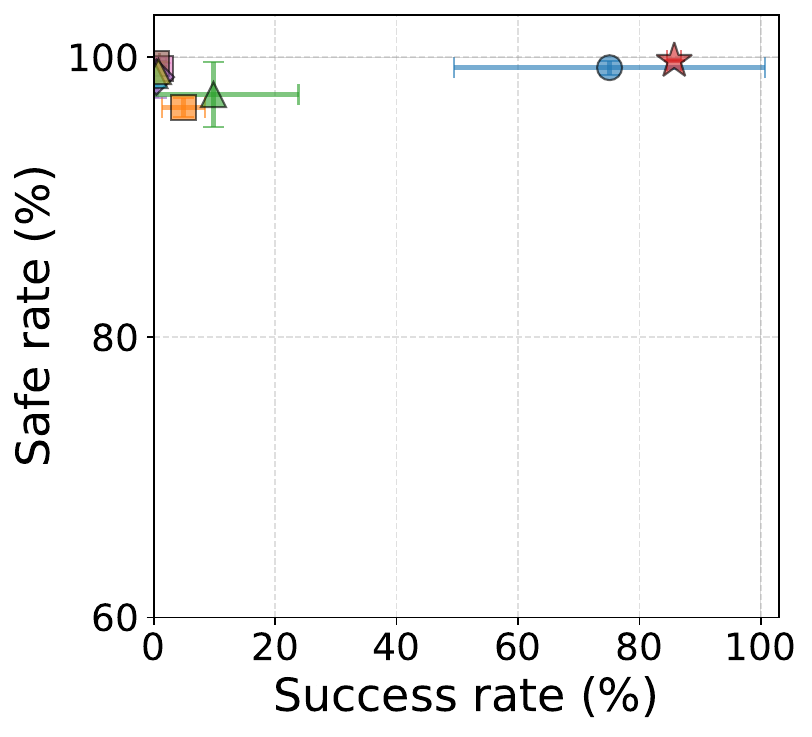}
    \vspace{-1.5em}
    \caption{LidarLine}
    \label{fig:line}
\end{subfigure}


\begin{subfigure}[b]{0.31\textwidth}
    \includegraphics[width=\textwidth]{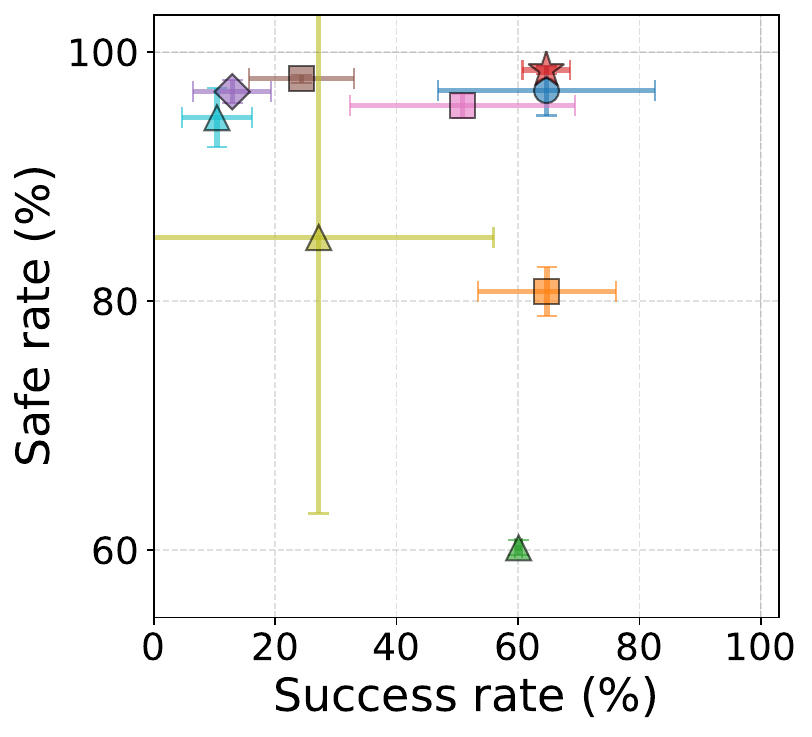}
        \vspace{-1.5em}
    \caption{LidarBicycle}
    \label{fig:bicycle}
\end{subfigure}
\hfill
\begin{subfigure}[b]{0.31\textwidth}
    \includegraphics[width=\textwidth]{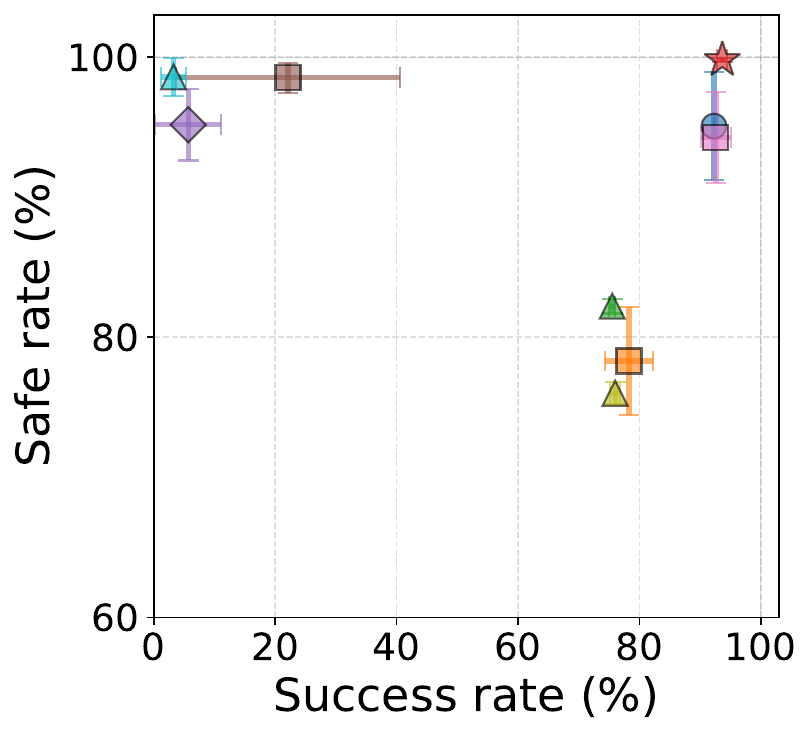}
        \vspace{-1.5em}
    \caption{LinearDrone}
    \label{fig:lineardrone}
\end{subfigure}
\hfill
\begin{subfigure}[b]{0.31\textwidth}
    \includegraphics[width=\textwidth]{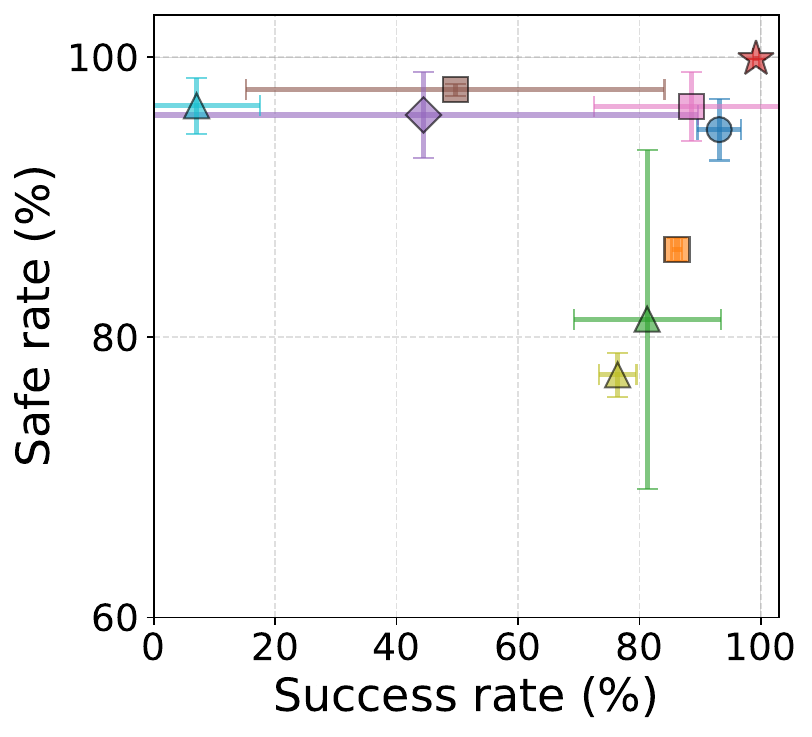}
        \vspace{-1.5em}
    \caption{CrazyFlie}
    \label{fig:crazyflie}
\end{subfigure}

\includegraphics[width=1\textwidth]{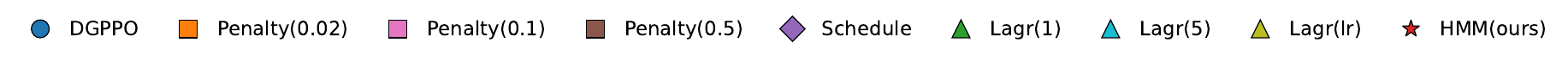}

\vspace{-0.5em}
\caption{Safe rate and success rate (mean $\pm$ std) across six environments.}
\vspace{-1.5em}
\label{fig:best_mean_std}
\end{figure*}

\subsection{Main Results}

\textbf{Overall Performance and Stability.}
We evaluate the performance of different algorithms and visualize their results in terms of safety rate and success rate in a two-dimensional plot. For each method, we report both the mean and standard deviation (std) in the figure.
As shown in Figure~\ref{fig:best_mean_std}, our algorithm consistently outperforms all other baselines in both safety rate and success rate. Notably, our method achieves a highly stable safety rate across all environments, remaining close to 100\% throughout. Moreover, the low std reported in Figure~\ref{fig:best_mean_std} indicates that our method exhibits consistently low variance across runs, demonstrating strong training stability. This suggests that our algorithm is not only effective but also robust during training.

\textbf{Persistent Safety throughout Training.} We present training curves of the safety rate during the learning process. As shown in Figure~\ref{fig:training_safe_curves}, other learning-based baselines are not able to guarantee safety at the early stage of training when the neural networks have not yet converged. In contrast, our \textbf{HMM} consistently maintains safety throughout the entire training process. 
The rare residual violations (<2\%) stem from discrete-time integration, neighbor truncation, and partial observability, which fall outside our continuous-time guarantee (Theorem~\ref{thm:compositional_safety}); see Appendix~\ref{app:violations} for details.

\begin{figure*}[h]
\centering
\begin{subfigure}[b]{0.32\textwidth}
    \includegraphics[width=\textwidth]{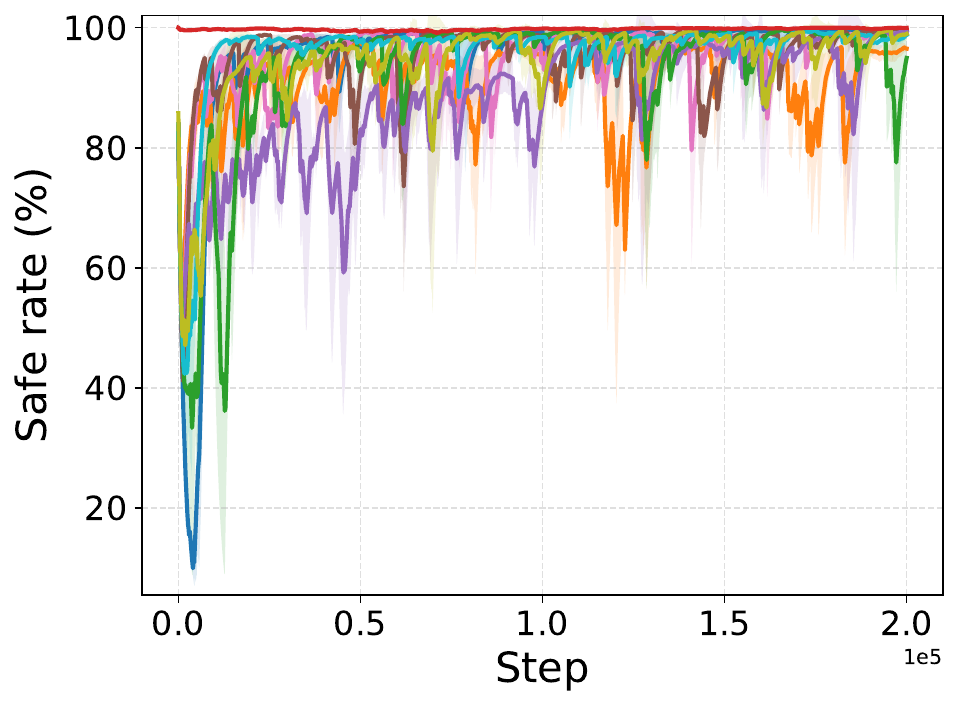}
        \vspace{-1.5em}
    \caption{LidarSpread}
    \label{fig:curve_spread}
\end{subfigure}
\hfill
\begin{subfigure}[b]{0.32\textwidth}
    \includegraphics[width=\textwidth]{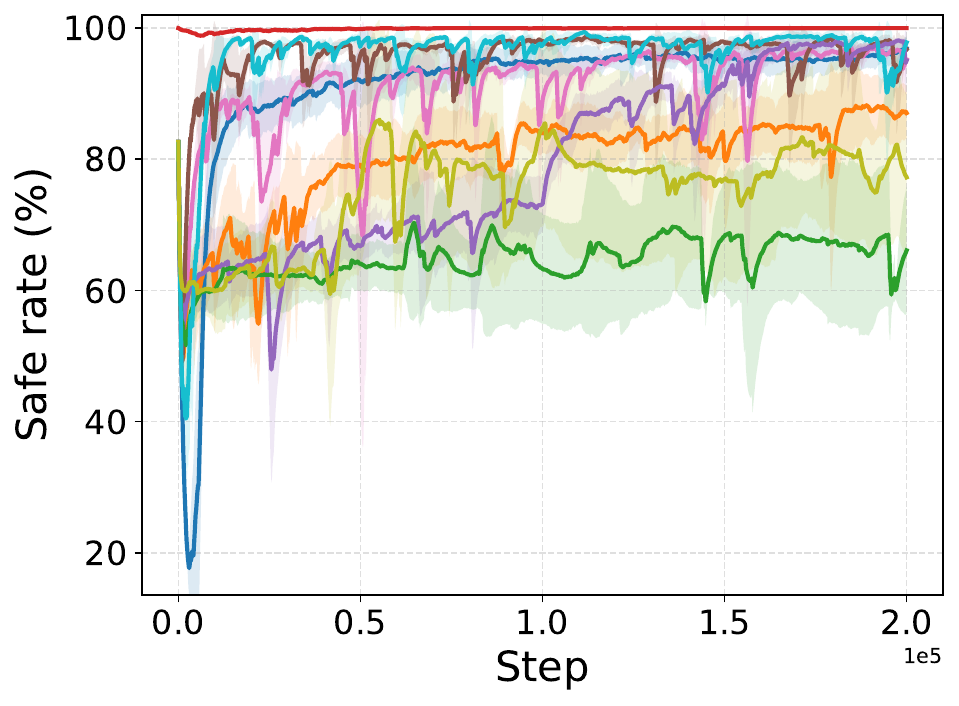}
        \vspace{-1.5em}
    \caption{LidarTarget}
    \label{fig:curve_target}
\end{subfigure}
\hfill
\begin{subfigure}[b]{0.32\textwidth}
    \includegraphics[width=\textwidth]{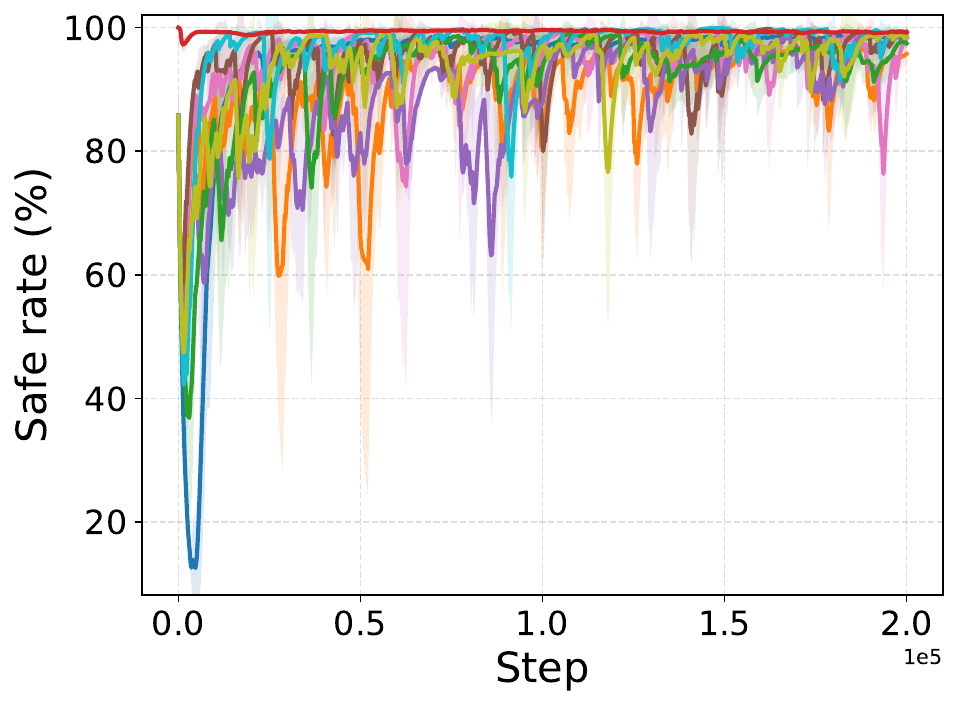}
        \vspace{-1.5em}
    \caption{LidarLine}
    \label{fig:curve_line}
\end{subfigure}

\begin{subfigure}[b]{0.32\textwidth}
    \includegraphics[width=\textwidth]{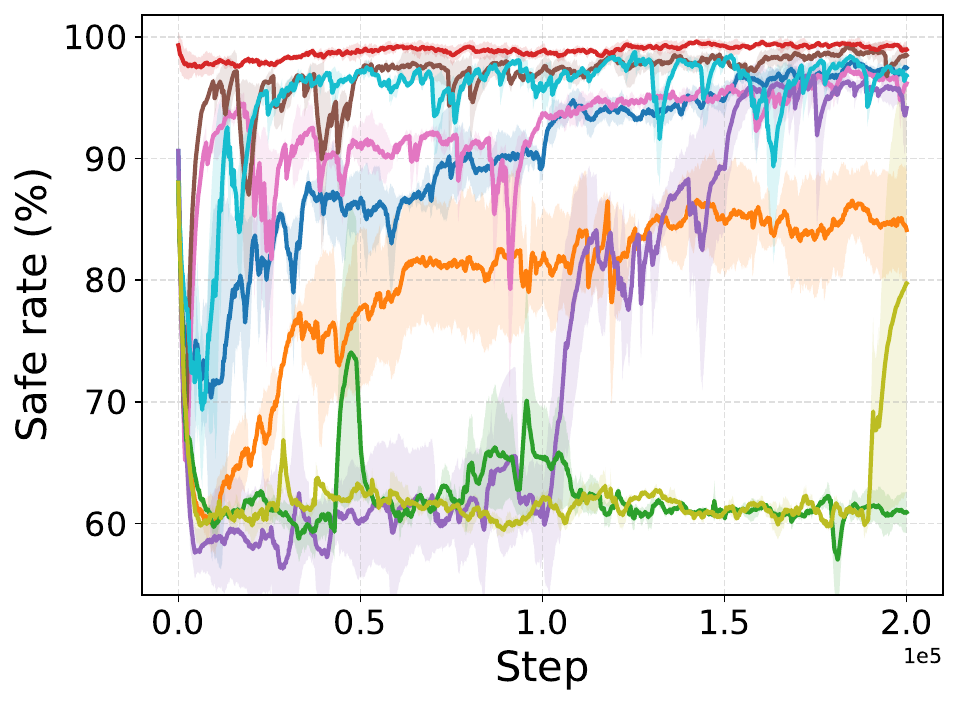}
        \vspace{-1.5em}
    \caption{LidarBicycle}
    \label{fig:curve_bicycle}
\end{subfigure}
\hfill
\begin{subfigure}[b]{0.32\textwidth}
    \includegraphics[width=\textwidth]{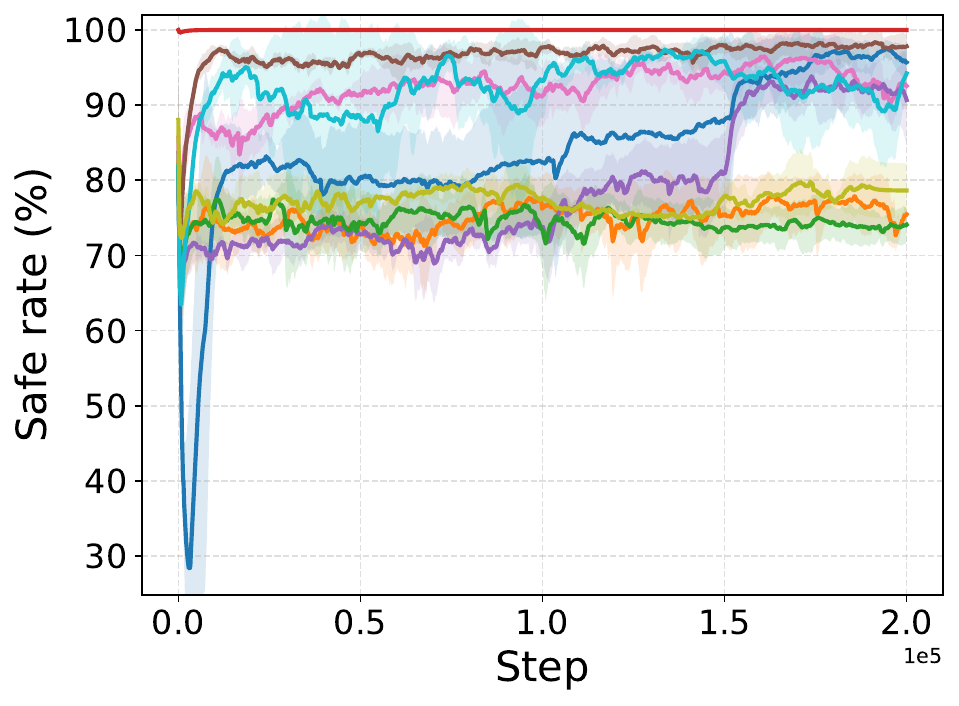}
        \vspace{-1.5em}
    \caption{LinearDrone}
    \label{fig:curve_lineardrone}
\end{subfigure}
\hfill
\begin{subfigure}[b]{0.32\textwidth}
    \includegraphics[width=\textwidth]{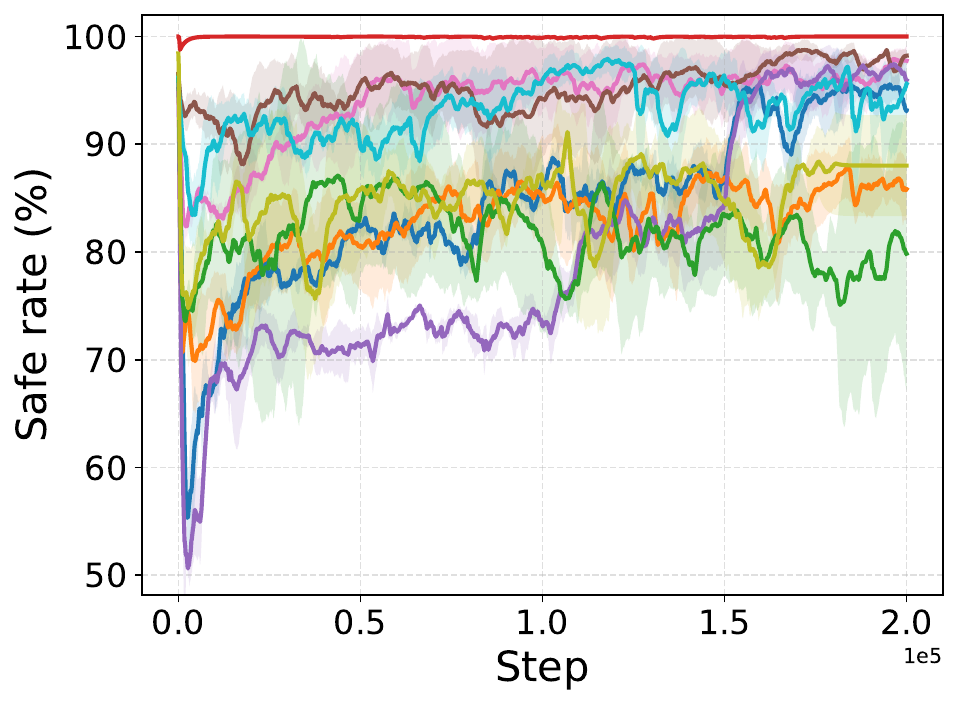}
        \vspace{-1.5em}
    \caption{CrazyFlie}
    \label{fig:curve_crazyflie}
\end{subfigure}

\includegraphics[width=1\textwidth]{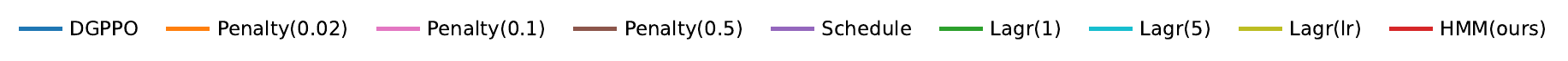}

\caption{Training safe rate curves across six environments.}
\vspace{-1.0em}
\label{fig:training_safe_curves}
\end{figure*}

\textbf{Computational Efficiency.} We compare the training efficiency of our constraint manifold controller against the QP-based CBF baseline on a single NVIDIA H200 GPU. Benefiting from the closed-form pseudoinverse projection, our method achieves a significant speedup in training throughput. Specifically, based on the same training setting, while the CBF-QP baseline requires over 5 days to complete 200k steps, our \textbf{HMM} finishes the same training process within 8 to 10 hours. A detailed breakdown of the per-iteration cost and total training time is provided in Appendix~\ref{app:Computational_efficiency}.

\textbf{Ablation Study.}
To validate the contribution of each component in our framework, we compare three variants: (1) Nominal only, which uses a simple proportional controller to drive agents toward their goals without any safety mechanism; (2) Nominal + Manifold, which applies the constraint manifold safe controller on top of the nominal controller but without learned high-level subgoal planning; and (3) \textbf{HMM} (ours), the full framework with both learned high-level subgoals and the low-level manifold controller. As shown in Table~\ref{tab:manifold_only_results}, the nominal-only baseline achieves poor safety across all environments, confirming that naive goal-tracking is insufficient in multi-agent settings. Adding the manifold controller significantly improves safety (above 96\% in all cases), but without high-level coordination, agents frequently get stuck or take suboptimal paths, resulting in limited success rates. Our \textbf{HMM} framework achieves the highest performance on both metrics, demonstrating that the learned high-level planner is essential for effective coordination while the low-level manifold controller provides the safety foundation.  Notably, the addition of learned high-level coordination leads to dramatic improvements in success rate, boosting performance by over 35\% points in LidarSpread and LidarLine, and by more than 50\% points in LinearDrone. This highlights the importance of high-level planning for coordinating agents beyond merely satisfying safety constraints.

\begin{table}[h]
\centering
\caption{Comparison of nominal-only, nominal+manifold, and our \textbf{HMM} method across six environments. All values are percentages (mean $\pm$ std across seeds, where applicable). Bold marks the best safe rate and success rate per environment.}
\label{tab:manifold_only_results}
\begin{tabular}{llcc}
\toprule
\textbf{Environment} & \textbf{Method} & \textbf{Safe rate (\%) $\uparrow$} & \textbf{Success rate (\%) $\uparrow$} \\
\midrule
\multirow{3}{*}{LidarSpread}        & Nominal only         & 32.23                 & 32.17 \\
                               & Nominal + Manifold   & 97.60                 & 49.17 \\
                               & \textbf{HMM (ours)}  & \textbf{99.83\,$\pm$\,0.06} & \textbf{87.29\,$\pm$\,1.35} \\
\midrule
\multirow{3}{*}{LidarTarget}        & Nominal only         & 62.43                 & 59.90 \\
                               & Nominal + Manifold   & 98.77                 & 70.97 \\
                               & \textbf{HMM (ours)}  & \textbf{99.67\,$\pm$\,0.05} & \textbf{71.93\,$\pm$\,0.02} \\
\midrule
\multirow{3}{*}{LidarLine}          & Nominal only         & 33.30                 & 33.13 \\
                               & Nominal + Manifold   & 96.87                 & 48.20 \\
                               & \textbf{HMM (ours)}  & \textbf{99.73\,$\pm$\,0.03} & \textbf{85.75\,$\pm$\,1.15} \\
\midrule
\multirow{3}{*}{LidarBicycle}       & Nominal only         & 60.97                 & 59.17 \\
                               & Nominal + Manifold   & 97.17                 & 54.67 \\
                               & \textbf{HMM (ours)}  & \textbf{98.58\,$\pm$\,0.31} & \textbf{64.67\,$\pm$\,3.93} \\
\midrule
\multirow{3}{*}{LinearDrone}   & Nominal only         & 74.43                 & 40.00 \\
                               & Nominal + Manifold   & 99.57                 & 42.87 \\
                               & \textbf{HMM (ours)}  & \textbf{99.80\,$\pm$\,0.14} & \textbf{93.67\,$\pm$\,0.80} \\
\midrule
\multirow{3}{*}{CrazyFlie}     & Nominal only         & 74.20                 & 74.20 \\
                               & Nominal + Manifold   & 98.63                 & 94.93 \\
                               & \textbf{HMM (ours)}  & \textbf{99.87\,$\pm$\,0.00} & \textbf{99.22\,$\pm$\,0.44} \\
\bottomrule
\end{tabular}
\end{table}

\subsection{Generalization}

All our 2D environments are trained with 3 agents and 3 obstacles, while the 3D environments are trained with 3 agents and 6 obstacles. During evaluation, we increase the number of agents and obstacles while keeping the map size fixed. As shown in Figure~\ref{fig:generalization_main}, when the number of agents increases, all other baselines fail to maintain safety and often break down. This suggests that the other baselines struggle to generalize under distribution shifts, particularly as interaction complexity increases with larger numbers of agents and obstacles. In contrast, our \textbf{HMM} continues to guarantee safety even under increased numbers of agents or obstacles, and in most cases achieves an even higher success rate. Results for the remaining environments are provided in the appendix~\ref{app:add_generalization_results}.

\begin{figure}[h]
    \centering
    \begin{subfigure}[b]{0.24\textwidth}
        \includegraphics[width=\textwidth]{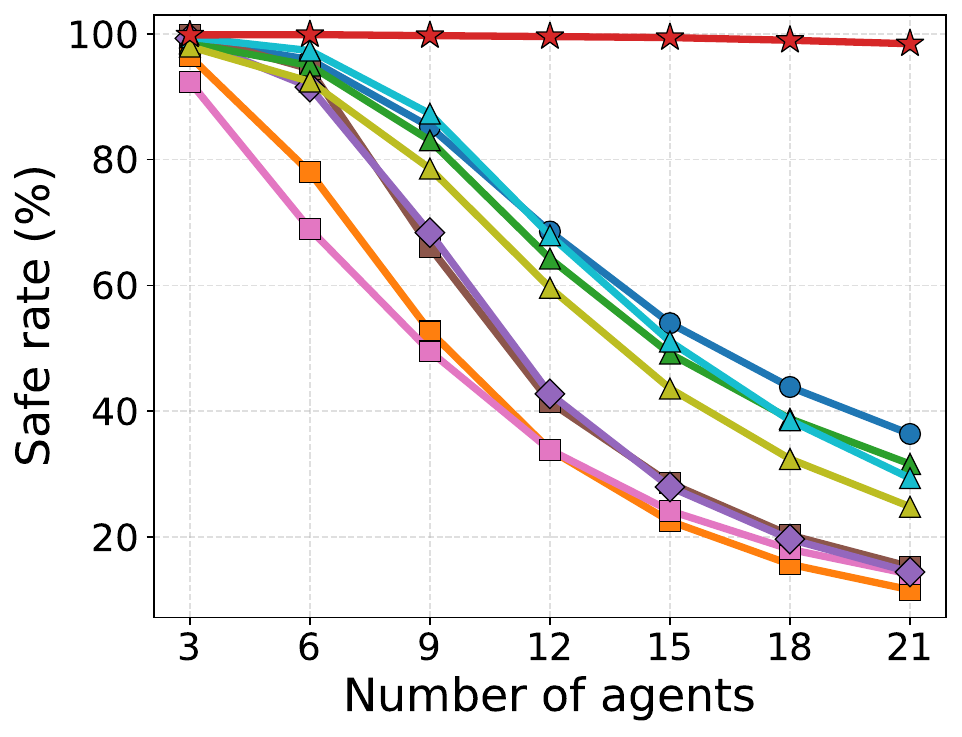}
                \vspace{-1.5em}
        \caption{}
    \end{subfigure}
    \hfill
    \begin{subfigure}[b]{0.24\textwidth}
        \includegraphics[width=\textwidth]{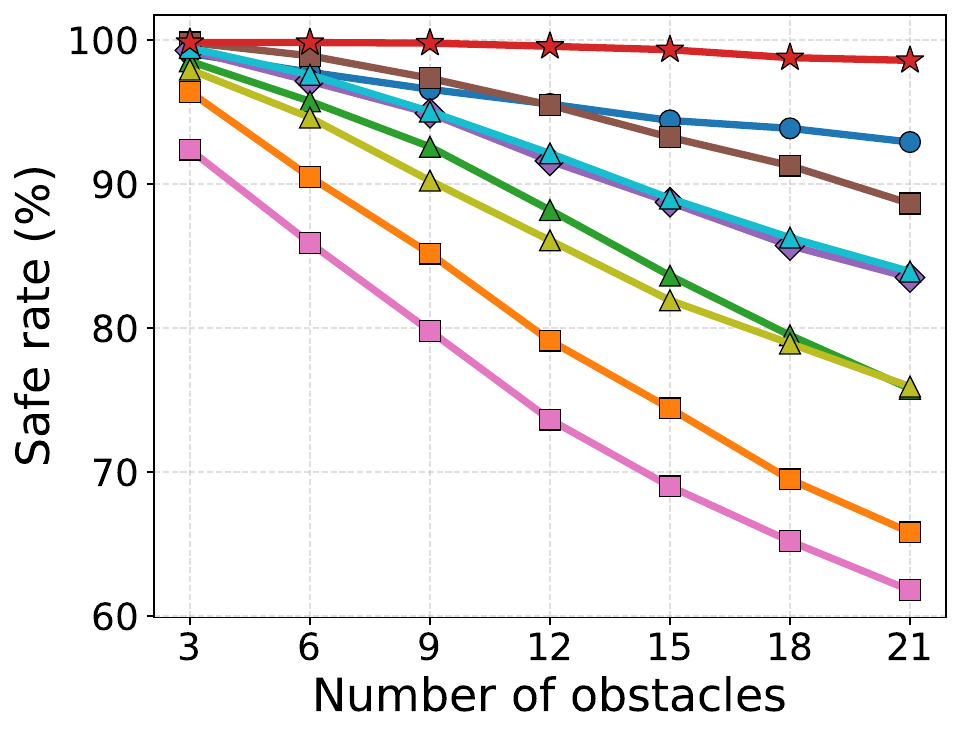}
                \vspace{-1.5em}
        \caption{}
    \end{subfigure}
    \hfill
    \begin{subfigure}[b]{0.24\textwidth}
        \includegraphics[width=\textwidth]{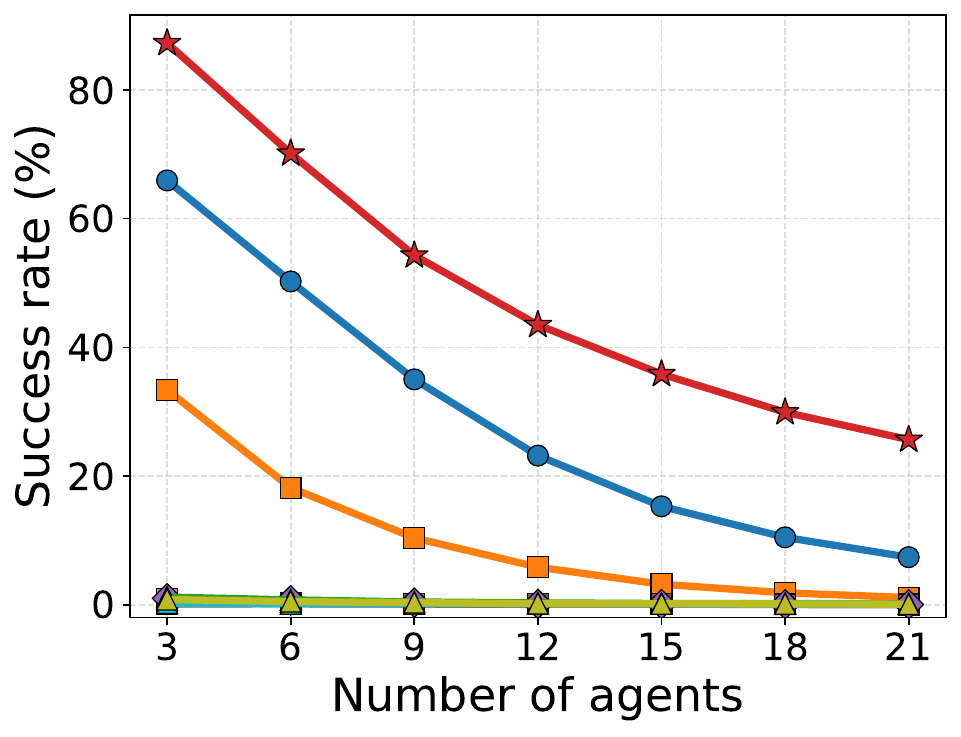}
                \vspace{-1.5em}
        \caption{}
    \end{subfigure}
    \hfill
    \begin{subfigure}[b]{0.24\textwidth}
        \includegraphics[width=\textwidth]{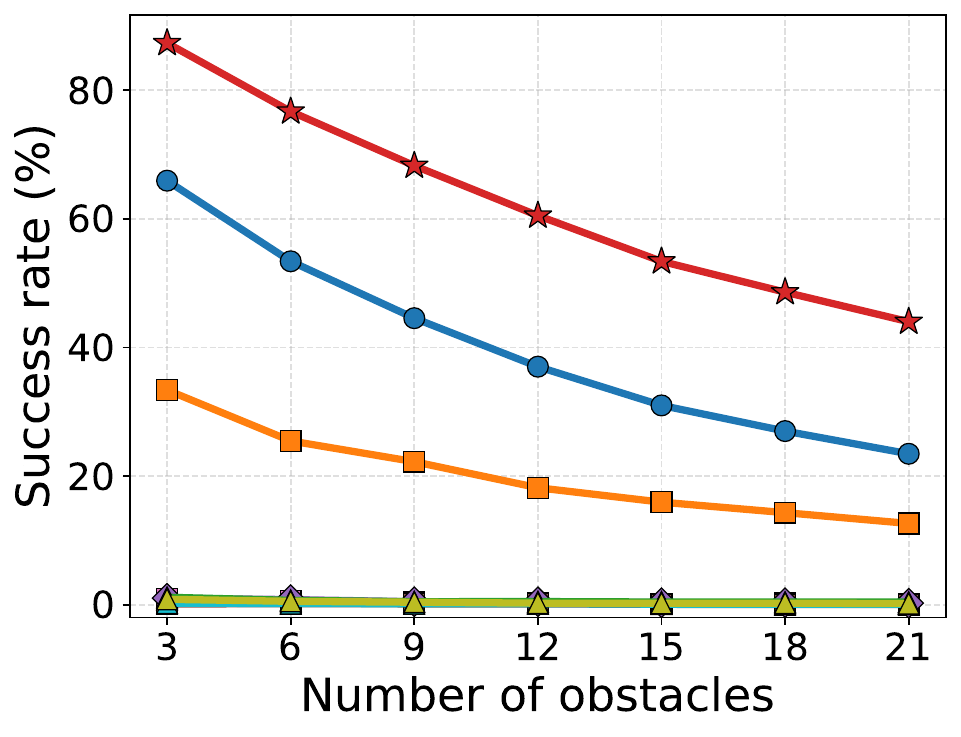}
                \vspace{-1.5em}
        \caption{}
    \end{subfigure}

    \begin{subfigure}[b]{0.24\textwidth}
        \includegraphics[width=\textwidth]{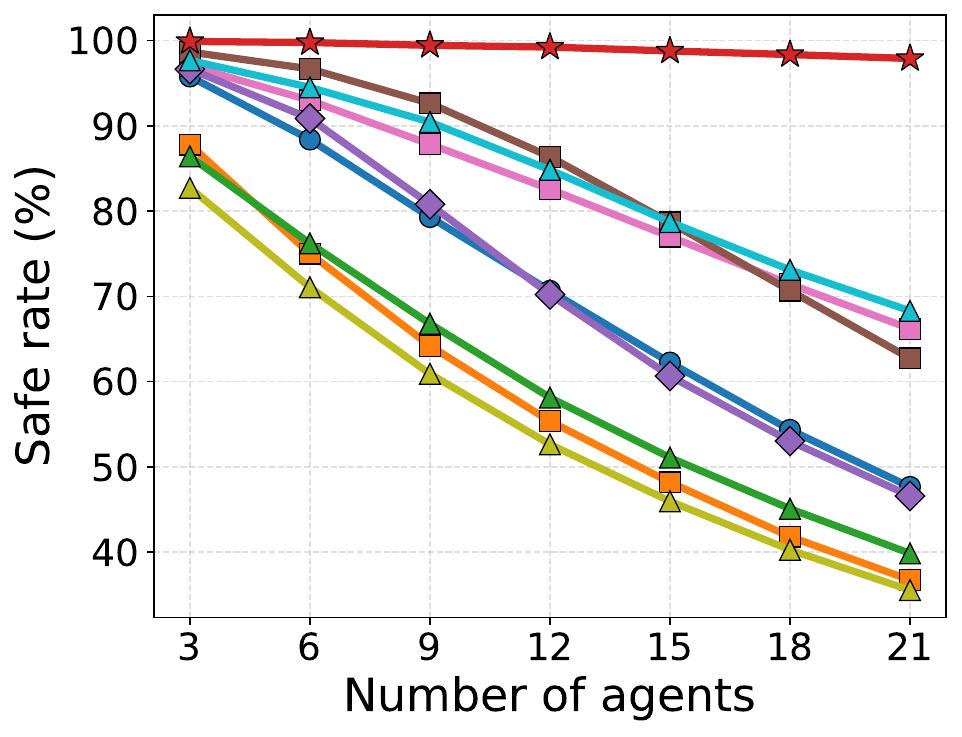}
                \vspace{-1.5em}
        \caption{}
    \end{subfigure}
    \hfill
    \begin{subfigure}[b]{0.24\textwidth}
        \includegraphics[width=\textwidth]{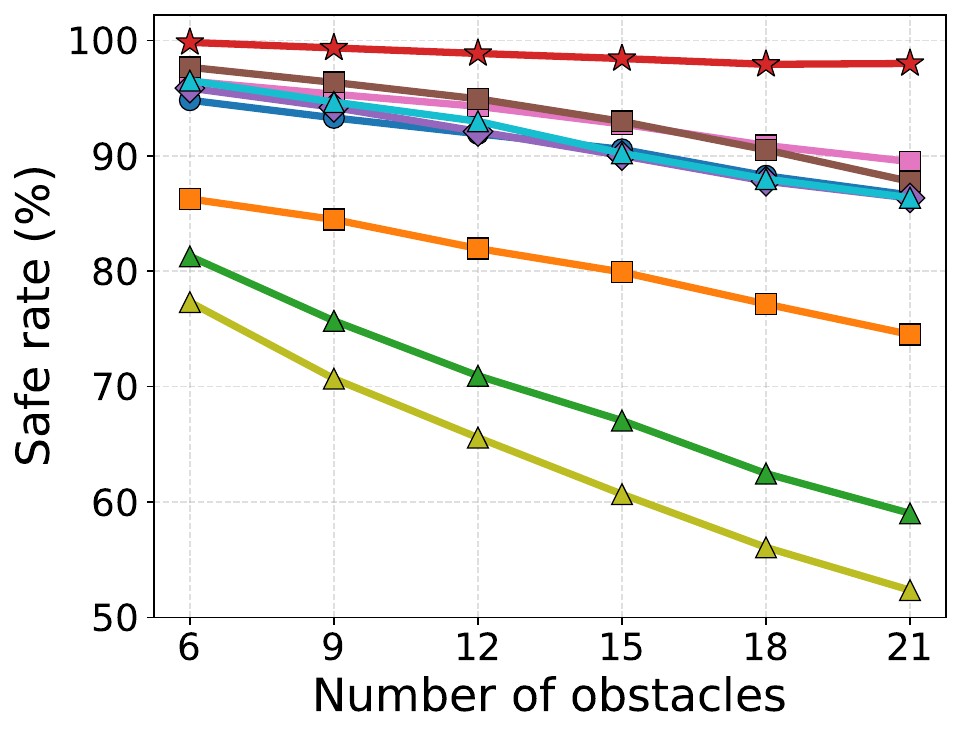}
                \vspace{-1.5em}
        \caption{}
    \end{subfigure}
    \hfill
    \begin{subfigure}[b]{0.24\textwidth}
        \includegraphics[width=\textwidth]{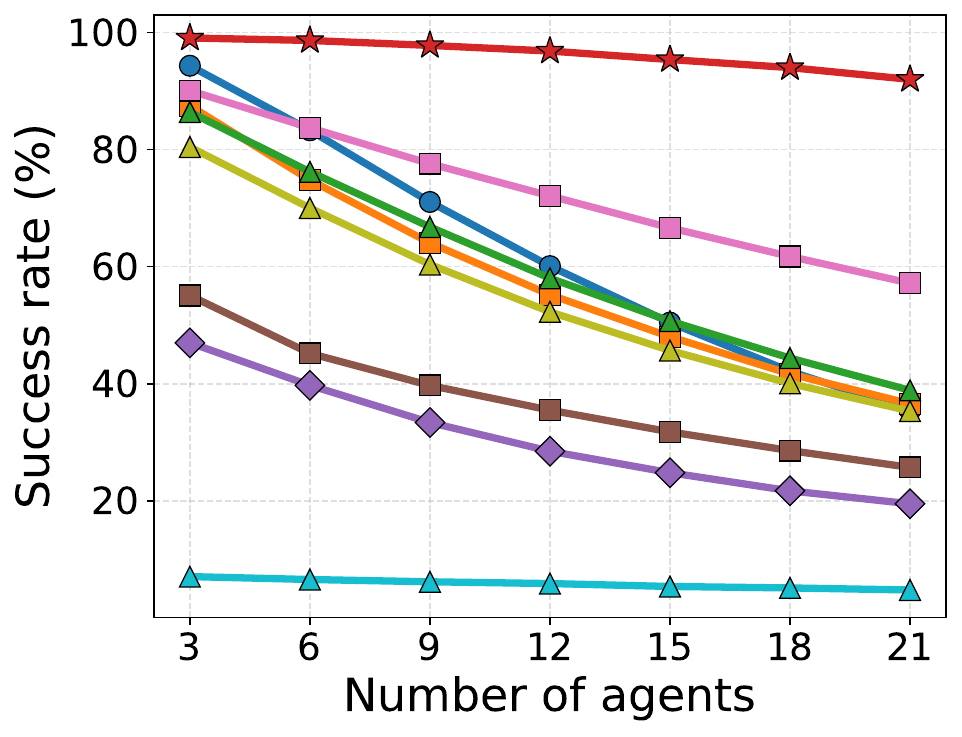}
                \vspace{-1.5em}
        \caption{}
    \end{subfigure}
    \hfill
    \begin{subfigure}[b]{0.24\textwidth}
        \includegraphics[width=\textwidth]{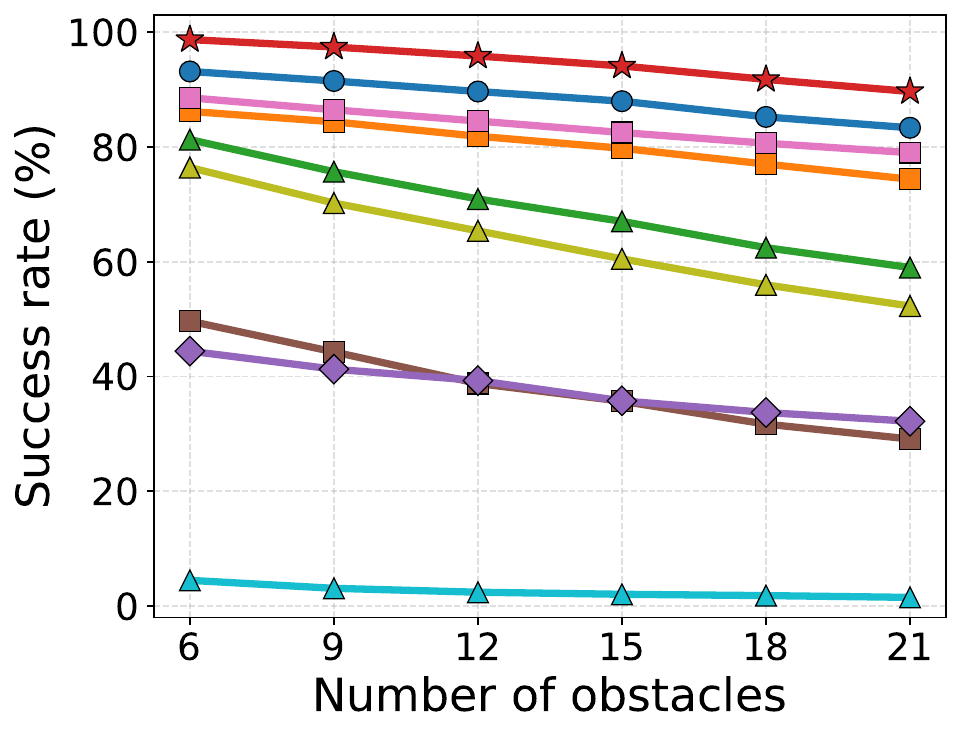}
                \vspace{-1.5em}
        \caption{}
    \end{subfigure}

    \includegraphics[width=1\textwidth]{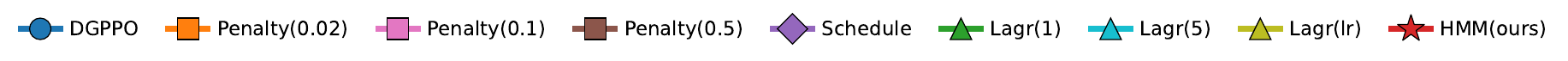}
    
    \caption{Generalization results. LidarSpread (a--d) and CrazyFlie (e--h). Left two columns: safe rate; right two columns: success rate. Models trained with 3 agents and 3/6 obstacles are evaluated with varying numbers of agents and obstacles.}
    \label{fig:generalization_main}
    \vspace{-1.2em}
\end{figure}

\section{Conclusion}

We propose a hierarchical safe multi-agent reinforcement learning framework. The high-level policy adopts a CTDE MARL architecture to generate subgoals, while a fixed low-level constraint-manifold controller projects these subgoals onto the safe action space. We provide theoretical guarantees on both safety and convergence of the proposed approach.
Empirically, our method achieves near-perfect safety rates across multiple multi-agent navigation tasks, while maintaining higher task success rates compared to other baselines.
Furthermore, it demonstrates strong generalization across varying numbers of agents and obstacles.
Overall, our results highlight the effectiveness of integrating learning-based coordination with control-theoretic safety mechanisms, offering a promising direction for developing scalable and reliable multi-agent systems.

\textbf{Limitation}
The low-level policy of our \textbf{HMM} is based on control-theoretic safety mechanisms, which makes it less suitable for tasks involving complex physical interactions, such as contact-rich manipulation or object transportation.
In particular, extending our framework to scenarios where agents must interact with external objects while ensuring their safety remains challenging.

\section*{Acknowledgement}

This work was supported by the Engineering and Physical Sciences Research Council [grant number EP/Y003187/1 and UKRI849].

\bibliography{references}
\bibliographystyle{icml2026}

\newpage
\appendix
\appendix

\section{Environment Descriptions}

We evaluate our method on six multi-agent environments with LiDAR-based obstacle perception, covering a spectrum of dynamics ranging from simple 2D point-mass systems to higher-dimensional non-holonomic and quadrotor platforms.
LidarSpread and LidarTarget adopt 2D double-integrator dynamics (state $[x, y, \dot{x}, \dot{y}]$, action $[a_x, a_y]$), differing in their goal-assignment semantics. In LidarSpread, $N$ agents are required to collectively cover $N$ goal locations (any-to-any matching), whereas in LidarTarget each agent is assigned a fixed goal ($i \leftrightarrow i$ matching). LidarLine extends this setup by replacing discrete goal nodes with two landmarks, between which $N$ goal positions are linearly interpolated, requiring agents to form an evenly spaced line. LidarBicycleTarget retains the target-based assignment but introduces non-holonomic kinematic bicycle dynamics (5D state including heading and forward velocity).

LinearDrone models a 3D quadrotor with linear dynamics, using a 6D state $[x, y, z, \dot{x}, \dot{y}, \dot{z}]$ and 3D acceleration control, with aerodynamic damping along each axis.
CrazyFlie represents a more challenging setting with full nonlinear quadrotor dynamics and realistic physical parameters. Each agent has a 12D state including position, orientation, and velocities.

\begin{figure}[h]
    \centering
    
    \begin{subfigure}[b]{0.3\textwidth}
        \centering
        \includegraphics[width=\textwidth]{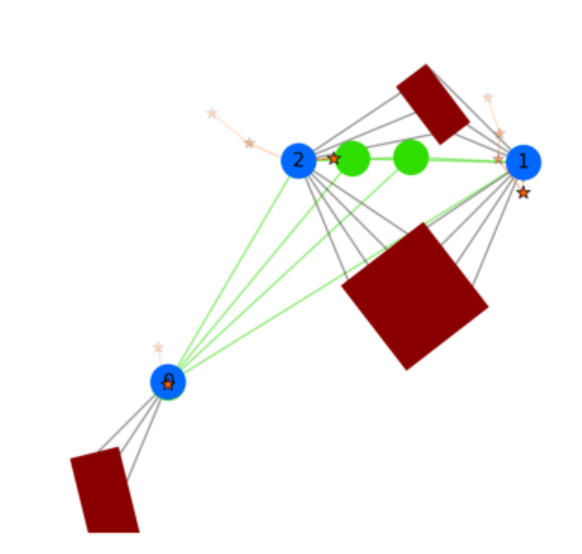}
        \caption{Spread}
    \end{subfigure}
    \hfill
    \begin{subfigure}[b]{0.3\textwidth}
        \centering
        \includegraphics[width=\textwidth]{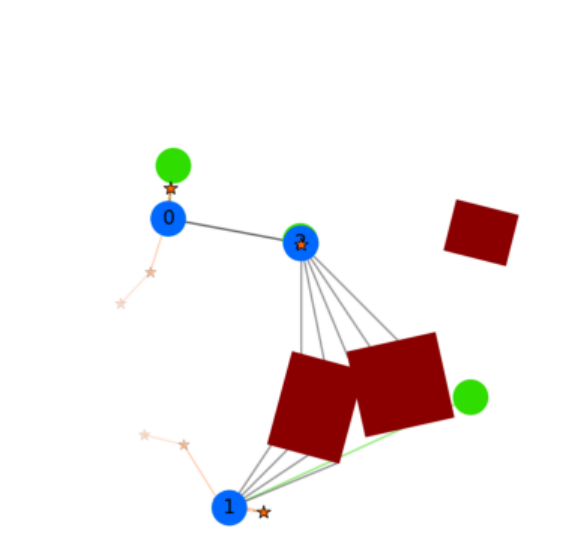}
        \caption{Target}
    \end{subfigure}
    \hfill
    \begin{subfigure}[b]{0.3\textwidth}
        \centering
        \includegraphics[width=\textwidth]{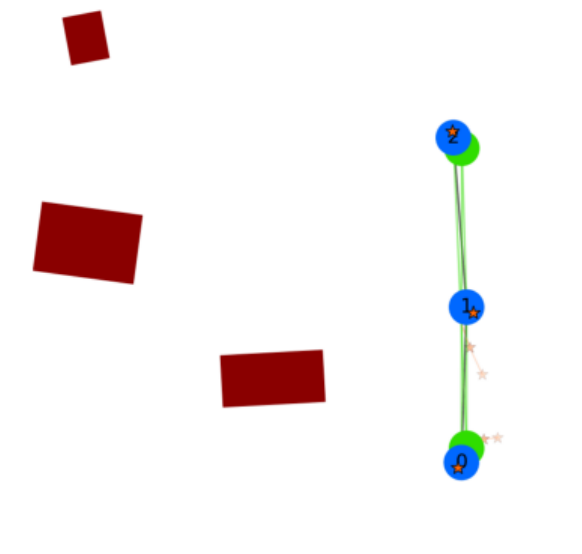}
        \caption{Line}
    \end{subfigure}
    
    \vspace{0.5em}
    
    \begin{subfigure}[b]{0.3\textwidth}
        \centering
        \includegraphics[width=\textwidth]{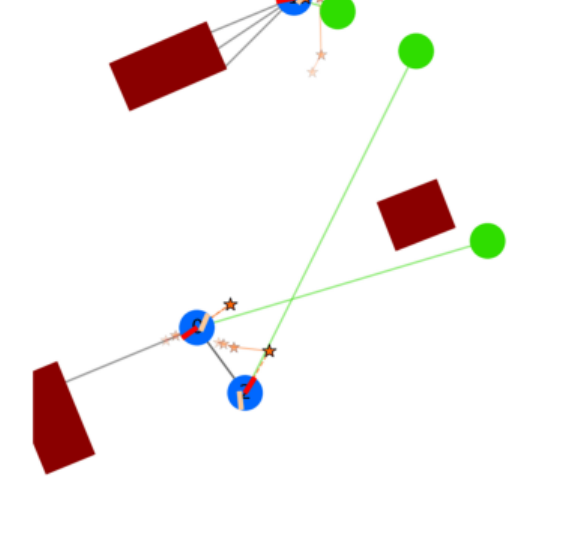}
        \caption{Bicycle}
    \end{subfigure}
    \hfill
    \begin{subfigure}[b]{0.3\textwidth}
        \centering
        \includegraphics[width=\textwidth]{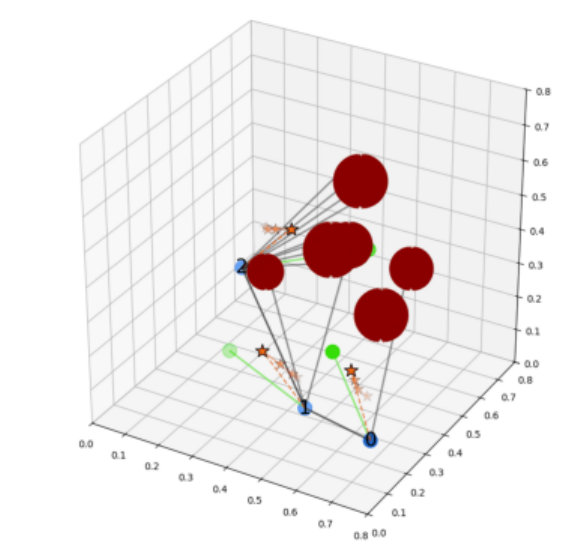}
        \caption{LinearDrone}
    \end{subfigure}
    \hfill
    \begin{subfigure}[b]{0.3\textwidth}
        \centering
        \includegraphics[width=\textwidth]{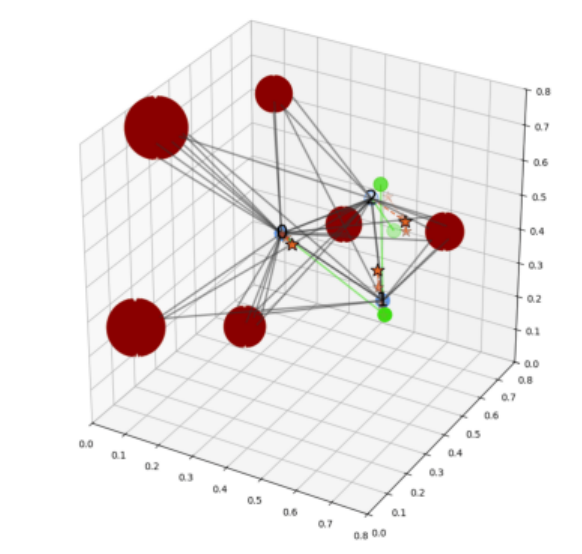}
        \caption{CrazyFlie}
    \end{subfigure}

    \caption{Visualization of different LiDAR-based multi-agent environments.}
    \label{fig:showcase}
\end{figure}

\section{Safety Proofs}
\label{app:safety_proof}

\subsection{Manifold Preliminaries}
\label{app:preliminaries}
We review relevant concepts from manifold theory here; see~\citep{lee2003smooth, boumal2023introduction} for details.

\paragraph{Manifold}

An $n$-dimensional manifold is a topological space that is locally homeomorphic to $\mathbb{R}^n$. Therefore, every point has a neighborhood that is homeomorphic to an open subset of $\mathbb{R}^n$. A differentiable manifold(also called a smooth manifold) is a manifold equipped with an atlas of coordinate charts whose transition maps are all differentiable In this work, we consider differentiable manifolds.

\paragraph{Tangent Space}

Let $\mathcal{M}$ be an $n$-dimensional differentiable manifold and $p \in \mathcal{M}$. The tangent space at $p$, denoted by $\mathrm{T}_p\mathcal{M}$, is an $n$-dimensional vector space of all tangent vectors at $p$, providing a local linear approximation of $\mathcal{M}$ near $p$. The \emph{tangent bundle} $\mathrm{T}\mathcal{M} = \sqcup_{p \in \mathcal{M}} \mathrm{T}_p\mathcal{M}$ is the disjoint union of all tangent spaces over $\mathcal{M}$.

\paragraph{Smooth Map and Rank}
Let $\mathcal{M}$ and $\mathcal{N}$ be smooth manifolds. A smooth map $\Phi: \mathcal{M} \to \mathcal{N}$ induces, at each point $p \in \mathcal{M}$, a linear map $\mathrm{D}\Phi_p: \mathrm{T}_p\mathcal{M} \to \mathrm{T}_{\Phi(p)}\mathcal{N}$, called the differential of $\Phi$ at $p$. The \emph{rank} of $\Phi$ at $p$ is defined as the rank of $\mathrm{D}\Phi_p$.

\paragraph{Embedded Submanifold}
An \emph{embedded submanifold} is a subset $\mathcal{S} \subseteq \mathcal{M}$ that itself forms a manifold, with the \emph{codimension} given by $\dim \mathcal{M} - \dim \mathcal{S}$.

\begin{theorem}[Constant-Rank Level Set Theorem]
    \label{thm:app_level_set}
    Let $\mathcal{M}$ and $\mathcal{N}$ be smooth manifolds, and let $\Phi: \mathcal{M} \rightarrow \mathcal{N}$ be a smooth map with constant rank $r$. Then every level set of $\Phi$ is a properly embedded submanifold of $\mathcal{M}$ with codimension $r$.
\end{theorem}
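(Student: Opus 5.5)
This is the standard Constant-Rank Level Set Theorem, so I would derive it from the Rank Theorem for constant-rank maps. Fix a level set $S = \Phi^{-1}(q)$ for some $q \in \mathcal{N}$ and a point $p \in S$. Write $m = \dim \mathcal{M}$ and $n = \dim \mathcal{N}$. Since $\Phi$ has constant rank $r$, the Rank Theorem gives smooth charts $(U,\varphi)$ centered at $p$ and $(V,\chi)$ centered at $q$ with $\Phi(U) \subseteq V$. In these charts the coordinate representation is
\begin{equation*}
\chi \circ \Phi \circ \varphi^{-1}(x^1,\dots,x^m) = (x^1,\dots,x^r,0,\dots,0).
\end{equation*}

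The plan has three steps.

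\textbf{Step 1 (local slice).} In these coordinates,
\begin{equation*}
S \cap U = \{x \in \varphi(U) : x^1 = \dots = x^r = 0\},
\end{equation*}
pulled back by $\varphi$. This is an $(m-r)$-slice of $U$. Hence $S$ satisfies the local $k$-slice condition at every point, with $k = m-r$. By the standard characterization of embedded submanifolds via slice charts, $S$ is then an embedded submanifold of dimension $m-r$, i.e.\ of codimension $r$. Its smooth structure is the one whose charts are the restrictions of the slice coordinates $(x^{r+1},\dots,x^m)$. The smooth compatibility of these charts follows from that of the ambient charts.

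\textbf{Step 2 (properness).} Since $\mathcal{N}$ is Hausdorff, the point $\{q\}$ is closed. By continuity of $\Phi$, $S$ is therefore closed in $\mathcal{M}$. An embedded submanifold that is a closed subset is properly embedded, because the inclusion is then a proper map: compact subsets of $S$ are exactly the compact subsets of $\mathcal{M}$ contained in $S$.

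The main obstacle is really the Rank Theorem itself, if it is not taken as known. Proving it requires applying the inverse function theorem twice. First, choose coordinates in which the upper-left $r\times r$ block of the Jacobian is invertible. Then use the map $x \mapsto (\Phi^1,\dots,\Phi^r, x^{r+1},\dots,x^m)$ as new domain coordinates. The delicate point is the next one. Because the rank is constant, the remaining components $\Phi^{r+1},\dots,\Phi^n$ depend only on the first $r$ coordinates near $p$. This is where constant rank, not just rank at $p$, is essential. A target change of coordinates then flattens them. I would cite the Rank Theorem from \citep{lee2003smooth} rather than reprove it, and only spell out Steps 1--2.
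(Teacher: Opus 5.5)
Your proposal is correct. It follows the standard route in \citep{lee2003smooth}: the Rank Theorem yields slice charts $\{x^1=\dots=x^r=0\}$, and closedness of $\Phi^{-1}(q)$ gives properness. The paper states this theorem without proof and simply cites that reference, so your route coincides with the one it relies on.

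One point in Step 2 should be tightened. Closedness of $S$ enters because, for compact $K\subseteq\mathcal{M}$, the preimage $K\cap S$ is a closed subset of $K$ and hence compact. Your remark that compact subsets of $S$ are the compact subsets of $\mathcal{M}$ contained in $S$ holds for any subspace, so it does not by itself use closedness.
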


As a consequence, consider a smooth function $f: \mathcal{E} \rightarrow \mathbb{R}^k$, where $\mathcal{E}$ is a Euclidean space of dimension $d > k$ equipped with an inner product $\langle \cdot, \cdot \rangle$ and the induced norm $\|\cdot\|$. If the differential $Df(\boldsymbol{x})$ has full rank $k$ at every $\boldsymbol{x}$ in the level set
\begin{equation}
    \mathcal{M} = \{\boldsymbol{x} \in \mathcal{E} : f(\boldsymbol{x}) = \boldsymbol{0}\},
\end{equation}
then $\mathcal{M}$ is a smooth \emph{embedded submanifold} of $\mathcal{E}$ with dimension $d - k$. Its tangent space at any point $\boldsymbol{x} \in \mathcal{M}$ is characterized by
\begin{equation}
    \mathrm{T}_{\boldsymbol{x}}\mathcal{M} = \{\boldsymbol{v} \in \mathcal{E} : \langle \mathrm{grad}\, f_i(\boldsymbol{x}),\, \boldsymbol{v} \rangle = 0,\ i \in \{1, \ldots, k\}\},
\end{equation}
which has dimension $\dim \mathrm{T}_{\boldsymbol{x}}\mathcal{M} = d - k$. Intuitively, the tangent space consists of all directions orthogonal to every constraint gradient.

\paragraph{LaSalle's Invariance Principle.}
\label{thm:app_lasalle}
Consider the autonomous system $\dot{\mathbf{x}} = f(\mathbf{x})$ with $f: \mathcal{D} \to \mathbb{R}^n$ locally Lipschitz on a domain $\mathcal{D} \subseteq \mathbb{R}^n$.

\subsection{Single Agent Constraint Manifold Construction}
\label{app:constraint_manifold}

\paragraph{Slack variable augmentation.}
Introduce slack variables $\boldsymbol{\mu} \in [0, +\infty)^K$ and rewrite the inequality constraints as equalities:
\begin{equation}
    \mathbf{c}(\mathbf{x}, \boldsymbol{\mu}) := \mathbf{h}(\mathbf{x}) + \boldsymbol{\mu} = \mathbf{0}.
    \label{eq:app_equality}
\end{equation}
The augmented state is $(\mathbf{x}, \boldsymbol{\mu}) \in \mathcal{D} := \mathcal{X} \times [0, +\infty)^K \subseteq \mathbb{R}^N$, where $N = S + K$.

\begin{definition}[Constraint Manifold]
\label{def:app_manifold}
The constraint manifold is defined as:
\begin{equation}
    \mathcal{M} := \{(\mathbf{x}, \boldsymbol{\mu}) \in \mathcal{D} : \mathbf{c}(\mathbf{x}, \boldsymbol{\mu}) = \mathbf{0}\}.
    \label{eq:app_manifold}
\end{equation}
\end{definition}

\begin{proposition}[Manifold Structure]
\label{prop:app_manifold}
Under Assumption~\ref{asmp:app_smooth} below, $\mathcal{M}$ is an $S$-dimensional submanifold embedded in $\mathbb{R}^N$.
\end{proposition}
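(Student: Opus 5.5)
The plan is to apply the regular level set consequence of Theorem~\ref{thm:app_level_set}, as stated immediately after it, to the map $\mathbf{c}:\mathbb{R}^S\times\mathbb{R}^K\to\mathbb{R}^K$, $\mathbf{c}(\mathbf{x},\boldsymbol{\mu})=\mathbf{h}(\mathbf{x})+\boldsymbol{\mu}$. I expect Assumption~\ref{asmp:app_smooth} to guarantee that $\mathbf{h}$ is (at least $C^1$, ideally $C^\infty$) smooth on an open set containing $\mathcal{X}$. Then $\mathbf{c}$ is smooth on an open neighbourhood of $\mathcal{D}$ in $\mathbb{R}^N$, with $N=S+K$.

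The key computation is the Jacobian, $J_c(\mathbf{x},\boldsymbol{\mu})=[J_h(\mathbf{x})\ \ \mathbb{I}_K]$. Because it contains the identity block $\mathbb{I}_K$, its rank is exactly $K$ at every point. This holds regardless of $J_h$, so no constraint qualification on $\mathbf{h}$ is needed; this is the benefit of the slack construction. Hence $\mathbf{c}$ is a submersion of constant rank $K$, and by Theorem~\ref{thm:app_level_set} its zero level set $\widetilde{\mathcal{M}}=\{(\mathbf{x},\boldsymbol{\mu}) : \mathbf{c}=\mathbf{0}\}$ is a properly embedded submanifold of codimension $K$. Its dimension is therefore $N-K=S$, and its tangent space is $\ker J_c$.

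An equivalent and more explicit route, which I would mention as a check, is that $\widetilde{\mathcal{M}}$ is the graph of the smooth map $\mathbf{x}\mapsto -\mathbf{h}(\mathbf{x})$. The map $\mathbf{x}\mapsto(\mathbf{x},-\mathbf{h}(\mathbf{x}))$ is then a smooth embedding. It is an immersion because its differential $[\mathbb{I}_S;\,-J_h]$ is injective, and it is a homeomorphism onto its image, with inverse given by projection onto the first factor. This also shows $\mathcal{M}\cong\mathcal{C}$, matching the projection remark in the main text.

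The main obstacle is the boundary: $\mathcal{D}$ includes the closed orthant $[0,\infty)^K$, so $\mathcal{M}=\widetilde{\mathcal{M}}\cap\mathcal{D}$ corresponds to $\{\mathbf{x}:\mathbf{h}(\mathbf{x})\le\mathbf{0}\}$. This set may have boundary or corners where some $\mu_\ell=0$. I would handle this by interpreting the claim as in the cited works, in one of two ways. One option is that $\mathcal{M}$ is the restriction of the $S$-dimensional embedded submanifold $\widetilde{\mathcal{M}}$ to the region $\boldsymbol{\mu}\ge\mathbf{0}$. In that case the interior points, where $\boldsymbol{\mu}>\mathbf{0}$, form an open subset of $\widetilde{\mathcal{M}}$, and open subsets of embedded submanifolds are embedded submanifolds of the same dimension. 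The other option is that $\mathcal{M}$ is a manifold with corners via the graph parametrization over $\mathcal{C}$. I would state explicitly which reading is meant and rely on the assumption's smoothness and domain hypotheses for it.
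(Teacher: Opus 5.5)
Your proposal is correct and is essentially the paper's proof: the identity block in $J_c=[J_h(\mathbf{x})\ \ \mathbb{I}_K]$ gives constant rank $K$, so Theorem~\ref{thm:app_level_set} makes the zero level set of $\mathbf{c}$ an embedded submanifold of codimension $K$, hence of dimension $N-K=S$. Your graph parametrization, which also covers the merely $C^1$ case, and your discussion of the $\boldsymbol{\mu}\geq\mathbf{0}$ boundary go beyond the paper, which applies the level set theorem directly to $\mathcal{M}$ without addressing that restriction; note, though, that your manifold-with-corners reading would need an extra regularity condition on $\mathbf{h}$ (for instance, linear independence of the active constraint gradients), which Assumption~\ref{asmp:app_smooth} alone does not provide.
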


\begin{proof}
The Jacobian of $\mathbf{c}$ is $J_c(\mathbf{x}, \boldsymbol{\mu}) = \begin{bmatrix} J_h(\mathbf{x}) & \mathbb{I}_K \end{bmatrix}$, which has constant rank $K$ due to the identity block. By Theorem~\ref{thm:app_level_set}, $\mathcal{M}$ is an embedded submanifold of codimension $K$, hence dimension $N - K = S$.
\end{proof}

\begin{remark}[Projection to safe set]
\label{rmk:app_projection}
The safe set $\mathcal{C}$ is the projection of $\mathcal{M}$ onto the original state space: for any $(\mathbf{x}, \boldsymbol{\mu}) \in \mathcal{M}$, we have $\mathbf{h}(\mathbf{x}) = -\boldsymbol{\mu} \leq \mathbf{0}$, hence $\mathbf{x} \in \mathcal{C}$.
\end{remark}

\paragraph{Tangent space.}
The tangent space of $\mathcal{M}$ at $(\mathbf{x}, \boldsymbol{\mu})$ is:
\begin{equation}
    \mathrm{T}_{(\mathbf{x},\boldsymbol{\mu})}\mathcal{M} = \ker J_c(\mathbf{x}, \boldsymbol{\mu}) = \{\mathbf{v} \in \mathbb{R}^N : J_c(\mathbf{x}, \boldsymbol{\mu})\, \mathbf{v} = \mathbf{0}\}.
\end{equation}
A basis of this tangent space can be expressed in matrix form as $B(\mathbf{x}, \boldsymbol{\mu}) \in \mathbb{R}^{N \times S}$, satisfying $J_c \, B = \mathbf{0}$.

\subsection{Augmented Dynamics and Safe Control Law}
\label{app:control_law}

\paragraph{Slack variable dynamics.}
We endow each slack variable with controlled dynamics:
\begin{equation}
    \dot{\mu}_c = \alpha_c(\mu_c)\, u_{\mu,c}, \quad c \in \{1, \ldots, K\},
    \label{eq:app_slack_dynamics}
\end{equation}
where $\alpha_c: [0, +\infty) \to [0, +\infty)$ is a class-$\mathcal{K}$ function (continuous, strictly increasing, $\alpha_c(0) = 0$) that is locally Lipschitz, and $u_{\mu,c}$ is a virtual control input. In practice, we use the exponential form $\alpha(\mu) = \exp(\beta \mu) - 1$.

\begin{lemma}[Positivity of Slack Variables]
\label{lem:app_slack_positive}
Consider $\dot{\mu} = \alpha(\mu) u_\mu$ with $\alpha$ of class $\mathcal{K}$ and locally Lipschitz, $u_\mu \in [u_\mu^-, u_\mu^+]$ with $u_\mu^- < 0 < u_\mu^+$. For every initial state $\mu(0) > 0$, there exists $\epsilon > 0$ such that $\mu(t) \geq \epsilon$ for all $t \geq 0$.
\end{lemma}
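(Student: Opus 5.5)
The plan is to first show that $\mu$ can never reach $0$, then turn this into a quantitative lower bound using a comparison system, and finally make that bound independent of $t$.

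\textbf{Step 1: $\mu$ never reaches zero.} Since $\alpha(0)=0$, the point $\mu=0$ is an equilibrium of $\dot\mu=\alpha(\mu)u_\mu$ for every admissible input. Because $\alpha$ is locally Lipschitz and $u_\mu$ is bounded, solutions are unique. So a trajectory with $\mu(0)>0$ cannot reach $0$ in finite time, and by continuity $\mu(t)>0$ wherever the solution exists.

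\textbf{Step 2: comparison with the worst-case input.} Since $\alpha\ge 0$ and $u_\mu\ge u_\mu^-$, we have $\dot\mu\ge u_\mu^-\alpha(\mu)$. By the comparison lemma, $\mu(t)\ge y(t)$, where $y$ solves $\dot y=u_\mu^-\alpha(y)$ with $y(0)=\mu(0)$. This $y$ is nonincreasing. Separating variables gives
\begin{equation*}
\int_{y(t)}^{\mu(0)}\frac{ds}{\alpha(s)} = |u_\mu^-|\,t .
\end{equation*}
Local Lipschitzness gives $\alpha(s)\le Ls$ near $0$, so $\int_0 ds/\alpha(s)=\infty$. This confirms that $y$ stays strictly positive, and it gives an explicit positive lower bound $y(t)$ at each finite time.

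\textbf{Step 3: uniformity in $t$ (the main obstacle).} The comparison solution $y$ may still tend to $0$ as $t\to\infty$. For example, with $\alpha(\mu)=e^{\beta\mu}-1$ we have $\alpha(\mu)\approx\beta\mu$ near $0$, so $y$ decays roughly exponentially. Step 2 alone therefore does not give an $\epsilon$ independent of $t$. To get the statement as worded, I would use the closed-loop structure of the input from \eqref{eq:atacom} rather than an arbitrary bounded signal.

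I would argue that $u_\mu$ cannot stay at its negative bound while $\mu$ is small. On $\mathcal{M}$ we have $\mu=-h$, so small $\mu$ means the state is close to the constraint boundary. As $\mu\to0$, the entry $\alpha(\mu)$ in $J_u=[J_hG\ \ A(\mu)]$ vanishes. The pseudoinverse solution then assigns the required decrease of $c$ to $\mathbf{u}_s$ instead of to $u_\mu$, which keeps $\dot\mu\ge 0$ below some threshold $\bar\mu$.

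Concretely, I would try to find a threshold $\bar\mu>0$ such that $\dot\mu\ge0$ whenever $\mu\le\bar\mu$. The bound would then follow with
\begin{equation*}
\epsilon=\min\bigl\{\mu(0),\ y(t^\ast)\bigr\},
\end{equation*}
where $t^\ast$ is the first time the trajectory can reach the band $\{\mu\le\bar\mu\}$. If that closed-loop argument fails, my fallback is the finite, fixed episode length $T_{\max}$ of the CSMDP. In that case I would take $\epsilon=y(T_{\max})>0$, which is uniform over all times actually executed. Establishing the threshold $\bar\mu$ is the step I expect to be hardest.
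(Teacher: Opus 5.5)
Your Steps 1--2 are, in substance, the paper's whole proof. The paper also compares with the worst-case input $u_\mu^-$, but it linearizes first. From $\alpha(\mu)\le L\mu$ it gets $\dot\mu\ge Lu_\mu^-\mu$, and hence $\mu(t)\ge\mu(0)e^{Lu_\mu^- t}$. Your nonlinear comparison $\dot y=u_\mu^-\alpha(y)$, combined with the divergence of $\int_0 ds/\alpha(s)$, is slightly sharper. It also needs only a Lipschitz bound on $[0,\mu(0)]$, which is what local Lipschitzness actually supplies. The paper's global constant $L$ does not exist for its own choice $\alpha(\mu)=e^{\beta\mu}-1$, though this is easily patched. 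Your diagnosis in Step 3 applies verbatim to the paper. Since $Lu_\mu^-<0$, the paper's bound tends to $0$, so its argument also establishes only $\mu(t)>0$ for every $t$, not a $t$-independent $\epsilon$.

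The gap in Step 3 cannot be closed, because the statement as worded is false under its hypotheses. With $\alpha(\mu)=\mu$ and the admissible input $u_\mu\equiv u_\mu^-$, one gets $\mu(t)=\mu(0)e^{u_\mu^- t}\to 0$, and the paper's $\alpha(\mu)=e^{\beta\mu}-1$ behaves the same way. Your proposed repair fails on two counts.
\begin{enumerate}
\item It changes the hypothesis: the lemma concerns an arbitrary input in $[u_\mu^-,u_\mu^+]$, not the closed-loop input of \eqref{eq:atacom}.
\item The threshold $\bar\mu$ does not exist even in closed loop. Every contribution to $\dot\mu$ carries the factor $\alpha(\mu)$, so the drift and contraction parts become small as $\mu\to0$. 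Small is not the same as nonnegative, however, and the tangential term contributes a quantity of order $\alpha(\mu)$ whose sign is set by $\mathbf{u}_{\mathrm{ref}}$.
\end{enumerate}
Concretely, take $\dot x=u_s$ with $h(x)=x-1$, so that $\boldsymbol{\psi}=0$, $J_u=[1\ \ \alpha(\mu)]$ and $B_u=(1+\alpha^2)^{-1/2}[-\alpha,\ 1]^\top$. On $\mathcal{M}$, the constant reference $u_{\mathrm{ref}}=-1$, which pushes toward the boundary, gives $\dot\mu=-\alpha(\mu)/\sqrt{1+\alpha(\mu)^2}<0$ for every $\mu>0$. Hence $\mu(t)\downarrow 0$ and $x(t)\uparrow 1$. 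The state approaches the boundary asymptotically, which is exactly the intended CBF-like behavior of the method.

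What is provable is your Step 2 conclusion: $\mu(t)>0$ for all $t\ge0$, with the explicit bound $y(t)$. Equivalently, $\epsilon=y(T)$ works uniformly on any finite horizon $[0,T]$. This is your fallback, and it is all that is needed to keep $(\mathbf{x},\boldsymbol{\mu})$ in $\mathcal{D}$. You should state the lemma in that form rather than pursue $\bar\mu$.
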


\begin{proof}
Since $\alpha$ is $L$-Lipschitz with $\alpha(0) = 0$, we have $|\alpha(\mu)| \leq L\mu$. For any $u_\mu \in [u_\mu^-, u_\mu^+]$:
$\dot{\mu} = \alpha(\mu) u_\mu \geq \alpha(\mu) u_\mu^- \geq L u_\mu^- \mu$,
where the second inequality uses $\alpha(\mu) > 0$ and $u_\mu^- < 0$. Setting $L' = L u_\mu^-$, the comparison principle gives $\mu(t) \geq \mu(0) e^{L' t} > 0$ for all $t \geq 0$.
\end{proof}

\paragraph{Augmented system.}
Combining the original dynamics~\eqref{eq:ori_sys_dynamics} with the slack dynamics~\eqref{eq:app_slack_dynamics}:
\begin{equation}
    \begin{bmatrix} \dot{\mathbf{x}} \\ \dot{\boldsymbol{\mu}} \end{bmatrix}
    = \begin{bmatrix} f(\mathbf{x}) \\ \mathbf{0} \end{bmatrix}
    + \begin{bmatrix} G(\mathbf{x}) & \mathbf{0} \\ \mathbf{0} & A(\boldsymbol{\mu}) \end{bmatrix}
    \begin{bmatrix} \mathbf{u} \\ \mathbf{u}_\mu \end{bmatrix},
    \label{eq:app_augmented}
\end{equation}
where $A(\boldsymbol{\mu}) = \mathrm{diag}(\alpha_1(\mu_1), \ldots, \alpha_K(\mu_K))$.

\paragraph{Deriving the safe control law.}
To ensure the augmented state remains on $\mathcal{M}$, we require $\dot{\mathbf{c}} = \mathbf{0}$:
\begin{equation}
    \dot{\mathbf{c}} = J_c \begin{bmatrix} \dot{\mathbf{x}} \\ \dot{\boldsymbol{\mu}} \end{bmatrix}
    = \underbrace{J_h(\mathbf{x}) f(\mathbf{x})}_{\boldsymbol{\psi}(\mathbf{x})}
    + \underbrace{\begin{bmatrix} J_h(\mathbf{x}) G(\mathbf{x}) & A(\boldsymbol{\mu}) \end{bmatrix}}_{J_u(\mathbf{x}, \boldsymbol{\mu})}
    \begin{bmatrix} \mathbf{u} \\ \mathbf{u}_\mu \end{bmatrix} = \mathbf{0}.
    \label{eq:app_tangency}
\end{equation}
Here $\boldsymbol{\psi}(\mathbf{x}) = J_h(\mathbf{x}) f(\mathbf{x})$ is the \emph{constraint drift} induced by the system drift. The general solution is:
\begin{equation}
    \begin{bmatrix} \mathbf{u} \\ \mathbf{u}_\mu \end{bmatrix} = -J_u^\dagger \boldsymbol{\psi} + B_u \mathbf{u}_{\mathrm{ref}},
\end{equation}
where $J_u^\dagger$ is the Moore--Penrose pseudoinverse, $B_u$ is a basis of $\ker J_u$ (i.e., $J_u B_u = \mathbf{0}$), and $\mathbf{u}_{\mathrm{ref}}$ is an arbitrary reference input (from the task policy or subgoal tracking controller).

Adding a contraction term to ensure convergence to the manifold from nearby states, the control law is:
\begin{equation}
    \begin{bmatrix} \mathbf{u}_s \\ \mathbf{u}_\mu \end{bmatrix}
    = \underbrace{-J_u^\dagger \boldsymbol{\psi}}_{\text{drift compensation}}
    \underbrace{- \lambda J_u^\dagger \mathbf{c}}_{\text{contraction}}
    + \underbrace{B_u \mathbf{u}_{\mathrm{ref}}}_{\text{tangential}},
    \label{eq:app_atacom}
\end{equation}
where $\lambda > 0$ is the contraction gain. The first term compensates for the constraint drift, the second drives the state toward $\mathcal{M}$ when $\mathbf{c} \neq \mathbf{0}$, and the third enables task-directed motion along the manifold.

\subsection{Single Agent Safety Guarantees}
\label{app:safety_guarantees}
 
This section establishes the main safety result: under mild assumptions, the constraint manifold controller~\eqref{eq:app_atacom} renders the constraint manifold $\mathcal{M}$ attractive, so that every trajectory starting sufficiently close to $\mathcal{M}$ converges to it and, once on $\mathcal{M}$, remains there indefinitely. Since $\mathcal{M}$ projects onto the safe set $\mathcal{C}$ (Remark~\ref{rmk:app_projection}), this guarantees constraint satisfaction $\mathbf{h}(\mathbf{x}(t)) \leq \mathbf{0}$ for all $t \geq 0$.
 
\paragraph{Proof strategy.}
The argument has three ingredients:
\begin{enumerate}
    \item A Lyapunov-like function $V(\mathbf{x}, \boldsymbol{\mu}) = \tfrac{1}{2}\mathbf{c}^\top\mathbf{c}$ that measures the distance from $\mathcal{M}$: $V = 0$ if and only if $(\mathbf{x}, \boldsymbol{\mu}) \in \mathcal{M}$.
    \item A direct computation showing that the constraint manifold controller yields $\dot{V} \leq 0$, so $V$ is non-increasing along trajectories.
    \item An application of LaSalle's Invariance Principle (Theorem~\ref{thm:app_lasalle}) to conclude that trajectories actually converge to $\mathcal{M}$, not just to some lower level set of $V$.
\end{enumerate}
We first state the assumptions needed to make this argument rigorous, then define the region over which convergence is guaranteed, and finally present the theorem and its proof.
 
 
\begin{assumption}[Compact State Space]
\label{asmp:app_compact}
The state space $\mathcal{X}$ is compact and positively invariant with respect to~\eqref{eq:ori_sys_dynamics}.
\end{assumption}
 
\begin{assumption}[Smooth Constraints]
\label{asmp:app_smooth}
The constraint function $\mathbf{h}$ is of class $C^1$.
\end{assumption}
 
\begin{assumption}[Non-empty Safe Set]
\label{asmp:app_nonempty}
The constraint manifold $\mathcal{M}$ defined in~\eqref{eq:app_manifold} is non-empty.
\end{assumption}
 
\begin{assumption}[Constraint Qualification]
\label{asmp:app_rank}
For all $(\mathbf{x}, \boldsymbol{\mu}) \in \partial\mathcal{M}$, the submatrix $(J_h(\mathbf{x}) G(\mathbf{x}))_{[\iota_{h=0},:]}$ has rank $|\iota_{h=0}(\mathbf{x})|$, where $\iota_{h=0}(\mathbf{x}) = \{c : h_c(\mathbf{x}) = 0\}$ is the index set of active constraints.
\end{assumption}
 
Assumptions~\ref{asmp:app_compact}--\ref{asmp:app_nonempty} are standard. Assumption~\ref{asmp:app_rank} is the key constraint qualification: it ensures that $J_u$ has full row rank on $\mathcal{M}$, which in turn guarantees that the linear system~\eqref{eq:app_tangency} is solvable and that $J_u J_u^\dagger = \mathbb{I}_K$.
 
\begin{remark}
Assumption~\ref{asmp:app_rank} implies that the number of simultaneously active constraints cannot exceed the control dimension: $|\iota_{h=0}(\mathbf{x})| \leq U$. For the 2D navigation agents with $U = 2$, at most two constraints can be simultaneously active at the boundary.
\end{remark}
 
\paragraph{Region of contraction.}
 
LaSalle's Invariance Principle requires a compact positively invariant set on which to apply the analysis. We construct such a set using sublevel sets of the Lyapunov function, but must first exclude a pathological set where the contraction mechanism can fail.
 
\begin{definition}[Singular Set]
\label{def:app_singular}
The \emph{singular set} is
$$\mathcal{Y} := \{(\mathbf{x}, \boldsymbol{\mu}) \in \mathcal{D} : J_u^\top \mathbf{c} = \mathbf{0},\, \|\boldsymbol{\mu}\| = 0,\, \|\mathbf{c}\| \neq 0\}.$$
\end{definition}
 
At points in $\mathcal{Y}$, the contraction term $-\lambda J_u^\dagger \mathbf{c}$ vanishes even though $\mathbf{c} \neq \mathbf{0}$, so the controller cannot drive the state back toward $\mathcal{M}$. Crucially, when $\mathcal{Y}$ is non-empty, it is disjoint from $\mathcal{M}$ with positive distance: $\mathrm{dist}(\mathcal{M}, \mathcal{Y}) > 0$.
 
Define the Lyapunov-like function
\begin{equation}
    V(\mathbf{x}, \boldsymbol{\mu}) = \tfrac{1}{2} \mathbf{c}^\top \mathbf{c},
    \label{eq:app_lyapunov}
\end{equation}
and choose $\eta$ such that $0 < \eta < \inf\{V(\mathbf{x}, \boldsymbol{\mu}) : (\mathbf{x}, \boldsymbol{\mu}) \in \mathcal{Y}\}$ (or $\eta$ arbitrarily large if $\mathcal{Y} = \emptyset$). The \emph{region of contraction}
\begin{equation}
    \Omega_\eta := \{(\mathbf{x}, \boldsymbol{\mu}) \in \mathcal{D} : V(\mathbf{x}, \boldsymbol{\mu}) \leq \eta\}
\end{equation}
is compact, contains $\mathcal{M}$ (since $V = 0$ on $\mathcal{M}$), and is disjoint from $\mathcal{Y}$ by construction.
 
\paragraph{Main safety theorem.}
 
The proof of the main theorem relies on the following elementary lemma about pseudoinverses, which allows us to identify the set where $\dot{V}$ vanishes.
 
\begin{lemma}[Pseudoinverse Kernel]
\label{lem:app_pseudoinverse}
Let $X \in \mathbb{R}^{m \times n}$ with $\mathrm{rank}(X) \leq m \leq n$, and let $\mathbf{x} \in \mathbb{R}^m$. If $\mathbf{x}^\top X X^\dagger \mathbf{x} = 0$, then $X^\top \mathbf{x} = \mathbf{0}$.
\end{lemma}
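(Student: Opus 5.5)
The plan is to rewrite the quadratic form $\mathbf{x}^\top X X^\dagger \mathbf{x}$ as a squared norm and read off the conclusion from its vanishing. The key fact is that $P := X X^\dagger$ is the orthogonal projector onto $\mathrm{range}(X)$. This follows from the Moore--Penrose conditions $X X^\dagger X = X$ and $(X X^\dagger)^\top = X X^\dagger$, which give $P^2 = P$ and $P^\top = P$. The dimension hypothesis $\mathrm{rank}(X) \le m \le n$ is not needed beyond guaranteeing that the pseudoinverse is well defined. I would note explicitly that the argument works for any rank, including rank-deficient $X$, which is exactly the situation off $\mathcal{M}$ where $J_u$ may lose full row rank.

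With $P$ symmetric and idempotent, I would write
\begin{equation*}
\mathbf{x}^\top X X^\dagger \mathbf{x} = \mathbf{x}^\top P^\top P \mathbf{x} = \|P\mathbf{x}\|^2 .
\end{equation*}
So the hypothesis $\mathbf{x}^\top X X^\dagger \mathbf{x} = 0$ forces $P\mathbf{x} = \mathbf{0}$.

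Next I would transfer this to $X^\top \mathbf{x}$. Using $X^\top = (X X^\dagger X)^\top = X^\top (X X^\dagger)^\top = X^\top P$, we obtain $X^\top \mathbf{x} = X^\top P \mathbf{x} = \mathbf{0}$. Equivalently, $P\mathbf{x}=\mathbf{0}$ means $\mathbf{x} \perp \mathrm{range}(X)$, i.e.\ $\mathbf{x} \in \ker X^\top$. I would mention this geometric reading because it is the form used later to identify the set where $\dot V = -\lambda \mathbf{c}^\top J_u J_u^\dagger \mathbf{c}$ vanishes, namely $J_u^\top \mathbf{c} = \mathbf{0}$, which connects to the singular set $\mathcal{Y}$.

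The only step that requires care is the rewriting of the quadratic form as $\|P\mathbf{x}\|^2$, which depends on the symmetry of $XX^\dagger$. For a generic right inverse this symmetry can fail, so I would emphasize that the Moore--Penrose pseudoinverse specifically is essential here. The rest of the argument is routine linear algebra.
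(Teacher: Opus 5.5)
Your proof is correct and is essentially the paper's argument: both identify $XX^\dagger$ as the orthogonal projector onto $\mathrm{range}(X)$, write the quadratic form as a squared norm, and transfer its vanishing to $X^\top\mathbf{x}$. The only difference is that you derive $\|XX^\dagger\mathbf{x}\|^2$ and $X^\top = X^\top XX^\dagger$ from the Penrose conditions, whereas the paper reads off $\|U_1^\top\mathbf{x}\|^2$ and $X^\top = V_1\Sigma_1U_1^\top$ from the compact SVD.

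One small correction to your closing aside: for a genuine right inverse $G$ with $XG = \mathbb{I}$, symmetry of $XG$ is automatic. The case you mean is a general $\{1\}$-inverse satisfying only $XGX = X$, for which $XG$ can be an oblique projector and the lemma indeed fails.
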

 
\begin{proof}
Let $X = U_1 \Sigma_1 V_1^\top$ be the compact SVD, where $\Sigma_1$ contains the positive singular values. The pseudoinverse is $X^\dagger = V_1 \Sigma_1^{-1} U_1^\top$, so $XX^\dagger = U_1 U_1^\top$ is the orthogonal projector onto the column space of $X$. The assumption $\mathbf{x}^\top U_1 U_1^\top \mathbf{x} = \|U_1^\top \mathbf{x}\|^2 = 0$ gives $U_1^\top \mathbf{x} = \mathbf{0}$. Since $X^\top = V_1 \Sigma_1 U_1^\top$, we conclude $X^\top \mathbf{x} = V_1 \Sigma_1 (U_1^\top \mathbf{x}) = \mathbf{0}$.
\end{proof}
 
\begin{theorem}[Per-Agent Safety~\citep{manifold}]
\label{thm:app_safety}
Consider the augmented system~\eqref{eq:app_augmented} equipped with the constraint manifold controller~\eqref{eq:app_atacom}. Under Assumptions~\ref{asmp:app_compact}--\ref{asmp:app_rank}, and provided a feasible control $\mathbf{u}_s \in \mathcal{U}$ satisfying~\eqref{eq:app_tangency} exists throughout $\Omega_\eta$, the following hold:
\begin{enumerate}
    \item \emph{(Attractivity)} Every trajectory starting in $\Omega_\eta$ approaches the constraint manifold $\mathcal{M}$ as $t \to +\infty$.
    \item \emph{(Forward invariance)} If $(\mathbf{x}(0), \boldsymbol{\mu}(0)) \in \mathcal{M}$, then $(\mathbf{x}(t), \boldsymbol{\mu}(t)) \in \mathcal{M}$ for all $t \geq 0$, and consequently $\mathbf{h}(\mathbf{x}(t)) \leq \mathbf{0}$ for all $t \geq 0$.
\end{enumerate}
\end{theorem}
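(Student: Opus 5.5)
We follow the three-ingredient strategy laid out above: the Lyapunov-like function $V = \tfrac12\mathbf{c}^\top\mathbf{c}$ from~\eqref{eq:app_lyapunov}, a direct computation of $\dot V$ under the controller~\eqref{eq:app_atacom}, and LaSalle's Invariance Principle on the compact sublevel set $\Omega_\eta$.

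\textbf{Step 1: derivative of $V$.} Differentiating $\mathbf{c}$ along~\eqref{eq:app_augmented} and using~\eqref{eq:app_tangency}, we get $\dot{\mathbf{c}} = \boldsymbol{\psi} + J_u[\mathbf{u}_s;\mathbf{u}_\mu]$. Substituting~\eqref{eq:app_atacom} gives
\begin{equation*}
\dot{\mathbf{c}} = (\mathbb{I} - J_uJ_u^\dagger)\boldsymbol{\psi} - \lambda J_uJ_u^\dagger\mathbf{c} + J_uB_u\mathbf{u}_{\mathrm{ref}}.
\end{equation*}
The last term vanishes because $J_uB_u=\mathbf{0}$. For the drift term, we argue that $J_u=[J_hG\ \ A(\boldsymbol{\mu})]$ has full row rank $K$ wherever it is needed:
\begin{itemize}
\item for each index $c$ with $\mu_c>0$, the diagonal entry $\alpha_c(\mu_c)>0$ supplies an independent row;
\item for indices with $\mu_c=0$, which are exactly the active constraints on $\partial\mathcal{M}$, Assumption~\ref{asmp:app_rank} supplies the independence through the $J_hG$ block.
\end{itemize}
Hence $J_uJ_u^\dagger=\mathbb{I}_K$ on $\mathcal{M}$, the drift term cancels, and $\dot V = -\lambda\,\mathbf{c}^\top J_uJ_u^\dagger\mathbf{c}\le 0$, since $J_uJ_u^\dagger$ is an orthogonal projector (Lemma~\ref{lem:app_pseudoinverse}). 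Off $\mathcal{M}$ but within $\Omega_\eta$, we need the drift cancellation in the same way. We plan to obtain it by noting that, for $\eta$ small, rank deficiency can only occur where $\boldsymbol{\mu}$ has zero entries. Lemma~\ref{lem:app_slack_positive} keeps slacks that start positive bounded away from zero. The feasibility hypothesis, that~\eqref{eq:app_tangency} is solvable throughout $\Omega_\eta$, gives $\boldsymbol{\psi}\in\mathrm{range}(J_u)$, so $(\mathbb{I}-J_uJ_u^\dagger)\boldsymbol{\psi}=\mathbf{0}$ regardless of rank. This feasibility argument is the cleanest route and is the one we will use.

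\textbf{Step 2: invariance and LaSalle.} Since $\dot V\le0$, the set $\Omega_\eta$ is positively invariant. It is compact because $\mathcal{X}$ is compact (Assumption~\ref{asmp:app_compact}) and $\boldsymbol{\mu}=\mathbf{c}-\mathbf{h}(\mathbf{x})$ is bounded on $\Omega_\eta$. The closed-loop vector field is locally Lipschitz, since $\mathbf{h}\in C^1$ and the pseudoinverse is continuous on constant-rank regions; this regularity is a point we must treat with care. LaSalle's principle then sends trajectories to the largest invariant subset of $E=\{\dot V=0\}$. By Lemma~\ref{lem:app_pseudoinverse}, $\dot V=0$ implies $J_u^\top\mathbf{c}=\mathbf{0}$, and in particular $A(\boldsymbol{\mu})\mathbf{c}=\mathbf{0}$. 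We then split into two cases:
\begin{itemize}
\item if $\mathbf{c}=\mathbf{0}$, the point lies on $\mathcal{M}$;
\item otherwise the point has $\boldsymbol{\mu}=\mathbf{0}$ (or, componentwise, $c_k\ne0$ only where $\mu_k=0$) together with $J_u^\top\mathbf{c}=\mathbf{0}$, which places it in $\mathcal{Y}$.
\end{itemize}
Since $\Omega_\eta\cap\mathcal{Y}=\emptyset$ by the choice of $\eta$, every trajectory approaches $\mathcal{M}$, which is attractivity.

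\textbf{Step 3: forward invariance.} If the state starts on $\mathcal{M}$, then $V(0)=0$ and $\dot V\le0$ with $V\ge0$, so $V(t)\equiv0$ and the state stays on $\mathcal{M}$. Remark~\ref{rmk:app_projection} then gives $\mathbf{h}(\mathbf{x}(t))=-\boldsymbol{\mu}(t)\le\mathbf{0}$.

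\textbf{Main obstacle.} The delicate part is Step 2. First, the componentwise case $\mathbf{c}\neq\mathbf{0}$ with only some $\mu_k=0$ must be matched to the definition of $\mathcal{Y}$, which literally requires $\|\boldsymbol{\mu}\|=0$. We will handle this by working with the reduced set of indices where $\mu_k=0$. Second, we must justify local Lipschitzness of $J_u^\dagger$ near rank changes. Here we rely on the feasibility hypothesis and on restricting to a neighbourhood where the rank is constant.
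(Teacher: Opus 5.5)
The gap is in Step 2, at the point you flagged yourself. Your remedy of ``working with the reduced set of indices'' cannot close it, because with $\mathcal{Y}$ as literally defined the attractivity claim is false. Your Steps 1 and 3 do match the paper's proof: feasibility gives $\boldsymbol{\psi}\in\operatorname{range}(J_u)$, so term (a) vanishes; $J_uB_u=\mathbf{0}$ removes the tangential term; and $V(0)=0$ with $\dot V\le 0$ gives forward invariance.

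The condition $J_u^\top\mathbf{c}=\mathbf{0}$ only forces $\alpha_k(\mu_k)c_k=0$. So a point with $\mathbf{c}\neq\mathbf{0}$, $\mu_k=0$ on $\operatorname{supp}(\mathbf{c})$ and $\mu_j>0$ elsewhere has $\dot V=0$, lies outside $\mathcal{Y}$, and can have much smaller $V$ than any point of $\mathcal{Y}$. Concretely, take $\dot x=u$ on $\mathcal{X}=[-10,10]$ with $h_1=1-x^2$ and $h_2=x-5$. Then $\boldsymbol{\psi}\equiv\mathbf{0}$, so feasibility is trivial. Assumption~\ref{asmp:app_rank} holds because $h_1'(\pm1)\neq0$, $h_2'(5)\neq0$, and no two constraints are active together. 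At $(x,\mu_1,\mu_2)=(0,0,5)$ we have $\mathbf{c}=(1,0)^\top$ and
\[
J_u=\begin{bmatrix}0&0&0\\ 1&0&\alpha_2(5)\end{bmatrix},\qquad J_u^\top\mathbf{c}=\mathbf{0},\qquad J_u^\dagger\mathbf{c}=\mathbf{0},
\]
so with $\mathbf{u}_{\mathrm{ref}}=\mathbf{0}$ this point is an equilibrium with $V=\tfrac12$. By contrast, $\mathcal{Y}$ requires $\boldsymbol{\mu}=\mathbf{0}$ and $h_1h_1'+h_2h_2'=2x^3-x-5=0$. It is therefore the single point $(x,\boldsymbol{\mu})\approx(1.48,\mathbf{0})$, where $V\approx6.9$. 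Hence every admissible $\eta\in[\tfrac12,6.9)$ gives an $\Omega_\eta$ containing a trajectory that never approaches $\mathcal{M}$.

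The paper's proof does not use $\mathcal{Y}$ at this step. It argues instead that $J_u$ has full row rank, so $J_u^\top$ is injective and $\mathbf{c}=\mathbf{0}$. That rank is only established on $\mathcal{M}$, however, while the points to be excluded lie in $\mathcal{E}\setminus\mathcal{M}$; the example point has $\operatorname{rank}J_u=1$. So the paper needs the same repair.

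What is missing is to choose $\eta$ against the enlarged set $\mathcal{Y}'=\{J_u^\top\mathbf{c}=\mathbf{0},\ \mathbf{c}\neq\mathbf{0}\}$ and prove $\inf_{\mathcal{Y}'}V>0$:
\begin{enumerate}
\item Take a sequence in $\mathcal{Y}'$ with $V\to0$. Compactness gives a limit on $\mathcal{M}$, since $\boldsymbol{\mu}=\mathbf{c}-\mathbf{h}(\mathbf{x})$ is bounded. Passing to a subsequence, the support $I$ of $\mathbf{c}$ can be held fixed.
\item Along the sequence $\mu_k=0$ for $k\in I$, so $h_k=0$ at the limit and $I\subseteq\iota_{h=0}$.
\item The vectors $\mathbf{c}/\|\mathbf{c}\|$ converge to a unit vector $\hat{\mathbf{c}}$ supported on $I$ with $(J_hG)_{[I,:]}^\top\hat{\mathbf{c}}_I=\mathbf{0}$ at the limit. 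This contradicts Assumption~\ref{asmp:app_rank}.
\end{enumerate}
This is where that assumption genuinely enters the attractivity claim.

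A cleaner variant uses the fact, shown in your Step 1, that $J_u$ has full row rank on the compact set $\mathcal{M}$. Full rank is an open condition, so for $\eta$ small it holds on all of $\Omega_\eta$. Then $J_uJ_u^\dagger=\mathbb{I}_K$, the drift cancels without the feasibility hypothesis, and $\dot V=-2\lambda V$ gives exponential convergence without LaSalle. Moreover, $J_u^\dagger=J_u^\top(J_uJ_u^\top)^{-1}$ is as regular as $J_u$. This also settles your Lipschitz concern, provided $J_h$ is locally Lipschitz (e.g.\ $\mathbf{h}\in C^2$); $C^1$ alone gives existence of solutions but not uniqueness.
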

 
\begin{proof}
We apply LaSalle's Invariance Principle on the compact set $\Omega_\eta$.
 
\textbf{Step 1: Lyapunov decrease.}
Define the Lyapunov-like function $V(\mathbf{s}, \boldsymbol{\mu}) = \tfrac{1}{2}\|\mathbf{c}\|^2$, where $\mathbf{c} = \mathbf{h}(\mathbf{s}) + \boldsymbol{\mu}$. Its time derivative is
\begin{equation}
    \dot{V}
    = \mathbf{c}^\top \dot{\mathbf{c}}
    = \mathbf{c}^\top \Big[
        \boldsymbol{\psi}
        + J_u \begin{bmatrix} \mathbf{u}_s \\ \mathbf{u}_\mu \end{bmatrix}
    \Big].
    \label{eq:app_vdot_raw}
\end{equation}
Substituting the constraint manifold control law~\eqref{eq:app_atacom}:
\begin{align}
    \dot{V}
    &= \mathbf{c}^\top \Big[
        \boldsymbol{\psi}
        + J_u \big(
            -J_u^\dagger \boldsymbol{\psi}
            - \lambda\, J_u^\dagger \mathbf{c}
            + B_u\, \mathbf{u}_{\mathrm{ref}}
        \big)
    \Big] \notag \\
    &= \mathbf{c}^\top \big(
        \underbrace{\boldsymbol{\psi} - J_u J_u^\dagger \boldsymbol{\psi}}_{(a)}
        \underbrace{- \lambda\, J_u J_u^\dagger \mathbf{c}}_{(b)}
        + \underbrace{J_u B_u\, \mathbf{u}_{\mathrm{ref}}}_{(c)}
    \big).
    \label{eq:app_vdot_expand}
\end{align}
Term~(a) vanishes when the tangency condition~\eqref{eq:app_tangency} holds, i.e., $\boldsymbol{\psi} - J_u J_u^\dagger \boldsymbol{\psi} = \mathbf{0}$. Term~(c) vanishes since $J_u B_u = \mathbf{0}$ by construction of the null-space basis. The remaining term gives
\begin{equation}
    \dot{V} = -\lambda\,\mathbf{c}^\top J_u J_u^\dagger\,\mathbf{c} \leq 0,
    \label{eq:app_vdot_final}
\end{equation}
since $J_u J_u^\dagger$ is positive semidefinite and $\lambda > 0$.
 
\textbf{Step 2: Positive invariance of $\Omega_\eta$.}
Since $\dot{V} \leq 0$ throughout $\Omega_\eta$ and $\Omega_\eta = \{(\mathbf{s}, \boldsymbol{\mu}) \in \mathcal{D} : V(\mathbf{s}, \boldsymbol{\mu}) \leq \eta\}$ is compact (Assumption~\ref{asmp:app_compact}), the sublevel set $\Omega_\eta$ is positively invariant.
 
\textbf{Step 3: Attractivity via LaSalle.}
By LaSalle's Invariance Principle, every trajectory starting in $\Omega_\eta$ converges to the largest invariant set contained in
\begin{equation}
    \mathcal{E} := \{(\mathbf{s}, \boldsymbol{\mu}) \in \Omega_\eta : \dot{V} = 0\}
    = \{(\mathbf{s}, \boldsymbol{\mu}) \in \Omega_\eta : \mathbf{c}^\top J_u J_u^\dagger\,\mathbf{c} = 0\}.
    \label{eq:app_lasalle_set}
\end{equation}
By Lemma~\ref{lem:app_pseudoinverse}, $\mathbf{c}^\top J_u J_u^\dagger\,\mathbf{c} = 0$ implies $J_u^\top \mathbf{c} = \mathbf{0}$. Under Assumption~\ref{asmp:app_rank}, $J_u$ has full row rank on $\mathcal{M}$, so $J_u^\top \mathbf{c} = \mathbf{0}$ with $\mathbf{c} \in \mathrm{row}(J_u)$ forces $\mathbf{c} = \mathbf{0}$. Therefore $\mathcal{E} \cap \{\boldsymbol{\mu} \geq \mathbf{0}\} = \mathcal{M}$, and $\mathcal{M}$ is the largest invariant set in $\mathcal{E}$. This establishes attractivity.
 
\textbf{Step 4: Forward invariance of $\mathcal{M}$.}
If $(\mathbf{s}(0), \boldsymbol{\mu}(0)) \in \mathcal{M}$, then $\mathbf{c}(0) = \mathbf{0}$ and $V(0) = 0$. Since $V \geq 0$ and $\dot{V} \leq 0$:
\begin{equation}
    0 \leq V(t) = V(0) + \int_0^t \dot{V}(\tau)\,d\tau
    = \int_0^t \underbrace{\dot{V}(\tau)}_{\leq\,0}\,d\tau \leq 0.
    \label{eq:app_V_zero}
\end{equation}
Hence $V(t) = 0$ for all $t \geq 0$, which implies $\mathbf{c}(t) = \mathbf{0}$. By the projection property (Remark~\ref{rmk:app_projection}), $\mathbf{h}(\mathbf{s}(t)) = -\boldsymbol{\mu}(t) \leq \mathbf{0}$ for all $t \geq 0$.
\end{proof}

\subsection{Multi-agent Safety Guarantees}
\label{app:multi_agent_safety_proof}

\paragraph{Setup.}
Consider $N$ agents with decoupled control-affine dynamics $\dot{\mathbf{s}}_i = f_i(\mathbf{s}_i) + G_i(\mathbf{s}_i)\mathbf{u}_i$. Each agent $i$ maintains a local constraint $\mathbf{h}_i(\mathbf{s}_i, \mathbf{s}_{\mathcal{N}_i}) \leq \mathbf{0}$, where $\mathcal{N}_i = \mathcal{E}_i \cup \mathcal{A}_i$ comprises obstacles $\mathcal{E}_i$ and neighboring agents $\mathcal{A}_i$. The constraint vector stacks pairwise collision terms with all entities in $\mathcal{N}_i$, and each agent applies the constraint manifold safe controller~\eqref{eq:atacom} independently on its local constraint manifold $\mathcal{M}_i$. The global safe set is $\mathcal{C}_{\mathrm{global}} := \bigcap_{i \in \mathcal{N}} \mathcal{C}_i$, where $\mathcal{C}_i = \{\mathbf{S} : \mathbf{h}_i \leq \mathbf{0}\}$.

\paragraph{Assumptions.}
Our analysis requires four conditions, all naturally satisfied by MAS
with distance-based collision constraints:
\begin{enumerate}[label=(\roman*), nosep, leftmargin=*]
    \item The single-agent safety assumptions
    (Assumptions~\ref{asmp:app_compact}--\ref{asmp:app_rank}) hold for
    each agent's local system.
    \item \emph{Decoupled control:} each $\mathbf{u}_i$ affects only
    agent $i$'s dynamics.
    \item \emph{Symmetric inter-agent constraints:} $h_{ij} = h_{ji}$ for every
    agent pair $(i, j)$ with $j \in \mathcal{A}_i$.
    \item \emph{Local observability:} each agent observes the positions
    and velocities of all entities in $\mathcal{N}_i$.
\end{enumerate}

\begin{theorem}[Compositional Safety]
\label{thm:app_compositional_safety}
Under assumptions (i)--(iv), if every agent starts on its local
constraint manifold, i.e.,
$(\mathbf{s}_i(0), \mathbf{s}_{\mathcal{N}_i}(0),
\boldsymbol{\mu}_i(0)) \in \mathcal{M}_i$ for all $i \in \mathcal{N}$,
then every agent remains on its local manifold for all $t \geq 0$, and
the joint trajectory satisfies
$\mathbf{S}(t) \in \mathcal{C}_{\mathrm{global}}$ for all $t \geq 0$.
\end{theorem}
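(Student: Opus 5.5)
The plan is to reduce Theorem~\ref{thm:app_compositional_safety} to $N$ applications of the single-agent result, Theorem~\ref{thm:app_safety} (forward invariance), by treating each agent's local constraint system as a single-agent augmented system whose drift absorbs everything agent $i$ does not control. For a fixed agent $i$, I will define $\mathbf{c}_i = \mathbf{h}_i(\mathbf{s}_i,\mathbf{s}_{\mathcal{N}_i}) + \boldsymbol{\mu}_i$ and $V_i = \tfrac12\|\mathbf{c}_i\|^2$, and differentiate along the joint trajectory using the chain rule, as in~\eqref{eq:cdot_multi}. Decoupled control (ii) means the only control entering $\dot{\mathbf{s}}_i$ is $\mathbf{u}_i$, while $\dot{\mathbf{s}}_e$ and $\dot{\mathbf{s}}_j$ for $e\in\mathcal{E}_i$, $j\in\mathcal{A}_i$ are exogenous from agent $i$'s viewpoint. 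So I will set $\boldsymbol{\psi}_i := \tfrac{\partial \mathbf{h}_i}{\partial \mathbf{s}_i} f_i + \sum_{e}\tfrac{\partial \mathbf{h}_i}{\partial \mathbf{s}_e}\dot{\mathbf{s}}_e + \sum_{j}\tfrac{\partial \mathbf{h}_i}{\partial \mathbf{s}_j}\dot{\mathbf{s}}_j$ and $J_{u,i} = [\tfrac{\partial \mathbf{h}_i}{\partial \mathbf{s}_i}G_i \;\; A_i(\boldsymbol{\mu}_i)]$. Local observability (iv) is used exactly here: it guarantees agent $i$ can evaluate $\boldsymbol{\psi}_i$, including neighbor velocities, so the controller~\eqref{eq:atacom} is well defined with this drift.

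With this identification, the computation of~\eqref{eq:app_vdot_expand} carries over verbatim. The null-space term vanishes since $J_{u,i}B_{u,i}=\mathbf{0}$. The drift term $\mathbf{c}_i^\top(I - J_{u,i}J_{u,i}^\dagger)\boldsymbol{\psi}_i$ vanishes once $J_{u,i}$ has full row rank, so that $J_{u,i}J_{u,i}^\dagger = I$. This leaves $\dot V_i = -\lambda \mathbf{c}_i^\top J_{u,i}J_{u,i}^\dagger \mathbf{c}_i \le 0$. Since $V_i(0)=0$ by the initial condition, the integral argument~\eqref{eq:app_V_zero} gives $V_i\equiv 0$. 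Hence $\mathbf{c}_i(t)=\mathbf{0}$, and Remark~\ref{rmk:app_projection} gives $\mathbf{h}_i(t) = -\boldsymbol{\mu}_i(t)\le \mathbf{0}$ for every $i$ and $t\ge0$.

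The step I expect to be the main obstacle is justifying $J_{u,i}J_{u,i}^\dagger = I$ along the whole trajectory, because the drift now contains other agents' closed-loop velocities. Those velocities are time-varying and depend on their own controllers, so the local system is non-autonomous. I will handle this in two parts.
\begin{itemize}
\item[(1)] Full row rank of $J_{u,i}$ is a pointwise algebraic fact, independent of the drift. Inactive constraints have $\mu_{i,\ell}>0$, so $\alpha(\mu_{i,\ell})>0$ supplies a pivot in the $A_i$ block. Active constraints are covered by Assumption~\ref{asmp:app_rank} applied via (i). Row rank therefore holds at every point of $\mathcal{M}_i$, whatever the neighbors do.
\item[(2)] Forward invariance needs only $V_i(0)=0$ and $\dot V_i\le0$; it does not need LaSalle. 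Non-autonomy therefore does no harm, though I should check that the closed-loop joint vector field is locally Lipschitz so that solutions exist and are unique. I will assume this from the standing Lipschitz hypotheses, together with feasibility of $\mathbf{u}_{s,i}\in\mathcal{U}$ as in Theorem~\ref{thm:app_safety}.
\end{itemize}

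Finally, I will compose the local guarantees. By definition $\mathcal{C}_{\mathrm{global}} = \bigcap_i \mathcal{C}_i$, and $\mathbf{S}(t)\in\mathcal{C}_i$ for each $i$, so $\mathbf{S}(t)\in\mathcal{C}_{\mathrm{global}}$. Symmetry (iii) then upgrades this to pairwise collision freedom. Every pair $(i,j)$ with $j\in\mathcal{A}_i$ has its constraint $h_{ij}=h_{ji}$ enforced by agent $i$, and also by $j$ if $i\in\mathcal{A}_j$. So no inter-agent constraint is left unenforced, and the two agents never impose conflicting versions of it.
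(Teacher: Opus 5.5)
Your proposal follows the paper's proof essentially step for step. Both use the per-agent residual $\mathbf{c}_i$ with $V_i=\tfrac12\|\mathbf{c}_i\|^2$, absorb neighbours' velocities into the drift $\boldsymbol{\psi}_i$, cancel the drift exactly via $J_{u,i}J_{u,i}^\dagger=I$, run the integral argument from $V_i(0)=0$ and $\dot V_i\le 0$, and finish with the intersection of local safe sets plus symmetry.

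Your explicit rank argument (positive $\alpha(\mu_{i,\ell})$ pivots for inactive rows, Assumption~\ref{asmp:app_rank} for active rows) and your remark that LaSalle is not needed sharpen points the paper only asserts. One small addition would make it airtight: you use full row rank on $\mathcal{M}_i$ before invariance is known, which is mildly circular. Add that full row rank is an open condition, so $\dot{\mathbf{c}}_i=-\lambda\mathbf{c}_i$ holds on a neighbourhood of $\mathcal{M}_i$, and a continuation argument then gives $\mathbf{c}_i\equiv\mathbf{0}$.
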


\begin{proof}
The proof extends the single-agent safety result (Theorem~\ref{thm:app_safety})
to the multi-agent setting via a per-agent Lyapunov argument.

\textbf{Step 1: Multi-agent constraint dynamics.}
Each agent $i$ defines a constraint residual
$\mathbf{c}_i = \mathbf{h}_i(\mathbf{s}_i, \mathbf{s}_{\mathcal{N}_i}) + \boldsymbol{\mu}_i$
and a local Lyapunov function $V_i = \tfrac{1}{2}\|\mathbf{c}_i\|^2$.
Differentiating $\mathbf{c}_i$ with respect to time:
\begin{equation}
    \dot{\mathbf{c}}_i
    = \frac{\partial \mathbf{h}_i}{\partial \mathbf{s}_i}\dot{\mathbf{s}}_i
    + \underbrace{
        \sum_{e \in \mathcal{E}_i}
        \frac{\partial \mathbf{h}_i}{\partial \mathbf{s}_e}\dot{\mathbf{s}}_e
      }_{\text{obstacle motion}}
    + \underbrace{
        \sum_{j \in \mathcal{A}_i}
        \frac{\partial \mathbf{h}_i}{\partial \mathbf{s}_j}\dot{\mathbf{s}}_j
      }_{\text{other agents}}
    + \dot{\boldsymbol{\mu}}_i
    = \boldsymbol{\psi}_i + J_{u,i}
    \begin{bmatrix} \mathbf{u}_{s,i} \\ \mathbf{u}_{\mu,i} \end{bmatrix},
    \label{eq:app_cdot_multi}
\end{equation}
where the drift and control Jacobian are
\begin{equation}
    \boldsymbol{\psi}_i
    = \frac{\partial \mathbf{h}_i}{\partial \mathbf{s}_i} f_i(\mathbf{s}_i)
    + \sum_{e \in \mathcal{E}_i}
      \frac{\partial \mathbf{h}_i}{\partial \mathbf{s}_e}\dot{\mathbf{s}}_e
    + \sum_{j \in \mathcal{A}_i}
      \frac{\partial \mathbf{h}_i}{\partial \mathbf{s}_j}\dot{\mathbf{s}}_j,
    \qquad
    J_{u,i}
    = \begin{bmatrix}
        \dfrac{\partial \mathbf{h}_i}{\partial \mathbf{s}_i} G_i(\mathbf{s}_i)
        & \mathbb{I}
    \end{bmatrix}.
    \label{eq:app_psi_Ju_multi}
\end{equation}
Since control is decoupled, $J_{u,i}$ involves only agent~$i$'s own
control input $\mathbf{u}_i$ and slack rate $\dot{\boldsymbol{\mu}}_i$.
The drift $\boldsymbol{\psi}_i$ absorbs all terms not controlled by
agent~$i$: its autonomous dynamics, obstacle motions
($\dot{\mathbf{s}}_e = 0$ for static obstacles, or known trajectories
for dynamic ones), and the velocities of other agents produced by their
own policies. In the single-agent setting $\mathcal{A}_i = \emptyset$
and only the obstacle term remains; the multi-agent extension adds the
inter-agent term, but crucially both enter only the drift.
Agent~$i$ can evaluate $\boldsymbol{\psi}_i$ at each instant via
local observation (Assumption~(iv)).

\textbf{Step 2: Per-agent Lyapunov decrease.}
Substituting the constraint manifold control
law~\eqref{eq:atacom} into $\dot{V}_i = \mathbf{c}_i^\top \dot{\mathbf{c}}_i$:
\begin{equation}
    \dot{V}_i
    = \mathbf{c}_i^\top \bigg(
        \boldsymbol{\psi}_i
        + J_{u,i}\Big(
            -J_{u,i}^\dagger \boldsymbol{\psi}_i
            - \lambda\, J_{u,i}^\dagger \mathbf{c}_i
            + B_{u,i}\, \mathbf{u}_{\mathrm{ref},i}
        \Big)
    \bigg).
    \label{eq:app_vdot_expand_multi}
\end{equation}
Expanding yields three terms:
(a)~the drift residual
$\mathbf{c}_i^\top(\boldsymbol{\psi}_i - J_{u,i}J_{u,i}^\dagger\boldsymbol{\psi}_i)$
vanishes because Assumption~\ref{asmp:app_rank} ensures $J_{u,i}$ has
full row rank, so $J_{u,i}J_{u,i}^\dagger = I$ and the drift is
cancelled exactly---independently of the obstacle and inter-agent
velocity terms in $\boldsymbol{\psi}_i$;
(b)~the tangential term
$\mathbf{c}_i^\top J_{u,i} B_{u,i}\,\mathbf{u}_{\mathrm{ref},i} = 0$
since $B_{u,i}$ spans the null space of $J_{u,i}$;
(c)~the remaining contraction term gives
\begin{equation}
    \dot{V}_i = -\lambda\,\mathbf{c}_i^\top J_{u,i}J_{u,i}^\dagger\,\mathbf{c}_i
    \leq 0,
    \label{eq:app_vdot_multi}
\end{equation}
since $J_{u,i}J_{u,i}^\dagger$ is positive semidefinite and $\lambda > 0$.

\textbf{Step 3: Forward invariance.}
Each agent starts on its local constraint manifold, so
$\mathbf{c}_i(0) = \mathbf{0}$ and $V_i(0) = 0$.
Combined with $V_i \geq 0$ and $\dot{V}_i \leq 0$:
\begin{equation}
    0 \leq V_i(t)
    = V_i(0) + \int_0^t \dot{V}_i(\tau)\,d\tau
    = \int_0^t \underbrace{\dot{V}_i(\tau)}_{\leq\,0}\,d\tau
    \leq 0.
    \label{eq:app_Vi_zero}
\end{equation}
Hence $V_i(t) = 0$ for all $t \geq 0$, which implies
$\mathbf{c}_i(t) = \mathbf{0}$ and, by the projection property
(Remark~\ref{rmk:app_projection}),
$\mathbf{h}_i(t) = -\boldsymbol{\mu}_i(t) \leq \mathbf{0}$
for every agent~$i$.

\textbf{Step 4: Global safety via symmetry.}
For any agent pair $(i,j)$ with $j \in \mathcal{A}_i$, the collision
constraint $h_{ij}$ appears in agent~$i$'s local constraint
set $\mathbf{h}_i$, and $h_{ji} = h_{ij}$ appears in
agent~$j$'s set $\mathbf{h}_j$. Since every agent satisfies
its local constraints (Step~3), every pairwise collision
constraint is enforced, yielding
$\mathbf{S}(t) \in \bigcap_i \mathcal{C}_i
= \mathcal{C}_{\mathrm{global}}$ for all $t \geq 0$.
\end{proof}

\section{Analysis of Residual Safety Violations}
\label{app:violations}

Our compositional safety guarantee (Theorem~\ref{thm:compositional_safety}) 
is established in continuous time, under a fixed constraint set and on-manifold 
initialization. Several implementation factors cause mild deviations from these 
conditions, accounting for the small ($<2\%$) violations observed in practice.

\textbf{(1) Discrete-time integration.} The control law guarantees 
$\dot{V}\le 0$ in continuous time, but each discrete integration step induces 
a drift in the residual $c$ on the order of the step size, momentarily 
violating $h$; the magnitude shrinks as the step size decreases.

\textbf{(2) Neighbor truncation (top-$k$).} Each agent constrains only its 
$k$ nearest entities. Entities beyond $k$ are excluded from $h_i$ and 
receive no guarantee, and the top-$k$ set switches as agents move, so newly 
activated constraints need not satisfy $c=0$ at the switching instant.

\textbf{(3) Degeneracy of Assumption~\ref{asmp:app_rank}.} 
When the number of active constraints exceeds the control dimension $U$, $J_u$ 
loses full row rank and the projection yields a least-squares rather than an 
exact drift cancellation, leaving a small residual drift.

\textbf{(4) Partial observability.} Neighbor states are estimated from local 
observations, and the estimation error enters the drift $\psi_i$, making the 
tangent-space projection slightly inexact.

\section{Additional Experiments}

\subsection{Computational efficiency}
\label{app:Computational_efficiency}

We compare the per-iteration training cost of the QP-based CBF controller and the constraint manifold controller on a single NVIDIA H200 GPU, with all training hyperparameters held constant. As shown in Table~\ref{tab:efficiency}, the constraint manifold controller achieves approximately $14.4\times$ higher throughput than the CBF baseline. This speedup arises because the manifold controller computes safety corrections via a closed-form pseudoinverse projection, whereas the CBF controller requires solving a quadratic program (QP) at every timestep. Over the full 200k training iterations, this translates to an estimated training time of ${\sim}9.4$ hours for the manifold controller versus ${\sim}136.8$ hours for the CBF controller.
 
\begin{table}[h]
\centering
\caption{Per-iteration training speed comparison between the QP-based CBF controller and the constraint manifold controller on a single NVIDIA H200 GPU with identical training hyperparameters.}
\label{tab:efficiency}
\begin{tabular}{lcc}
\toprule
\textbf{Method} & \textbf{Speed} & \textbf{Est.\ Total (200k iters)} \\
\midrule
QP-based CBF         & $\sim$0.4 it/s & ${\sim}136.8$ hours \\
Constraint Manifold  & $\sim$5.9 it/s             & $\sim$9.4 hours \\
\midrule
\textbf{Speedup}     & \multicolumn{2}{c}{$\mathbf{14.4\times}$} \\
\bottomrule
\end{tabular}
\end{table}

\subsection{Additional generalization results}

We evaluate generalization across all environments. As shown in Figure~\ref{fig:generalization_appendix}, \textbf{HMM} achieves the best generalization in safety rate: it maintains near-100\% safety even with a large number of agents or obstacles, whereas other baselines degrade significantly. In terms of success rate, HMM also performs best in most environments.

\label{app:add_generalization_results}

\begin{figure}[t]
    \centering
    \begin{subfigure}[b]{0.24\textwidth}
        \includegraphics[width=\textwidth]{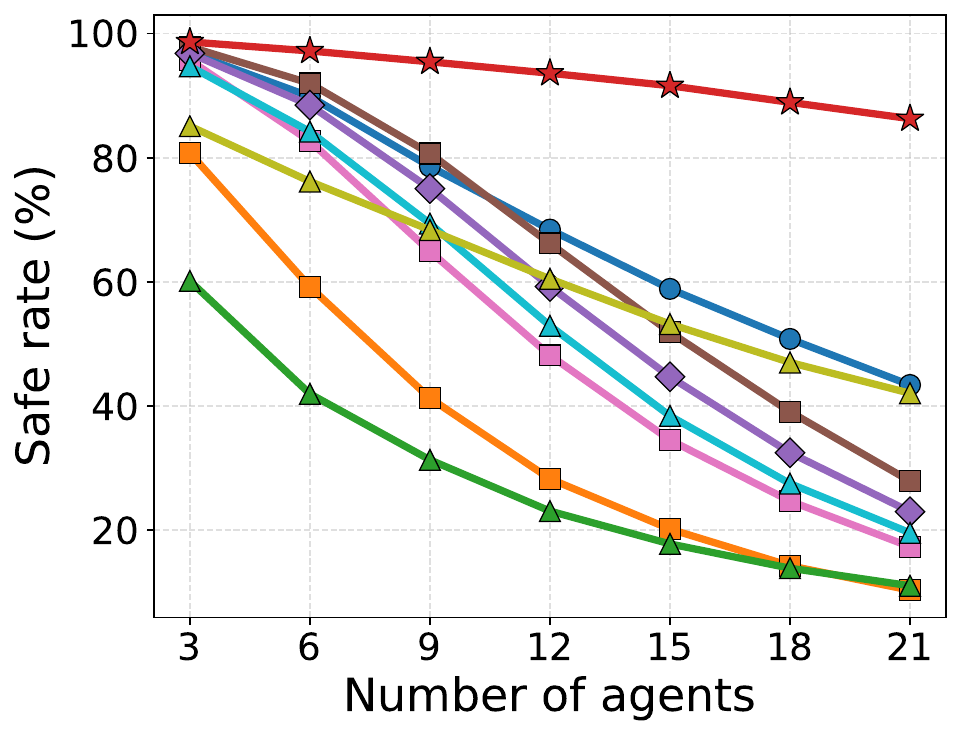}
        \caption{}
    \end{subfigure}
    \hfill
    \begin{subfigure}[b]{0.24\textwidth}
        \includegraphics[width=\textwidth]{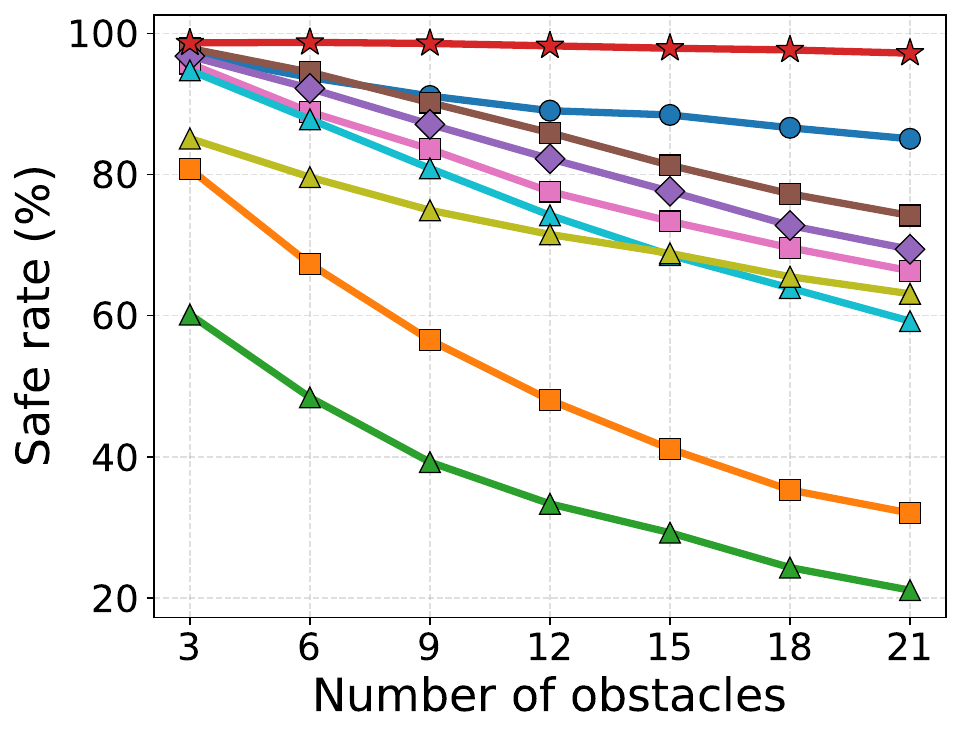}
        \caption{}
    \end{subfigure}
    \hfill
    \begin{subfigure}[b]{0.24\textwidth}
        \includegraphics[width=\textwidth]{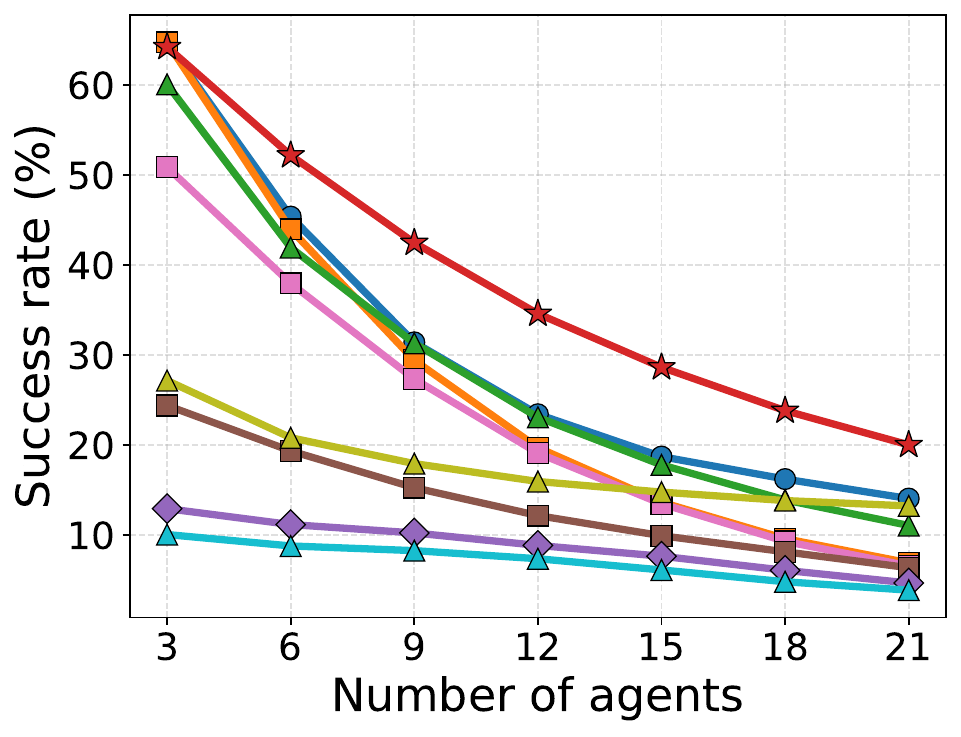}
        \caption{}
    \end{subfigure}
    \hfill
    \begin{subfigure}[b]{0.24\textwidth}
        \includegraphics[width=\textwidth]{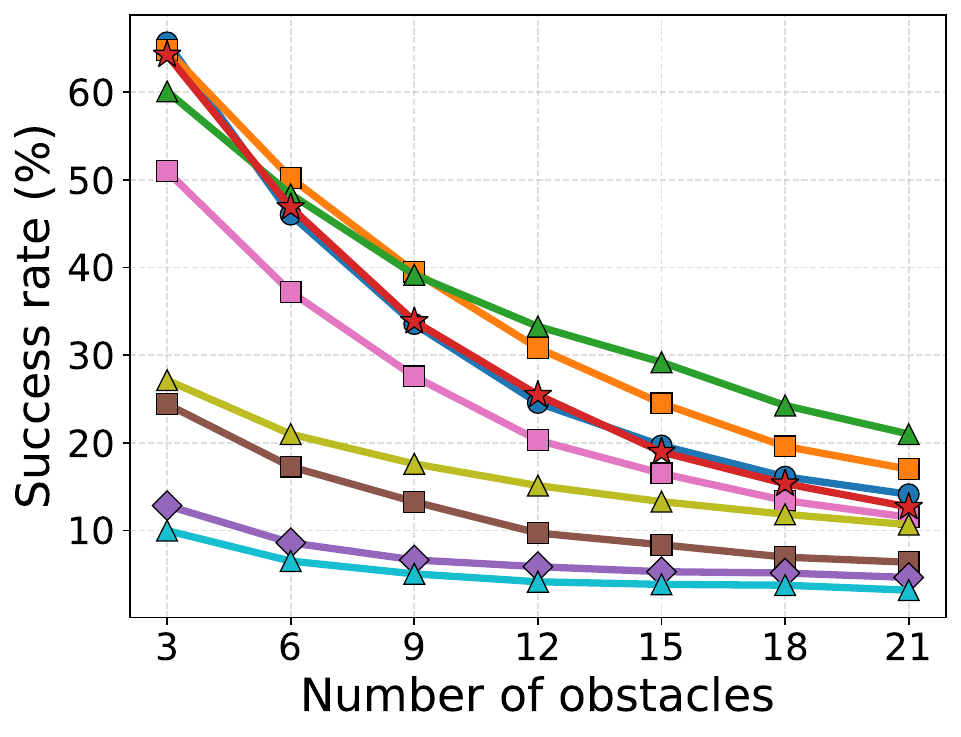}
        \caption{}
    \end{subfigure}

    \vspace{0.3em}

    \begin{subfigure}[b]{0.24\textwidth}
        \includegraphics[width=\textwidth]{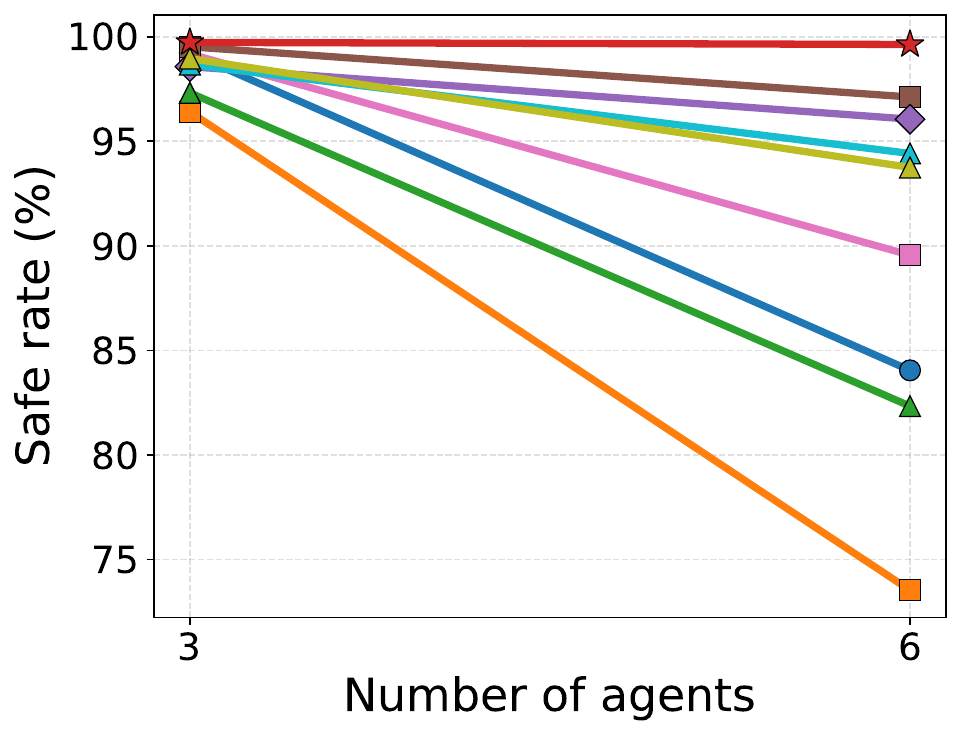}
        \caption{}
    \end{subfigure}
    \hfill
    \begin{subfigure}[b]{0.24\textwidth}
        \includegraphics[width=\textwidth]{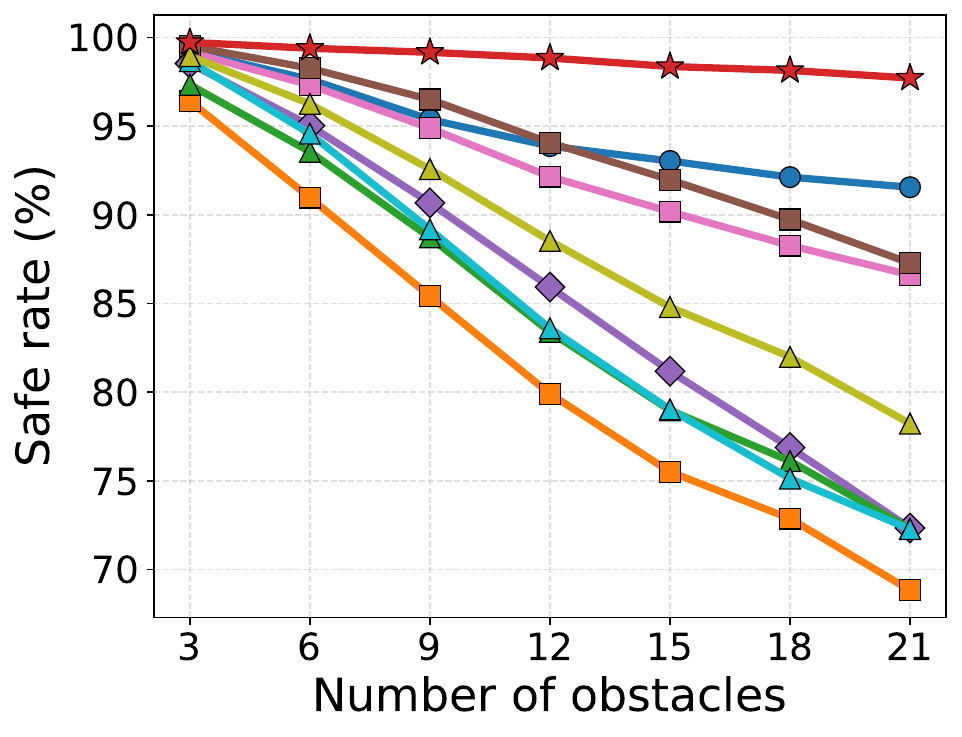}
        \caption{}
    \end{subfigure}
    \hfill
    \begin{subfigure}[b]{0.24\textwidth}
        \includegraphics[width=\textwidth]{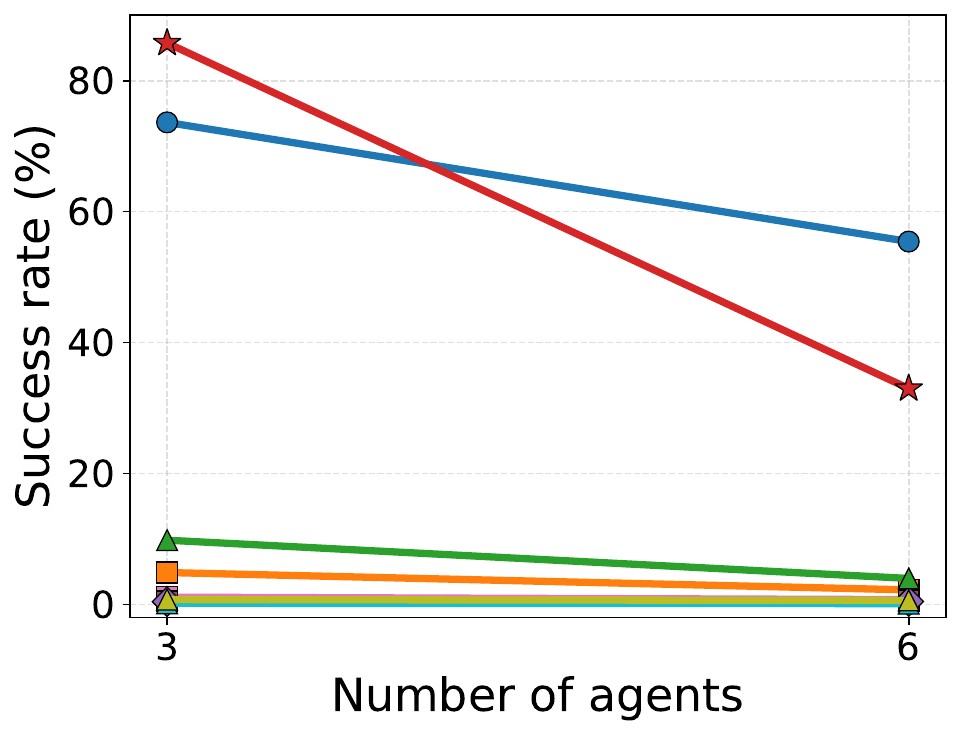}
        \caption{}
    \end{subfigure}
    \hfill
    \begin{subfigure}[b]{0.24\textwidth}
        \includegraphics[width=\textwidth]{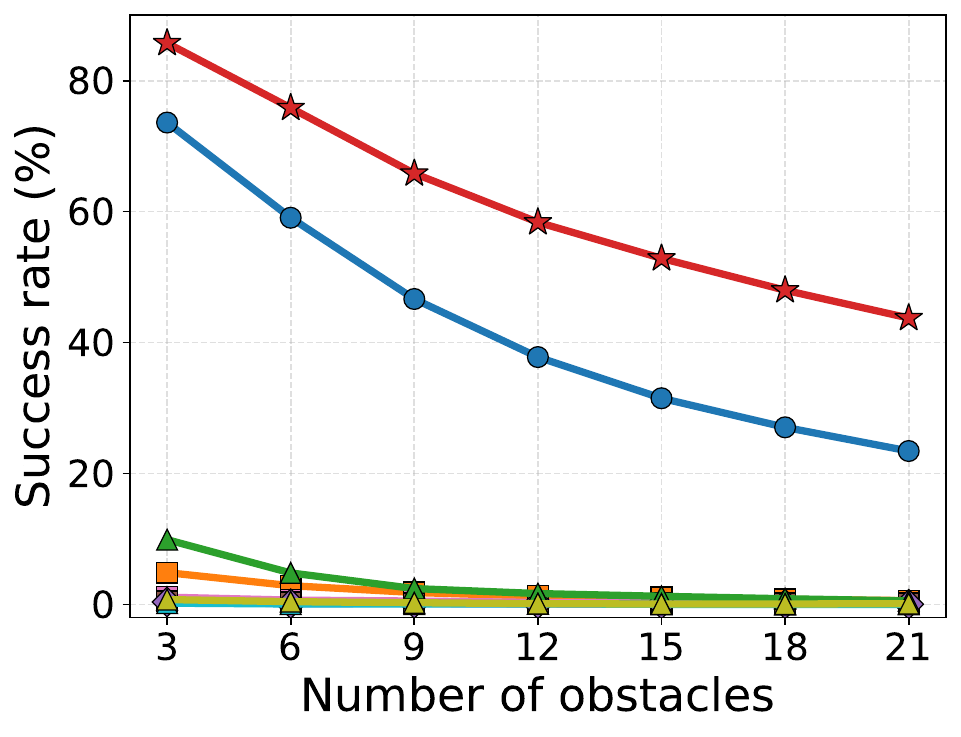}
        \caption{}
    \end{subfigure}

    \vspace{0.3em}

    \begin{subfigure}[b]{0.24\textwidth}
        \includegraphics[width=\textwidth]{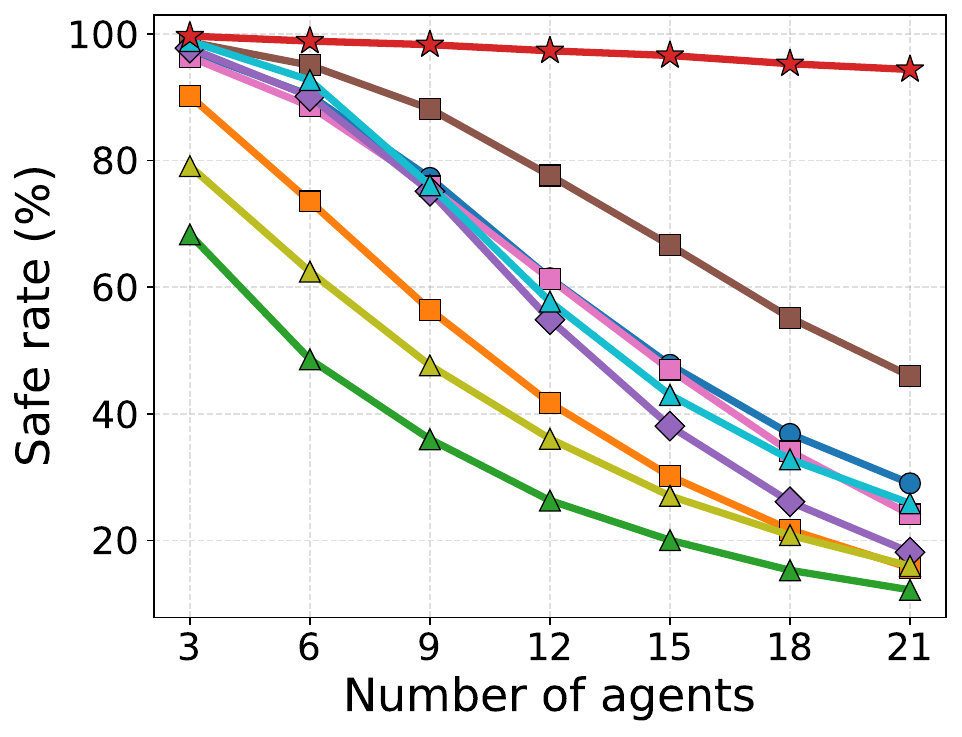}
        \caption{}
    \end{subfigure}
    \hfill
    \begin{subfigure}[b]{0.24\textwidth}
        \includegraphics[width=\textwidth]{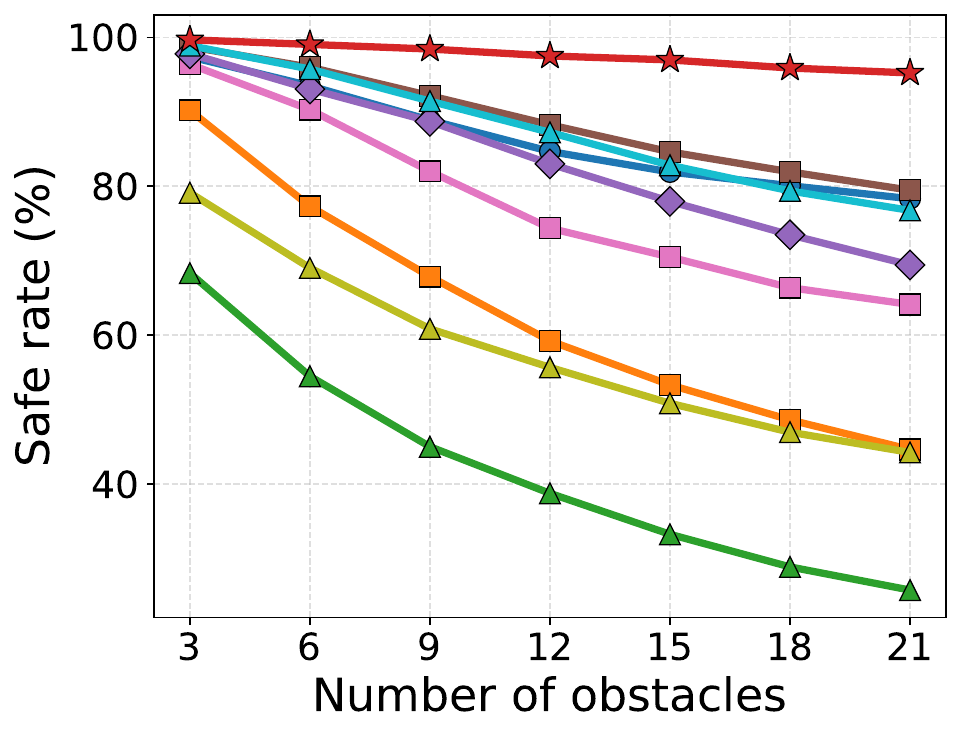}
        \caption{}
    \end{subfigure}
    \hfill
    \begin{subfigure}[b]{0.24\textwidth}
        \includegraphics[width=\textwidth]{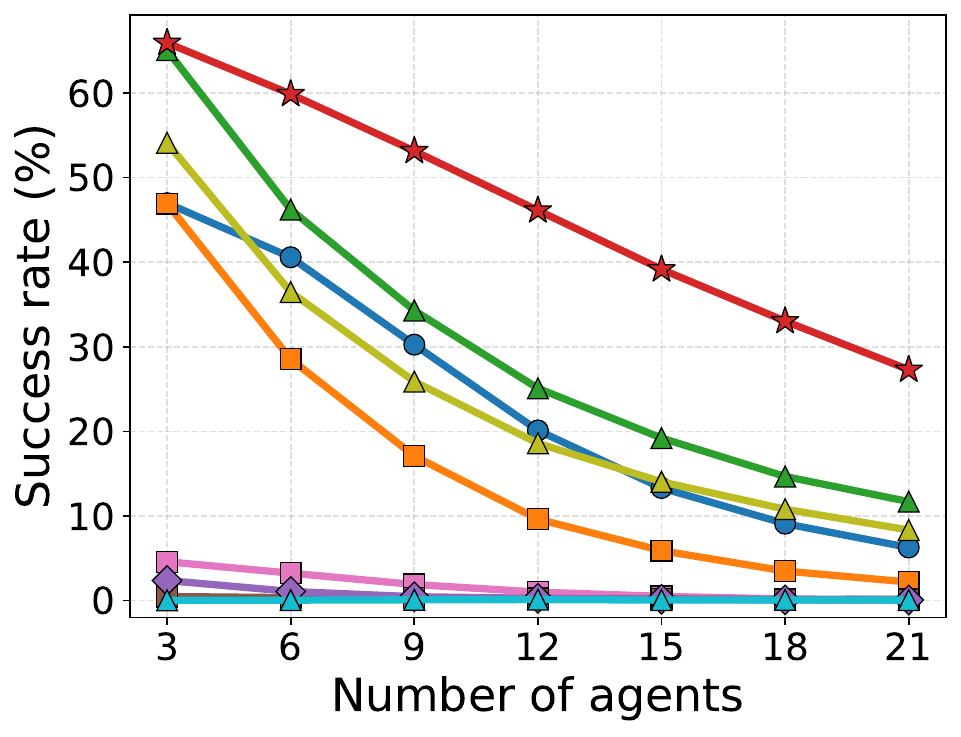}
        \caption{}
    \end{subfigure}
    \hfill
    \begin{subfigure}[b]{0.24\textwidth}
        \includegraphics[width=\textwidth]{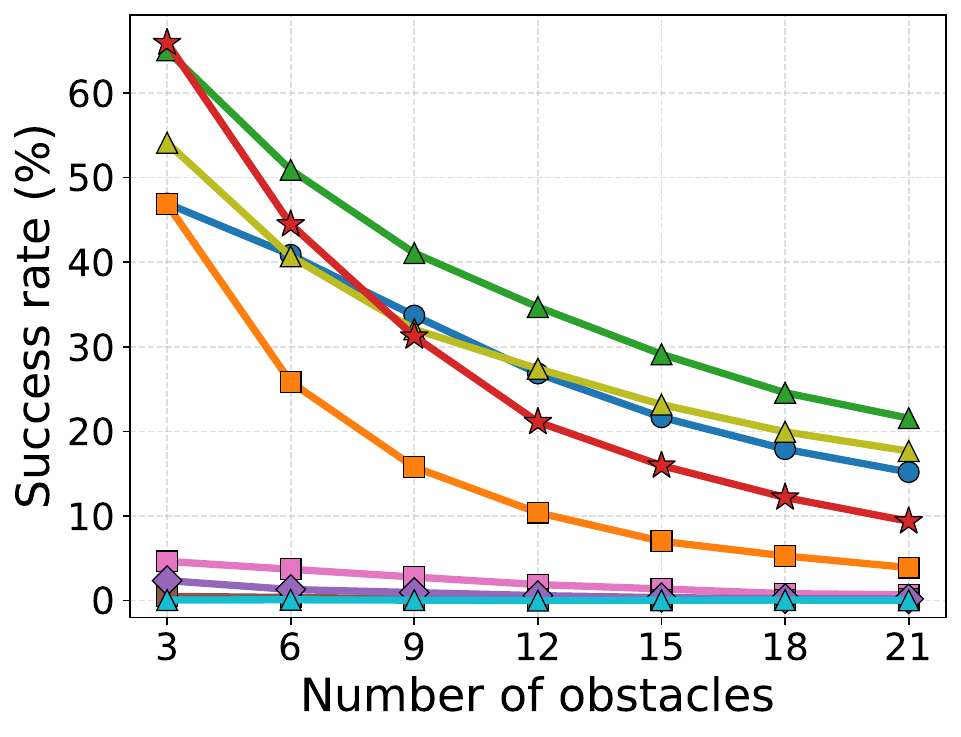}
        \caption{}
    \end{subfigure}

    \vspace{0.3em}

    \begin{subfigure}[b]{0.24\textwidth}
        \includegraphics[width=\textwidth]{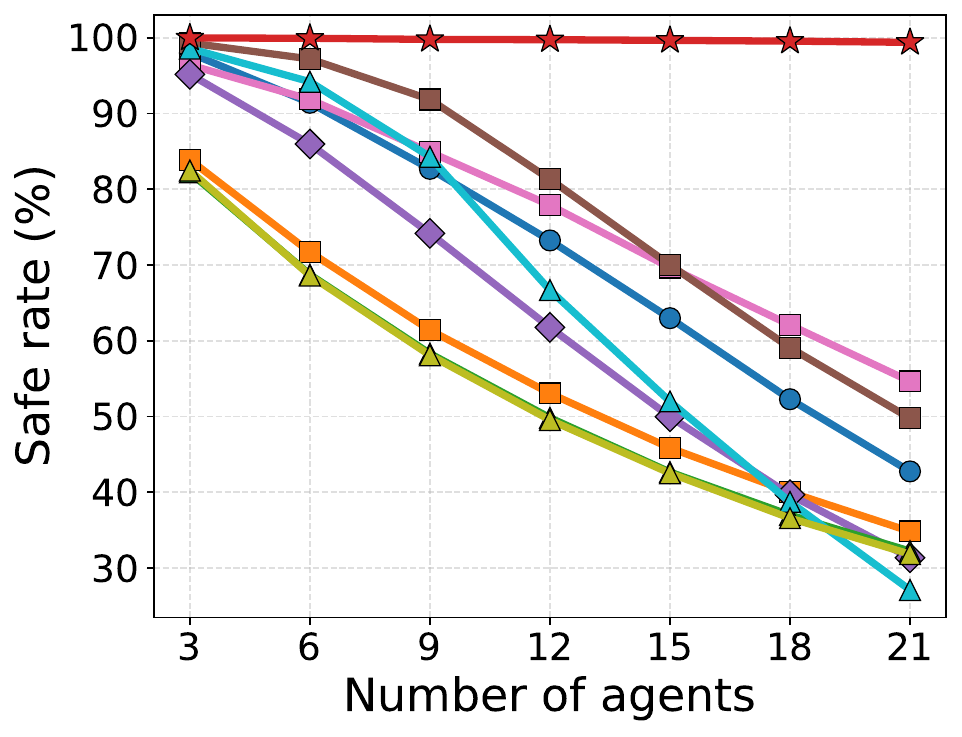}
        \caption{}
    \end{subfigure}
    \hfill
    \begin{subfigure}[b]{0.24\textwidth}
        \includegraphics[width=\textwidth]{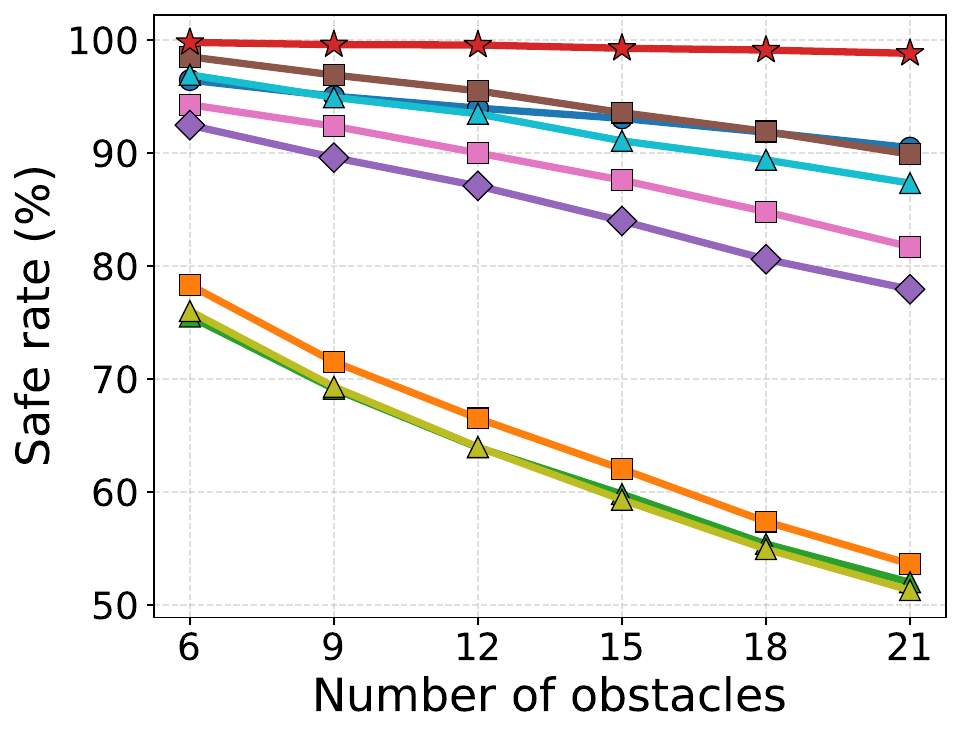}
        \caption{}
    \end{subfigure}
    \hfill
    \begin{subfigure}[b]{0.24\textwidth}
        \includegraphics[width=\textwidth]{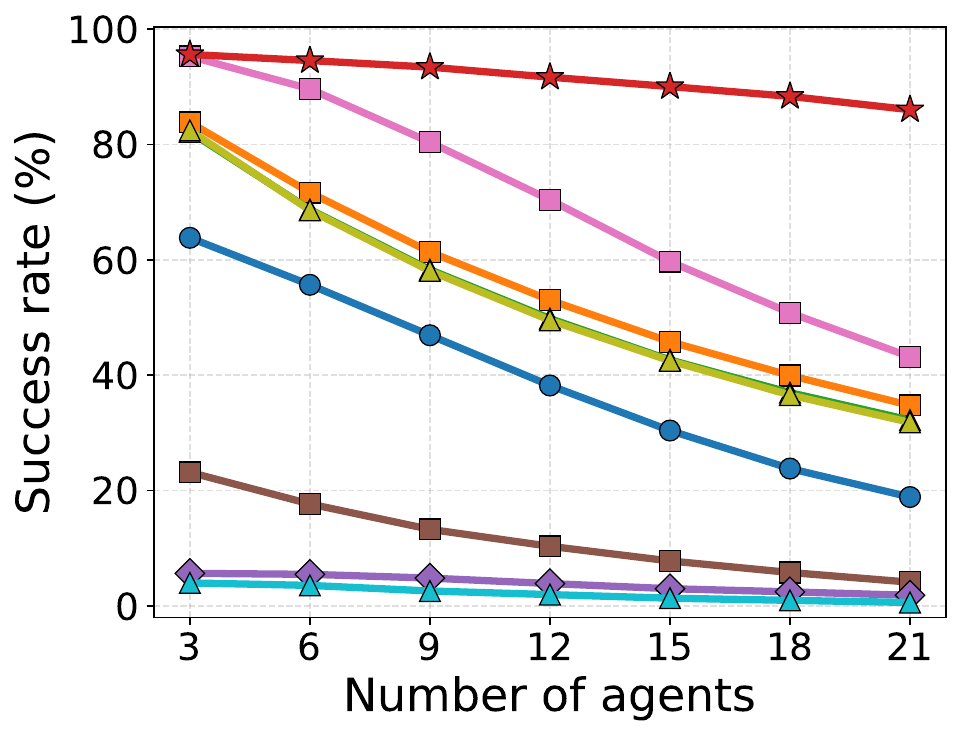}
        \caption{}
    \end{subfigure}
    \hfill
    \begin{subfigure}[b]{0.24\textwidth}
        \includegraphics[width=\textwidth]{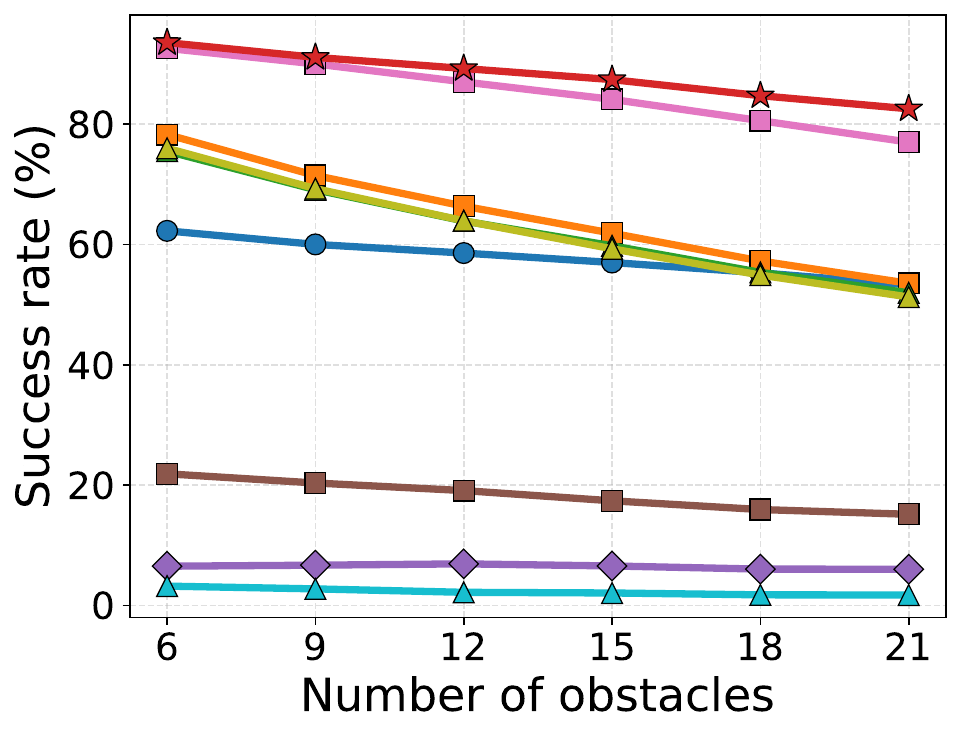}
        \caption{}
    \end{subfigure}

    \includegraphics[width=1\textwidth]{images/generalization/legend.pdf}

    \caption{Generalization results for remaining environments. LidarBicycleTarget (a--d), LidarLine (e--h), LidarTarget (i--l), and LinearDrone (m--p). Left two columns: safe rate; right two columns: success rate.}
    \label{fig:generalization_appendix}
\end{figure}

\section{Hyperparameters}

We summarize the key hyperparameters used in our experiments. Table~\ref{tab:training_hyper} lists the training hyperparameters for all environments, including PPO settings, subgoal configuration, and environment specifications. Table~\ref{tab:manifold_hyper} lists the constraint manifold controller hyperparameters. All values are fixed across environments unless otherwise noted.

\begin{table}[h]
\centering
\caption{Training hyperparameters.}
\label{tab:training_hyper}
\begin{tabular}{lc}
\toprule
\textbf{Parameter} & \textbf{Value} \\
\midrule
Discount $\gamma$ & 0.99 \\
GAE $\lambda$ & 0.95 \\
PPO clip $\epsilon$ & 0.25 \\
Entropy coefficient & $1 \times 10^{-2}$ \\
Actor learning rate & $3 \times 10^{-4}$ \\
Critic learning rate & $1 \times 10^{-3}$ \\
Total environment steps & $2 \times 10^{5}$ \\
Subgoal interval $\tau$ & 8 \\
Relative subgoal max $\Delta$ & 0.2 \\
Episode length $T$ & 128 (3D env: 256) \\
Number of agents & 3 \\
Number of obstacles & 3 (3D env: 6)\\
\bottomrule
\end{tabular}
\end{table}

\begin{table}[h]
\centering
\caption{Constraint manifold controller hyperparameters.}
\label{tab:manifold_hyper}
\begin{tabular}{lcc}
\toprule
\textbf{Parameter} & \textbf{Symbol} & \textbf{Value} \\
\midrule
Top-$k$ nearest neighbors & $k$ & 3 \\
Viability gain & $K$ & 0.5 \\
Error correction gain & $K_c$ & 30.0 \\
Null-space control bound & $\alpha_{\max}$ & 3.0 \\
Constraint activation threshold & $g_{\mathrm{act}}$ & 0.02 \\
Safety margin & -- & 0.02 \\
Slack weight & $w_s$ & 10.0 \\
Slack lower bound & $s_{\min}$ & 0.1 \\
Configuration-space dimension & $d_q$ & 2 (drones: 3) \\
\bottomrule
\end{tabular}
\end{table}



\end{document}